\documentclass[10pt]{article}

\usepackage{comment,url,algorithm,algpseudocode,graphicx,subcaption,relsize}
\usepackage{amssymb,amsfonts,amsmath,amsthm,amscd,dsfont,mathrsfs,mathtools,microtype,nicefrac,pifont}
\usepackage{float,psfrag,epsfig,color,url,hyperref}
\usepackage{upgreek}
\usepackage[dvipsnames]{xcolor}
\usepackage{epstopdf,bbm,mathtools}
\usepackage[toc,page]{appendix}
\usepackage[mathscr]{euscript}
\usepackage[shortlabels]{enumitem}

\def\balign#1\ealign{\begin{align}#1\end{align}}
\def\baligns#1\ealigns{\begin{align*}#1\end{align*}}
\def\balignat#1\ealign{\begin{alignat}#1\end{alignat}}
\def\balignats#1\ealigns{\begin{alignat*}#1\end{alignat*}}
\def\bitemize#1\eitemize{\begin{itemize}#1\end{itemize}}
\def\benumerate#1\eenumerate{\begin{enumerate}#1\end{enumerate}}

\newenvironment{talign*}
 {\csname align*\endcsname}
 {\endalign}
\newenvironment{talign}
 {\csname align\endcsname}
 {\endalign}

\def\balignst#1\ealignst{\begin{talign*}#1\end{talign*}}
\def\balignt#1\ealignt{\begin{talign}#1\end{talign}}

\let\originalleft\left
\let\originalright\right
\renewcommand{\left}{\mathopen{}\mathclose\bgroup\originalleft}
\renewcommand{\right}{\aftergroup\egroup\originalright}

\def\tinycitep*#1{{\tiny\citep*{#1}}}
\def\tinycitealt*#1{{\tiny\citealt*{#1}}}
\def\tinycite*#1{{\tiny\cite*{#1}}}
\def\smallcitep*#1{{\scriptsize\citep*{#1}}}
\def\smallcitealt*#1{{\scriptsize\citealt*{#1}}}
\def\smallcite*#1{{\scriptsize\cite*{#1}}}

\def\mbf#1{\mathbf{#1}}
\def\mrm#1{\mathrm{#1}}

\def\reals{\mathbb{R}} %

\def\<{\left\langle} %
\def\>{\right\rangle}

\newcommand{\boldone}{\mbf{1}} %
\def\norm#1{\left\|{#1}\right\|} %
\newcommand{\twonorm}[1]{\norm{#1}_2} %
\newcommand{\binner}[2]{\left\langle{#1},{#2}\right\rangle} %

\def\indic#1{\mbb{I}\left[{#1}\right]} %
\def\Earg#1{\E\left[{#1}\right]}

\def\P{\mbb{P}} %
\def\Parg#1{\P\left({#1}\right)}

\DeclareMathOperator{\Tr}{Tr} %

\def\Cov{\mrm{Cov}} %

\DeclareSymbolFont{rsfs}{U}{rsfs}{m}{n}
\DeclareSymbolFontAlphabet{\mathscrsfs}{rsfs}

\newcommand{\Unif}{\textnormal{Unif}}

\ifdefined\nonewproofenvironments\else
\ifdefined\ispres\else
\newtheorem{theorem}{Theorem}
\newtheorem{lemma}[theorem]{Lemma}
\newtheorem{corollary}[theorem]{Corollary}
\newtheorem{definition}[theorem]{Definition}

\renewenvironment{proof}{\noindent\textbf{Proof.}\hspace*{.3em}}{\qed \vspace{.1in}}
\newenvironment{proof-sketch}{\noindent\textbf{Proof Sketch}
  \hspace*{1em}}{\qed\bigskip\\}
\newenvironment{proof-idea}{\noindent\textbf{Proof Idea}
  \hspace*{1em}}{\qed\bigskip\\}
\newenvironment{proof-of-lemma}[1][{}]{\noindent\textbf{Proof of Lemma {#1}}
  \hspace*{1em}}{\qed\\}
\newenvironment{proof-of-theorem}[1][{}]{\noindent\textbf{Proof of Theorem {#1}}
  \hspace*{1em}}{\qed\\}
\newenvironment{proof-attempt}{\noindent\textbf{Proof Attempt}
  \hspace*{1em}}{\qed\bigskip\\}

\newenvironment{remark}{\noindent\textbf{Remark.}
  \hspace*{0em}}{\smallskip}%

\fi

\newtheorem{proposition}[theorem]{Proposition}

\newtheorem{assumption}{Assumption}

\newtheorem*{assumption*}{Assumptions}

\theoremstyle{definition}

\fi
\makeatletter
\@addtoreset{equation}{section}
\makeatother

\hypersetup{
  colorlinks,
  linkcolor={red!50!black},
  citecolor={blue!50!black},
  urlcolor={blue!80!black}
}

\def\bw{\boldsymbol{w}}
\def\bW{\boldsymbol{W}}

\def\bu{\boldsymbol{u}}

\def\bx{\boldsymbol{x}}
\def\by{\boldsymbol{y}}

\def\bv{\boldsymbol{v}}

\def\Eargs#1#2{\E_{#1}\left[#2\right]}

\def\indic{\mathbbm{1}}

\def\boldzero{\boldsymbol{0}}

\def\be{\boldsymbol{e}}

\def\abs#1{\left\vert #1 \right\vert}

\usepackage{custom}
\usepackage[top=1in, bottom=1in, left=1in, right=1in]{geometry}

\usepackage{eqparbox}
\usepackage[numbers]{natbib}
\usepackage{wrapfig}
\usepackage{tikz}
\usetikzlibrary{matrix, positioning}
\usepackage{cleveref}
\usepackage{booktabs}

\usepackage{threeparttable}
\usepackage{multirow}
\usepackage{makecell}

\crefname{assumption}{Assumption}{Assumptions}

\renewcommand{\citet}{\cite}
\renewcommand{\citep}{\cite}

\def\SGD{\mathrm{SGD}}
\def\PG{\mathrm{PG}}
\def\vect{\operatorname{Vec}}
\def\lq{\mathcal{Q}}
\def\tl{\mathrm{TL}}

\begin{document}

\title{Post-Training with Policy Gradients:\\
Optimality and the Base Model Barrier}

\author{
Alireza Mousavi-Hosseini\textsuperscript{1}
\ \ \ \ \ \
Murat A.~Erdogdu\textsuperscript{2}
}

\maketitle

\footnotetext[1]{University of Toronto, Vector Institute. \texttt{mousavi@cs.toronto.edu}.}
\footnotetext[2]{University of Toronto, Vector Institute, and Numurho. \texttt{erdogdu@cs.toronto.edu}}
\allowdisplaybreaks

\begin{abstract}
We study post-training linear autoregressive models with outcome and process rewards. Given a context $\bx$, the model must predict the response $\by \in \mathcal{Y}^N$, a sequence of length $N$ that satisfies a $\gamma$ margin condition, an extension of the standard separability to sequences.
We prove that on test samples where the base model achieves a non-trivial likelihood $\alpha$, a variant of policy gradient (PG) can achieve likelihood $1 - \varepsilon$ with an essentially minimax optimal number of reward queries $\tilde{\mathcal{O}}((\alpha^{-1} + \varepsilon^{-1})/\gamma^2)$.
However, a barrier arises for going beyond the support of the base model.
We prove that the overall expected error after post-training with outcome rewards is governed by a property of the base model called the \emph{Likelihood Quantile} (LQ), and that variants of PG, while minimax optimal, may require a number of reward queries exponential in $N$ to go beyond this support, regardless of the pre-training algorithm.
To overcome this barrier, we study post-training with a process reward model, and demonstrate how PG variants in this setting avoid the curse of dimensionality in $N$ via dependence on a token-level LQ.
Along the way, we prove that under the margin condition, SGD with adaptive learning rate (LR) achieves a near optimal test error for statistical learning, and PG with adaptive LR achieves a near optimal number of mistakes for online learning while being computationally efficient whenever possible, both of which may be of independent interest.
\end{abstract}

\section{Introduction}
\begin{figure}[t]
    \centering
    \begin{subfigure}[b]{0.3\textwidth}
    \includegraphics[width=\linewidth]{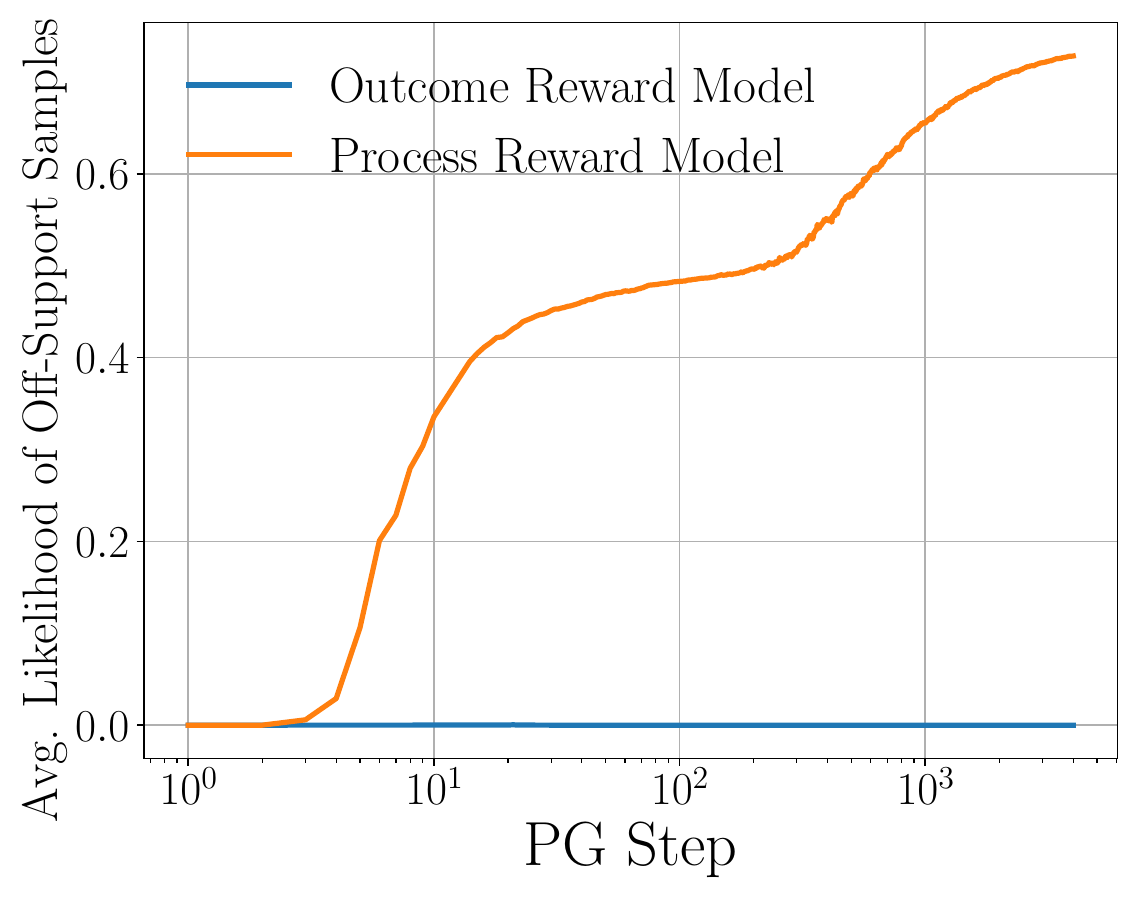}
    \end{subfigure}
    \hfill
    \begin{subfigure}[b]{0.3\textwidth}
        \includegraphics[width=\linewidth]{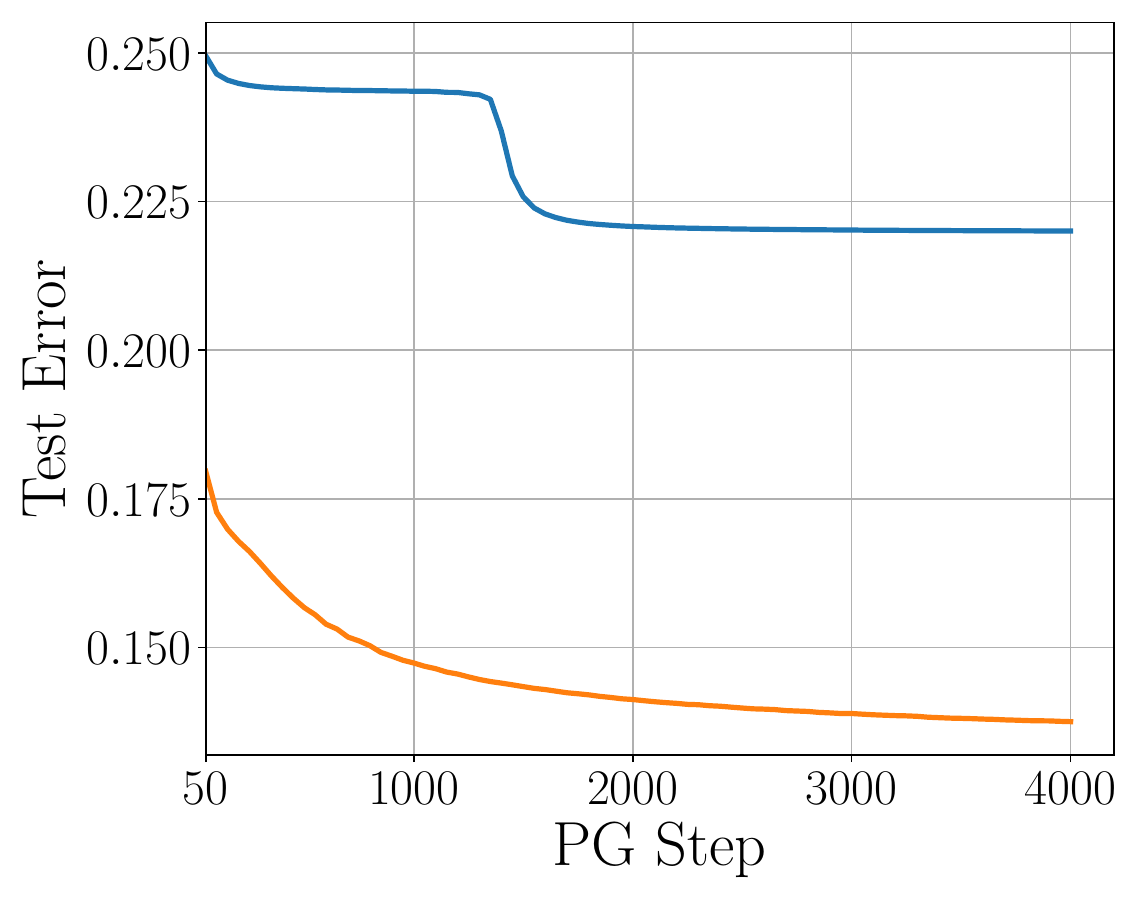}
    \end{subfigure}
    \hfill
    \begin{subfigure}[b]{0.33\textwidth}
    \includegraphics[width=\linewidth]{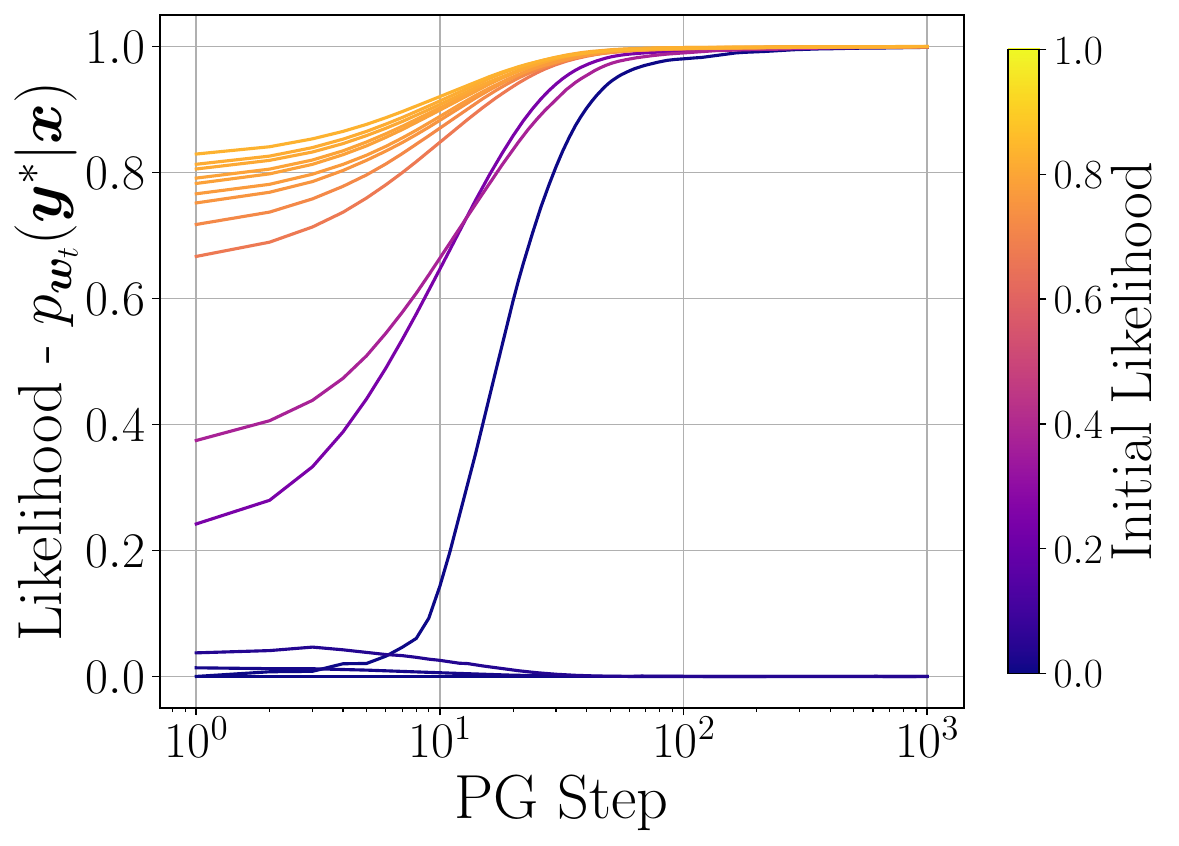}
    \end{subfigure}
    \caption{The evolution of the model's likelihood to generate the correct response over different contexts throughout PG. 
    \textbf{(Left}) The average likelihood over samples with initial likelihood $\approx 0$ under the base model. On-policy PG with PRM is able to improve this average while with ORM stays at $0$. \textbf{(Center)} The comparison between the expected test error with ORM where the error plateaus at a threshold, and with PRM where the error continues to decrease. \textbf{(Right)} The likelihood of individual samples throughout PG with ORM, where the color denotes initial likelihood. 
    Experiment details are presented in \Cref{sec:experiments}.}
    \label{fig:likelihood_vs_iters}
    \vspace{-2mm}
\end{figure}
Outcome-based reinforcement learning (RL) has been a promising approach to enable large language models (LLMs) to go beyond their pre-training data. RL has been particularly successful in domains such as math and coding where one is able to \emph{verify} the correctness of the model's response, without requiring a complete ground-truth solution~\cite{guo2025deepseek,team2025kimi}.
However, the extent to which RL can create new knowledge that is missing from the base model is unclear.
While some works argue that RL with outcome rewards can genuinely enhance the base model \cite{wen2026reinforcement}, others have observed that the effect of RL post-training has mostly been to sharpen the base distribution, and the fine-tuned model is not able to generate responses that are outside the support of the base model
\cite{yue2025does,wu2025invisible,shao2025spurious}.
Indeed, explicitly sampling from the sharpened base distribution can match or outperform RL~\cite{karan2026reasoning}, which suggests that RL cannot successfully improve upon the base model unless it is already highly capable.

On the theoretical side, prior works have attempted to characterize the benefits of RL post-training compared to supervised fine-tuning (SFT)~\cite{setlur2025scaling} or on top of pre-training~\cite{tsilivis2026reinforcement,kim2025metastable}.
While providing positive results for RL, these works leave out the role of the base model.
\citet{foster2025good} demonstrates that a notion of \emph{coverage} is needed for computationally efficient exploration with RL. However, their definition does not depend on the final desired likelihood, the probability that the model generates a correct response to a context.
\citet{chen2025coverage} studies coverage profile which characterizes the probability of success with best-of-N sampling, and consider how log-likelihood maximization improves this coverage.
However, the consequences of this definition for RL post-training, which is used to achieve the best-of-N performance using a single generation, is less clear.
\citet{chen2025outcomebased} studies outcome-based online RL, with a focus on sample complexity, while the computational efficiency of their algorithm is unclear.
Importantly, the works above leave open the study of practical algorithms for post-training, specifically PG variants such as proximal policy optimization (PPO)~\cite{schulman2017proximal} and 
group relative policy optimization (GRPO)~\cite{shao2024deepseekmath}.
In this work, we aim to answer the following question in a theoretically tractable setting:
\begin{center}
    \vspace{-2mm}
    \textbf{Q1.} \emph{How do the number of reward queries and policy gradient steps in post-training depend on the base model for on- and off-support samples?}
    \vspace{-2mm}
\end{center}
We answer the above question by contrasting the on- and off-support cases. For sequences of length $N$, the probability that the uniform policy generates a correct response is exponentially small in $N$.
If the base policy from pre-training achieves a probability of correct response (likelihood) that is non-trivial, i.e.\ at most polynomially small in $N$, we consider this sample to be on the base model support, and the complexities above remain polynomial. However, if the base policy is not significantly better than a uniform distribution on a sample, then the sample is off-support and improving its likelihood may require exponentially many iterations.
This phenomenon is demonstrated in \Cref{fig:likelihood_vs_iters}.
While this gives us a per-sample answer to the above question, a natural next step is to investigate the expected error over the entire distribution:
\begin{center}
    \textbf{Q2.} \emph{Can RL post-training achieve a significantly smaller expected test error compared to the base model while being computationally efficient?}
\end{center}
We answer the above question for outcome and process rewards.
For outcome reward models, we show that in the worst-case, any post-training algorithm may require exponentially many reward queries to significantly improve upon the base model obtained by SGD.
However, this exponential dependence on the sequence length can be alleviated by using a process reward model, where after generating each token, the learner can query the reward to verify the correctness of the response so far.
\subsection{Our Contributions}
Our main contributions are as follows:
\begin{itemize}[leftmargin=*,topsep=1pt,itemsep=1pt]
    \item In \Cref{sec:cond_conv}, we study post-training with outcome rewards and access to a base model $q$, with the goal of predicting a response $\by^*(\bx) = (y^*_1(\bx), \hdots, y^*_N(\bx))$, where $y_i \in \{1,\hdots,k\}$, to an input context $\bx$, which satisfies a $\gamma$ margin assumption. We show that conditioned on the base model achieving a non-trivial likelihood $\alpha$ on a sample $(\bx,\by^*(\bx))$, a variant of policy-gradient can boost this likelihood and get expected error $\varepsilon$ with $\tilde{\mathcal{O}}(1/(\alpha \gamma^2 \varepsilon))$ iterations. As a consequence, we show that PG with a uniform behavior policy can achieve the minimax optimal mistake bound $\tilde{\mathcal{O}}(k^N/\gamma^2)$ in online learning.
    
    \item In \Cref{sec:uncond_conv}, we show that for the overall expected error to go below $\varepsilon$, the number of reward queries depends on a property of the base model, which we refer to as the \emph{Likelihood Quantile} (LQ). 
    This creates a fundamental \emph{barrier of the base model}; if the base model is pre-trained with SGD, PG requires exponentially many reward queries to go below the SGD error rate.

    \item In \Cref{sec:process_rewards}, we demonstrate how using process rewards instead of outcome rewards can alleviate the above problem, with the number of reward queries instead depending linearly on $N$ and another property of the base model, called the \emph{Token-Level Likelihood Quantile}, which is linear in $k$ in the worst-case.

    \item In \Cref{sec:lower_bounds}, we establish the minimax optimality of the policy gradient variants explored in \Cref{sec:outcome_reward}. We also rule out the possibility of significant algorithmic improvements beyond SGD in pre-training for better LQ; the minimax optimal LQ is of order $k^{-N}$ unless enough labeled samples are given for an SGD pre-trained model to have very low test error of order $k^{-N}$.
\end{itemize}

\subsection{Additional Related Works}

\paragraph{Online Linear Classification with Bandit Feedback.}
The problem of post-training a policy with outcome rewards can be cast as an instance of a contextual bandit problem.
For the case of $N=1$, there has been a rich literature on learning a linear multiclass classifier with bandit feedback. The seminal work of \citet{kakade2008efficient} introduced the Banditron algorithm, a modification of the classical Perceptron algorithm, that only achieves a mistake bound $\tilde{\mathcal{O}}(k^2/\gamma^2)$. \citet{daniely2013price} shows that a mistake bound of $\tilde{\mathcal{O}}(k/\gamma^2)$ is information-theoretically optimal, and presents an inefficient algorithm to achieve it.
\citet{beygelzimer2019bandit} provides a computationally efficient algorithm that only achieves the $\tilde{\mathcal{O}}(k/\gamma^2)$ rate under a stronger notion of margin.
Others have considered more general data distributions and achieved a relative mistake bound of order $\sqrt{T}$, see e.g.\ \citet{van2021beyond} and references therein.

In contrast, we introduce a simple online algorithm as a variant of PG with a uniform behavior policy that achieves the minimax optimal mistake bound $\tilde{\mathcal{O}}(k^N/\gamma^2)$, thus achieving the desirable $\tilde{\mathcal{O}}(k/\gamma^2)$ when $N=1$ while also being computationally efficient per iteration. Thus, we answer the open question of achieving such an efficient online learning algorithm raised in \citep{beygelzimer2019bandit}.

\paragraph{SGD for Separable Data.}
The study of SGD on logistic loss for linearly separable data first gained attention through its connection to the max-margin SVM and its implicit bias \citet{soudry2018implicit,ji2019implicit}.
Extensions to multiclass settings have been considered in \citep{ravi2024implicit,schliserman2025multiclass}.
Our analysis for both SGD and PG is more similar to \citet{shamir2021gradient}. In contrast, we significantly go beyond the binary setting by considering autoregressive models and adapting the margin assumption to sequences of labels, where each label has $k \geq 2$ choices.

\paragraph{Policy Gradient Analysis.}
A comprehensive study of theoretical properties of policy gradient methods in tabular settings was carried out by \citet{agarwal2021theory}. However, their guarantees for PG with softmax parameterization are only asymptotic. Further works have made this guarantee non-asymptotic and extended beyond the tabular setting \citep{zhang2020global,mei2020global,zhang2021convergence,liu2020improved}. We remark however that none of these results are directly applicable in our setting, since their generality prevents establishing tight bounds. We instead carry out an analysis tailored to the 0-1 reward structure for post-training an autoregressive model.

\paragraph{Notation.} For quantities $a$ and $b$, we use $a \lesssim b$ or $a = \mathcal{O}(b)$ to denote $a \leq Cb$ for some absolute constant $C > 0$, and define $a \gtrsim b$ and $a = \Omega(b)$ similarly. We use $\tilde{\mathcal{O}}$ and $\tilde{\Omega}$ to hide polylogarithmic factors.
We use $f(\varepsilon) = o(1)$ to denote $f(\varepsilon) = \mathcal{O}(1/\log(1/\varepsilon))$, a stricter version of the usual definition $\lim_{\varepsilon\downarrow 0}f(\varepsilon) = 0$.
We use the notation $\by_{1:i} = (y_j)_{j=1}^i$, and $\by_{1:0}$ is an empty sequence. We use $\operatorname{Clip}(x,a,b) \coloneqq ((x\vee a)\wedge b)$ for $x,a,b \in \reals$. For a set $\mathcal{Y}$, we define $\mathcal{Y}^+ \coloneqq \bigcup_{i=1}^N \mathcal{Y}^i$. We use $\boldsymbol{e}_y$ to denote the $y^{\text{th}}$ standard basis of $\reals^k$.

\section{Autoregressive Linear Models and Pre-Training}
Given access to a verifier, post-training can be performed using outcome rewards. This approach yields a contextual bandit formulation: given a prompt $\bx \in \mathcal{X}$, the model $p_{\bW}(\cdot \,|\, \bx)$ generates a response $\by = (y_1,\hdots,y_N) \in \mathcal{Y}^N$ of length $N$, and receives a reward $r(\bx,\by)$ from an outcome reward model (ORM). For simplicity, we fix the length of the response, while our arguments can easily be adapted to dynamic lengths. We assume $\abs{\mathcal{Y}} = k$. For many verifiers this reward is binary, i.e. $r(\bx,\by) = 1$ if $\by$ is a correct answer to $\bx$, and $r(\bx,\by) = 0$ otherwise.
There are also settings where a process reward model (PRM) can provide intermediate signal to the learner. We study this formulation in the subsequent section.

We specify the generative model $p_{\bw}(\cdot \,|\, \bx)$ using the decomposition $p_{\bw}(\by \,|\, \bx) = \prod_{i=1}^N p_{\bw}(y_i \,|\, \bx, \by_{1:i-1})$, where
\[
p_{\bw}(y_i \,|\, \bx, \by_{1:i-1}) = \frac{\exp\big(\binner{\bw}{\phi(\bx,\by_{1:i-1},y_i)}\big)}{\sum_{y' \in \mathcal{Y}}\exp\big(\binner{\bw}{\phi(\bx,\by_{1:i-1},y'})\big)}.
\]
Here, $\phi : \mathcal{X} \times \mathcal{Y}^+ \to \reals^D$ is the feature map. Current models typically implement $\phi$ using a deep Transformer \cite{vaswani2017attention}. The above formulation simplifies the analysis by only focusing on the final linear layer while freezing the feature map $\phi$.

We assume that each prompt $\bx$ has a unique correct response $\by^*(\bx)$ that satisfies the following assumption.
\begin{assumption}\label{assump:seq_margin}
    Suppose there exists $\bw^* \in \reals^D$ such that $\twonorm{\bw^*} = 1$ and for all $i \in [N]$, $y \neq y^*_i(\bx)$,
    \[
    \binner{\bw^*}{\phi(\bx,\by^*_{1:i}(\bx))} \geq \binner{\bw^*}{\phi(\bx,\by^*_{1:i-1}(\bx),y)} + \gamma,
    \]
    a.s.\ over $\bx$ for some $\gamma > 0$. We further assume $\twonorm{\phi(\bx,\by_{1:i}(\bx))} \leq 1$ for all $i \in [N]$, $\bx$, and $\by$.
\end{assumption}
When $N=1$, letting $\phi(\bx,y) = \vect(\be_{y}\bx^\top)$ and $\bw = \vect(\bW)$ with $\bW \in \reals^{k \times d}$, the above reduces to a standard $\gamma$-margin assumption for $k$-class linear classification, and the autoregressive model reduces to the usual logistic model.

Notice that the above only requires separability at the token level while assuming all previous tokens are correct, as opposed to sequence level separability which would be harder to satisfy. With \Cref{assump:seq_margin}, it might be tempting to only train on a single token to recover $\bw^*$, thus remove dependence on $N$ from the complexity of supervised training. However, such a direction may not be unique, and a direction that separates some tokens might not be useful for the rest.

Before moving to the analysis of post-training, we first recall results about pre-training. During pre-training, we have access to i.i.d.\ labeled examples $(\bx^{(t)},\by^*(\bx^{(t)}))_{t \geq 0}$, and we can run SGD iterates as
\begin{equation}\label{eq:sgd}\tag{SGD}
    \bw_{t+1} = \bw_t + \eta_t \nabla \log p_{\bw_t}(\by^*(\bx^{(t)}) \,|\, \bx^{(t)}),
\end{equation}
where $\eta_t$ is a potentially adaptive learning rate. \cite{chen2025coverage} demonstrated that with a constant LR, SGD suffers from a convergence rate that depends linearly on the sequence length $N$.
They mitigate this issue by analyzing an Adagrad-type \citep{duchi2011adaptive} adaptive LR on SGD with a sufficiently large mini-batch size.
Under Assumption \ref{assump:seq_margin}, we can study the simpler variant above that operates in a one-pass mode over the pre-training data and does not require batching, hence can also be applied in online settings.

\begin{proposition}\label{prop:sgd}
    Suppose \Cref{assump:seq_margin} holds, and let $(\bw_t)_{t=0}^{T-1}$ denote the iterates of SGD with learning rate schedule $(\eta_t)$. Then,
    \begin{enumerate}
        \item \emph{(Constant LR)} If $\eta_t = 1/(2N)$, then
        \[
        1 - \Earg{p_{\bar{\bw}^\SGD_T}(\by^*(\bx) \,|\, \bx)} \lesssim \frac{N\log(Tk\gamma^2)^2}{\gamma^2 T}
        \]
        where $\bar{\bw}^{\SGD}_T \coloneqq \frac{1}{T}\sum_{t=0}^{T-1}\bw_t$ is the averaged iterate.
        \item \emph{(Adaptive LR)} If 
        \[\eta_t = \big(2 + 4\twonorm{\nabla \log p_{\bw_t}(\by^*(\bx_t) \,|\, \bx_t)}\big)^{-1},\]
        then
        \[\quad 1 - \frac{1}{T}\sum_{t=0}^{T-1}\Earg{p_{\bw_t}(\by^*(\bx)\,|\,\bx)} \lesssim \frac{\log(TNk\gamma^2)^2}{\gamma^2 T}.
        \]
        In other words, if we draw $\tau \sim \Unif(\{0,\hdots,T-1\})$, then
        \[
        1 - \Earg{p_{\bw_\tau}(\by^*(\bx) \,|\, \bx)} \lesssim \frac{\log(TNk\gamma^2)^2}{\gamma^2 T}.
        \]
    \end{enumerate}
\end{proposition}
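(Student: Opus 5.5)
We follow the standard online-to-batch argument for SGD on separable data, in the style of \citet{shamir2021gradient}, applied to the sequence loss
\[
\ell_t(\bw) \coloneqq -\log p_{\bw}(\by^*(\bx^{(t)}) \,|\, \bx^{(t)}) = \sum_{i=1}^N \ell_{t,i}(\bw), \qquad \ell_{t,i}(\bw) \coloneqq -\log p_{\bw}\big(y^*_i \,|\, \bx^{(t)},\by^*_{1:i-1}\big).
\]
Each $\ell_{t,i}$ is a convex multiclass logistic loss in $\bw$. Its gradient $\phi(\bx,\by^*_{1:i-1},\cdot)$ averaged under $p_{\bw}$, minus $\phi(\bx,\by^*_{1:i})$, has norm at most $2(1-p_{t,i})$, where $p_{t,i} = p_{\bw}(y^*_i\,|\,\cdots)$. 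It is also $1$-smooth, and it satisfies $\ell_{t,i}(\bw) \ge 1 - p_{t,i}$.

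The comparator is $\bu = R\bw^*$ with $R = \log(Tk\gamma^2)/\gamma$ (or $\log(TNk\gamma^2)/\gamma$ in the adaptive case). By \Cref{assump:seq_margin}, $\ell_{t,i}(\bu) \le \log(1+(k-1)e^{-R\gamma}) \le k e^{-R\gamma}$, so $\ell_t(\bu) \le Nk e^{-R\gamma}$, which is $\lesssim 1/T$ per step after choosing the logarithm.

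We expand $\|\bw_{t+1}-\bu\|^2$ and use convexity:
\[
2\eta_t\big(\ell_t(\bw_t)-\ell_t(\bu)\big) \le \|\bw_t-\bu\|^2-\|\bw_{t+1}-\bu\|^2 + \eta_t^2\|\nabla \ell_t(\bw_t)\|^2 .
\]
The key self-bounding estimate is
\[
\|\nabla \ell_t\| \le \sum_i 2(1-p_{t,i}) \le 2\ell_t ,
\]
together with smoothness: $\ell_t$ is $N$-smooth, hence $\|\nabla\ell_t\|^2 \le 2N\ell_t$.

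\textbf{Constant LR.} With $\eta=1/(2N)$ the term $\eta^2\|\nabla\ell_t\|^2 \le \eta\,\ell_t(\bw_t)$ is absorbed into the left-hand side. This gives
\[
\sum_t \eta\,\ell_t(\bw_t) \le \|\bu\|^2 + 2\eta\sum_t \ell_t(\bu),
\]
so $\frac1T\sum_t\ell_t(\bw_t) \lesssim NR^2/T$, assuming $\bw_0=0$. Taking expectations, since $\bw_t$ is independent of sample $t$, yields $\frac1T\sum_t \E[L(\bw_t)] \lesssim NR^2/T$, where $L(\bw)=\E[-\log p_{\bw}(\by^*|\bx)]$ is convex. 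Jensen then gives $L(\bar\bw_T)\lesssim NR^2/T$, and finally $1-p \le -\log p$ converts this into the claimed bound.

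\textbf{Adaptive LR.} Write $g_t = \|\nabla\ell_t(\bw_t)\|$, so $\eta_t=(2+4g_t)^{-1}$ and $\eta_t^2 g_t^2 \le \eta_t g_t/4 \le \eta_t\ell_t(\bw_t)/2$, which absorbs again. We then obtain
\[
\sum_t \eta_t\big(\ell_t(\bw_t) - 2\ell_t(\bu)\big) \lesssim R^2 .
\]
Now $\eta_t \le 1/2$ bounds the comparator contribution by $\sum_t\ell_t(\bu) \lesssim 1$. The remaining step is to lower bound $\eta_t\ell_t(\bw_t)$ by a constant times $1-p_{\bw_t}(\by^*|\bx^{(t)})$, independently of $N$. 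If $g_t\le 1$, then $\eta_t\ge 1/6$ and $\ell_t \ge 1-e^{-\ell_t} = 1-p$. If $g_t>1$, then $\eta_t\ell_t \ge \ell_t/(6g_t) \ge 1/12$ by the self-bounding estimate, which is at least $(1-p)/12$. Either way $\eta_t\ell_t(\bw_t)\gtrsim 1-p_{\bw_t}$, so $\sum_t(1-p_{\bw_t}(\by^*|\bx^{(t)})) \lesssim R^2$. Taking expectations with independence of $\bx^{(t)}$ from $\bw_t$ gives the averaged bound, and the statement with $\tau$ uniform is identical.

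\textbf{Main obstacle.} The delicate step is this $N$-free lower bound on $\eta_t\ell_t$. It relies on $g_t\le 2\ell_t$, which must hold uniformly in $N$; this needs the per-token gradient bound $2(1-p_{t,i})\le 2\ell_{t,i}$. The remaining bookkeeping is choosing $R$ so that $TNke^{-R\gamma}\lesssim1$, which puts $N$ inside the logarithm in the adaptive rate and produces the squared log factor through $R^2$.
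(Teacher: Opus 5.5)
Your proposal is correct and follows essentially the paper's argument. The paper obtains this proposition as the special case $q_t=\delta_{\by^*(\bx)}$ (so $r_t\equiv 1$ and $\alpha=\pi_\alpha=1$) of its PG-OR results, which use the same ingredients as you: the online-gradient-descent inequality, the comparator $R\bw^*$, $N$-smoothness for the constant LR, the self-bounding estimate $\|\nabla\ell_t\|\le 2\ell_t$ for the adaptive LR, and then online-to-batch conversion, Jensen, and $1-x\le-\log x$. Your case split on $g_t\le 1$ versus $g_t>1$ only replaces the paper's bound $\ell/(1+C\lambda^{-1}\ell)\ge\tfrac12\min(\lambda/C,\ell)$, and it has the small merit of checking the stated schedule $(2+4g_t)^{-1}$ directly rather than the $(4+2g_t)^{-1}$ of the PG theorem. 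One harmless bookkeeping slip: with your choices of $R$ the comparator sum is $T\cdot Nke^{-R\gamma}=N/\gamma^2$ (constant LR) or $1/\gamma^2$ (adaptive), not $O(1/T)$ per step or $O(1)$ in total, but these are still dominated by $NR^2$ and $R^2$, so the stated rates are unaffected.
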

A few remarks are in order. First, it is well-known that the VC dimension of the class of $\gamma$-margin half-spaces in dimension $D \geq 1/\gamma^2$ is at least $\Omega(1/\gamma^2)$ (e.g.\ can be proved by considering an orthonormal set of points).
Therefore, by standard minimax lower bounds on sample complexity using VC dimension in realizable settings \citep[Theorem 6.8]{shalev2014understanding}, we require at least $\Omega(1/(\gamma^2\varepsilon))$ samples to achieve a test error $\varepsilon$ in the worst-case for $N=1$ and $k=2$, which can thus be extended to all $N$ and $k$.
This lower bound can alternatively be derived using the simpler technique we employ in \Cref{thm:or_lower_bound_stat}.
Interestingly, while for $N = \mathcal{O}(1)$ SGD with constant or adaptive LR (almost) achieves the minimax optimal sample complexity, for $N \gg 1$, it is only the adaptive LR that does so, with nearly the same rate as the $N=1,k=2$ case.

Moreover, while we state Proposition \ref{prop:sgd} and other results in the paper in a conventional statistical learning setup where data is i.i.d.,
proofs are obtained by an online analysis and an online-to-batch conversion (see \Cref{app:proof_outcome_reward} for details).
As a consequence, an online learner that samples responses $\by^{(t)} \sim p_{\bw_t}(\cdot \,|\, \bx^{(t)})$ would make at most $\tilde{\mathcal{O}}(1/\gamma^2)$ mistakes in expectation over online (potentially adversarially chosen) data, which nearly matches the classical Perceptron algorithm and is minimax optimal, while covering the broader setting of autoregressive linear models.

\section{Policy Gradient with Outcome Reward (Bandit Feedback)}\label{sec:outcome_reward}
For post-training, we only have access to i.i.d.\ prompts $(\bx_t)_{t \geq 0}$ as well as a reward model, but not the labels directly. In this section, we assume access to a reward model for the final response generated, i.e.\ $r: \mathcal{X} \times \mathcal{Y}^N \to \{0,1\}$ where
\[
r(\bx, \by) = \indic[\by = \by^*(\bx)].
\]
This stage is typically performed using some policy gradient algorithm such as PPO or GRPO. Here, we study the most basic policy gradient algorithm, also referred to as REINFORCE \citep{williams1992simple}. The PG update with outcome rewards is given by
\begin{equation}\label{eq:pg-or}\tag{PG-OR}
    \bw_{t+1} = \bw_t + \eta_t r_t \nabla \log p_{\bw_t}(\by^{(t)} \,|\, \bx^{(t)}),
\end{equation}
where $r_t \coloneqq r(\bx^{(t)},\by^{(t)})$, $\by_t \sim q_t(\cdot \,|\, \bx^{(t)})$ and $q_t$ is the behavior policy at round $t$ which can be adaptively chosen using past samples. Unless otherwise stated, we assume $\bw_0 = \boldzero$ for simplicity. While in practice the initialization is provided by the base model, in our analysis the benefit of the base model comes from the behavior policy, therefore we can simplify the analysis by initializing at zero. Two remarks are in order. 
\begin{itemize}[leftmargin=*, itemsep=1pt]
    \item For \eqref{eq:pg-or} to be policy gradient in the sense of following the population reward gradient, we have to introduce the importance weights $p_{\bw_t}(\by^{(t)}\,|\,\bx^{(t)}) / q_t(\by^{(t)}\,|\,\bx^{(t)})$ unless we are fully on-policy, i.e.\ $q_t = p_{\bw_t}$.
    We defer the study of this fully on-policy PG to \Cref{app:on_pg} since our analysis for it yields a qualitatively similar behavior to subsequent PG variants we study, yet with a suboptimal convergence rate.
    When $q_t \neq p_{\bw_t}$, practical algorithms such as PPO clip the importance weight to be close to 1.
    Our analysis can also accommodate clipped importance weights;
    in \Cref{app:off_pg} we study the following updates
    \[
    \bw_{t+1} = \bw_t + \eta_t r_t \rho_t \nabla \log p_{\bw_t}(\by^{(t)} \,|\, \bx^{(t)})
    \]
    where $\rho_t = \operatorname{Clip}\big(\tfrac{p_{\bw_t}(\by^{(t)}\,|\,\bx^{(t)})}{q_t(\by^{(t)} \,|\, \bx^{(t)})}, 1/\zeta,\zeta\big)$ for some $\zeta \geq 1$.
    In summary, we can show that all convergence rates will be scaled by a factor of $\zeta^2$. In the main text, we focus the presentation on the case where the importance weights are removed, i.e. $\zeta = 1$, as it optimizes the convergence rate and achieves minimax optimality, which suggests that following the (unbiased estimate of) population reward gradient in this setting is not strictly necessary.

    \item Note that \eqref{eq:pg-or} considers a simple advantage estimator with no baseline. Once again, this is motivated by the minimax optimality of the updates. No advantage estimator achieves a better rate with only \Cref{assump:seq_margin}.
\end{itemize}

\subsection{Conditional Convergence of PG}\label{sec:cond_conv}
We begin with the following conditional guarantee. The proof of all results in this section are stated in \Cref{app:proof_outcome_reward}.
\begin{theorem}[PG-OR with Constant LR]\label{thm:pg_or_constant_lr}
    Let $(\bw_t)$ denote the \eqref{eq:pg-or} iterates. Suppose \Cref{assump:seq_margin} holds. For simplicity, assume $\eta = 1/(2N)$ and $\twonorm{\bw_0} \leq 1/\gamma \cdot \log(NkT\gamma^2)$. Let $\bx$ denote a test sample. For any $\alpha \in (0,1]$, define the event $\mathcal{E}_\alpha \coloneqq \{\min_t q_t(\by^*(\bx) \,|\, \bx) \geq \alpha\}$ and $\pi_\alpha \coloneqq \Parg{\mathcal{E}_\alpha}$. Then, $\bar{\bw}^\PG_T = \frac{1}{T}\sum_{t=0}^{T-1}\bw_t$ satisfies
    \begin{equation}
        \Earg{p_{\bar{\bw}^\PG_T}(\by^*(\bx) \,|\, \bx) \,|\, \mathcal{E}_\alpha} \geq 1 - \tilde{\mathcal{O}}\left(\frac{N}{\pi_\alpha \alpha \gamma^2 T}\right).
    \end{equation}
\end{theorem}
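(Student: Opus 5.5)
The plan is to reduce \eqref{eq:pg-or} to a randomly thinned version of \eqref{eq:sgd}, then run the online-to-batch argument behind \Cref{prop:sgd}, part 1, with the thinning probabilities $q_t$ as weights. Write $\ell(\bw;\bx) \coloneqq -\log p_{\bw}(\by^*(\bx)\,|\,\bx)$.

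\textbf{Reduction to weighted SGD.} Since $r_t = \indic[\by^{(t)} = \by^*(\bx^{(t)})]$, the update is either zero or exactly the SGD step $\bw_t - \eta\nabla\ell(\bw_t;\bx^{(t)})$. The latter happens with conditional probability $q_t(\by^*(\bx^{(t)})\,|\,\bx^{(t)})$, given the past and $\bx^{(t)}$.

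\textbf{One-step potential inequality.} Take the comparator $\bu = \frac{c\log(NkT\gamma^2)}{\gamma}\bw^*$. By \Cref{assump:seq_margin}, every wrong token has logit at least $c\log(NkT\gamma^2)$ below the correct one, so $\ell(\bu;\bx) \le N k (NkT\gamma^2)^{-c} \lesssim 1/(\gamma^2 T)$ for $c$ large. Expand $\twonorm{\bw_{t+1}-\bu}^2$ and take the expectation over $\by^{(t)}$. With $Q_t \coloneqq q_t(\by^*(\bx^{(t)})\,|\,\bx^{(t)})$, this gives
\[
\Earg{\twonorm{\bw_{t+1}-\bu}^2} \le \Earg{\twonorm{\bw_t-\bu}^2} - \Earg{Q_t\big(2\eta(\ell_t(\bw_t)-\ell_t(\bu)) - \eta^2\twonorm{\nabla\ell_t(\bw_t)}^2\big)},
\]
where $\ell_t(\cdot)\coloneqq\ell(\cdot;\bx^{(t)})$ and the first-order term is handled by convexity of $\ell_t$ (a sum of log-sum-exp terms).

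To absorb the quadratic term, use a self-bounding estimate. The $i$-th token's log-softmax gradient has norm at most $2(1-p_i)$. By Cauchy--Schwarz and $(1-p)^2\le 1-p\le -\log p$, we get $\twonorm{\nabla\ell_t}^2 \le 4N\sum_i(1-p_i)^2 \le 4N\ell_t$. With $\eta = 1/(2N)$ this gives $\eta^2\twonorm{\nabla\ell_t}^2 \le \eta\ell_t$, possibly after adjusting absolute constants. Telescoping and using $Q_t\le 1$, $\twonorm{\bw_0-\bu}^2 \lesssim \log^2(NkT\gamma^2)/\gamma^2$ yields
\[
\sum_{t<T}\Earg{Q_t\,\ell_t(\bw_t)} \lesssim \frac{N\log^2(NkT\gamma^2)}{\gamma^2} + T\,\sup_{\bx}\ell(\bu;\bx) = \tilde{\mathcal{O}}\!\left(\frac{N}{\gamma^2}\right).
\]

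\textbf{Transfer to the test sample and conditioning.} The test prompt $\bx$ is a fresh i.i.d.\ draw. The pair $(\bw_t, q_t)$ is measurable with respect to the first $t$ rounds and independent of both $\bx^{(t)}$ and $\bx$, so exchangeability gives $\Earg{q_t(\by^*(\bx)\,|\,\bx)\,\ell(\bw_t;\bx)} = \Earg{Q_t\,\ell_t(\bw_t)}$. On $\mathcal{E}_\alpha$ we have $q_t(\by^*(\bx)\,|\,\bx)\ge\alpha$ for every $t$, so $\indic_{\mathcal{E}_\alpha} \le \alpha^{-1}q_t(\by^*(\bx)\,|\,\bx)$. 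The event $\mathcal{E}_\alpha$ may also involve $q_s$ for $s>t$, but this pointwise inequality sidesteps that dependence. Hence
\[
\Earg{\ell(\bw_t;\bx)\indic_{\mathcal{E}_\alpha}} \le \alpha^{-1}\Earg{Q_t\ell_t(\bw_t)}.
\]
Finally use $1-p_{\bw}(\by^*\,|\,\bx) \le \ell(\bw;\bx)$ and Jensen's inequality ($\ell$ is convex in $\bw$, so $\ell(\bar{\bw}^\PG_T;\bx)\le \frac1T\sum_t\ell(\bw_t;\bx)$). Dividing by $T$ and by $\pi_\alpha = \Parg{\mathcal{E}_\alpha}$ gives the claimed $1-\tilde{\mathcal{O}}\big(N/(\pi_\alpha\alpha\gamma^2T)\big)$ bound.

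\textbf{Main obstacle.} I expect the transfer step to be the delicate one. It requires that the behavior policies $q_t$ be chosen using only past information, so that $q_t(\cdot\,|\,\bx)$ is a fixed function independent of the fresh test point, and the conditioning on $\mathcal{E}_\alpha$ must go through the pointwise bound rather than an exact conditional expectation. The self-bounding constant with $\eta=1/(2N)$ also needs checking; if it is off, I would shrink $\eta$ by a constant factor.
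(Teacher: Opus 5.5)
Your argument is the paper's proof in all but presentation. The paper applies a pathwise online-gradient-descent regret bound to $-r_t\log p_{\bw}(\by^*(\bx^{(t)})\,|\,\bx^{(t)})$ and only then takes expectations, whereas you take the conditional expectation over $\by^{(t)}$ inside the one-step potential inequality. Everything else matches the paper: the comparator $R_v\bw^*$ with $R_v\asymp\gamma^{-1}\log(NkT\gamma^2)$, the swap of $\bx^{(t)}$ for the fresh test point, the bound $\indic_{\mathcal{E}_\alpha}\le\alpha^{-1}q_t(\by^*(\bx)\,|\,\bx)$, Jensen for the averaged iterate, and $1-x\le-\log x$. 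The transfer and conditioning steps you were worried about are fine.

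The step that does fail as written is the one you flagged. With $\twonorm{\nabla\ell_t}^2\le 4N\ell_t$ and the prescribed $\eta=1/(2N)$, you get $\eta^2\twonorm{\nabla\ell_t(\bw_t)}^2\le 2\eta\,\ell_t(\bw_t)$. This exactly cancels the $2\eta\,\ell_t(\bw_t)$ from convexity, so after telescoping the left-hand side has coefficient zero and you obtain no bound. Shrinking $\eta$ is not available, since the statement fixes $\eta=1/(2N)$.

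What you need is the sharper bound $\twonorm{\nabla\ell_t}^2\le 2N\ell_t$. The paper gets it from smoothness. The Hessian of each token's negative log-likelihood is the covariance of $\phi(\bx,\by^*_{1:i-1},Y_i)$ under the model, with operator norm at most $1$, so $\twonorm{\nabla^2\ell_t}\le N$. A nonnegative $L$-smooth function satisfies $\twonorm{\nabla f}^2\le 2Lf$.

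Alternatively, within your own Cauchy--Schwarz route, replace $(1-p)^2\le-\log p$ by $2(1-p)^2\le-\log p$ on $(0,1]$. This holds because $-\log p-2(1-p)^2$ vanishes at $p=1$ and has derivative $-(2p-1)^2/p\le 0$.

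Either way you get $\eta^2\twonorm{\nabla\ell_t}^2\le\eta\,\ell_t$, which leaves $\eta\,\mathbb{E}[Q_t\ell_t(\bw_t)]$ per step. This is the factor $1/(1-\eta L)=2$ in the paper, and the rest of your argument goes through unchanged.
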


The conditioning above highlights that the test performance on an individual sample after post-training depends on how likely the behavior policy $q_t$ is to generate the correct response for that sample. 
In the bound above, $\alpha$ denotes this likelihood.
If we set $q_t = \Unif$ for all $t$, then $\pi_\alpha = 1$ for $\alpha = k^{-N}$ and $\pi_\alpha = 0$ for any $\alpha > k^{-N}$.
The way a base model enters the above bound is through the function $\alpha \mapsto \pi_\alpha$. 
The slower it decays, the faster the convergence will be for samples that have an initial likelihood at least $\alpha$.

Similar to SGD, we can remove $N$ in the convergence rate above by using adaptive learning rates.

\begin{theorem}[PG-OR with Adaptive LR]\label{thm:pg_or_adaptive_lr}
    Consider the same setting as \Cref{thm:pg_or_constant_lr}, except we use the adaptive learning rate $\eta_t = (4 + 2\twonorm{\nabla \log p_{\bw_t}(\by^{(t)} \,|\, \bx^{(t)})})^{-1}$. Then, in expectation over the randomness in training, in the test prompt $\bx$, and in $\tau \sim \Unif(\{0,\hdots,T-1\})$, we have
    \begin{equation}
        \Earg{p_{\bw_\tau}(\by^*(\bx)\,|\,\bx) \,|\, \mathcal{E}_\alpha} \geq 1 - \tilde{\mathcal{O}}\left(\frac{1}{\pi_\alpha\alpha\gamma^2 T}\right),
    \end{equation}
    where we recall $\mathcal{E}_\alpha \coloneqq \{\min_t q_t(\by^*(\bx) \,|\, \bx) \geq \alpha\}$ and $\pi_\alpha \coloneqq \Parg{\mathcal{E}_\alpha}$ for any $\alpha \in (0, 1]$.
\end{theorem}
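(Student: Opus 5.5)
We plan to prove the theorem by an online (Perceptron-style) potential argument on $\twonorm{\bw_t - \bw}$ with a comparator $\bw = \lambda \bw^*$, $\lambda \asymp \gamma^{-1}\log(TNk\gamma^2)$. An online-to-batch conversion then turns the cumulative bound into a statement about $\bw_\tau$. The key quantity is the per-round loss $\ell_t(\bw) \coloneqq -r_t \log p_{\bw}(\by^{(t)} \,|\, \bx^{(t)})$. Since $r_t = 1$ only when $\by^{(t)} = \by^*(\bx^{(t)})$, we have $\ell_t(\bw) = -r_t\log p_{\bw}(\by^*(\bx^{(t)})\,|\,\bx^{(t)})$. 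The \eqref{eq:pg-or} update is therefore exactly a gradient step on $\ell_t$, a convex function of $\bw$.

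\textbf{Step 1 (regret).} Write $g_t = \nabla \log p_{\bw_t}(\by^{(t)}\,|\,\bx^{(t)})$ and expand $\twonorm{\bw_{t+1}-\bw}^2$. Convexity gives $\eta_t r_t\big(\ell_t(\bw_t) - \ell_t(\bw)\big) \le \tfrac12\big(\twonorm{\bw_t-\bw}^2 - \twonorm{\bw_{t+1}-\bw}^2\big) + \tfrac12\eta_t^2 r_t\twonorm{g_t}^2$. The adaptive step $\eta_t = (4+2\twonorm{g_t})^{-1}$ makes $\eta_t^2\twonorm{g_t}^2 \le \eta_t\twonorm{g_t}/2$. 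We then need a self-bounding inequality for autoregressive softmax:
\[
\twonorm{g_t} \le 2\sum_i \big(1-p_{\bw_t}(y^*_i\,|\,\bx,\by^*_{1:i-1})\big).
\]
We also need to relate this to $\ell_t$. Since $1 - \prod_i p_i \le \sum_i(1-p_i) \le \sum_i -\log p_i$, we get $\twonorm{g_t}\lesssim \ell_t(\bw_t)$. The point is that $\eta_t\twonorm{g_t}$ stays bounded without any $N$ factor, which is exactly why the adaptive LR removes $N$. After rearranging, we obtain
\[
\sum_t \eta_t r_t \ell_t(\bw_t) \lesssim \twonorm{\bw_0-\bw}^2 + \sum_t \eta_t r_t\ell_t(\bw).
\]
By the margin assumption, $\ell_t(\lambda\bw^*) \le N k e^{-\lambda\gamma}$, so with the chosen $\lambda$ the comparator sum is $O(1)$. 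The left side is lower bounded via $\eta_t\ell_t(\bw_t) \gtrsim \min\big(\ell_t(\bw_t),1\big) \ge 1 - p_{\bw_t}(\by^*\,|\,\bx^{(t)})$, using $\eta_t \gtrsim 1/(1+\ell_t)$. The result is
\[
\sum_t r_t\big(1-p_{\bw_t}(\by^*(\bx^{(t)})\,|\,\bx^{(t)})\big) \lesssim \lambda^2 = \tilde{\mathcal{O}}(1/\gamma^2).
\]

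\textbf{Step 2 (conditioning and online-to-batch).} Take expectations over $\by^{(t)} \sim q_t$, with $\bw_t$ and $q_t$ measurable with respect to the past. This gives $\Earg{r_t \,|\, \mathcal{F}_{t-1}, \bx^{(t)}} = q_t(\by^*\,|\,\bx^{(t)})$, and hence
\[
\sum_t \Earg{q_t(\by^*\,|\,\bx^{(t)})\big(1-p_{\bw_t}(\by^*\,|\,\bx^{(t)})\big)} \le \tilde{\mathcal{O}}(\gamma^{-2}).
\]
Since $\bx^{(t)}$ is a fresh i.i.d. draw independent of $\bw_t$, each term equals the same expression evaluated at an independent test prompt $\bx$. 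On $\mathcal{E}_\alpha$ we have $q_t(\by^*(\bx)\,|\,\bx)\ge\alpha$, so each term is at least $\alpha\,\pi_\alpha\,\Earg{1-p_{\bw_t}(\by^*(\bx)\,|\,\bx)\,|\,\mathcal{E}_\alpha}$. Averaging over $t$, i.e. over $\tau$, and dividing by $\alpha\pi_\alpha T$ gives the claimed bound.

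\textbf{Main obstacle.} The main obstacle is the measurability in Step 2. The event $\mathcal{E}_\alpha$ involves $\min_t q_t$ over all rounds, including future rounds that depend on training. The clean argument uses the per-round indicator $\indic[q_t(\by^*(\bx)\,|\,\bx)\ge\alpha]$, which is implied by $\mathcal{E}_\alpha$, so we can simply lower bound each term using $\indic_{\mathcal{E}_\alpha}$ without independence. We will treat the test $\bx$ as a ghost sample with the full sequence $(q_t)$ evaluated on it, so that pointwise $q_t(\by^*(\bx)\,|\,\bx)\ge \alpha\indic_{\mathcal{E}_\alpha}$. The second delicate point is the $N$-free self-bounding bound on $\eta_t\twonorm{g_t}^2$. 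For this we will follow the adaptive-LR argument of \Cref{prop:sgd}, which is the $q_t=\delta_{\by^*}$ special case.
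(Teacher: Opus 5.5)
Your proposal is correct and follows essentially the same route as the paper. The paper packages your Step 1 as the adaptive-LR case of a general online gradient descent lemma: it uses $\|\nabla \ell_t\|\le 2\ell_t$ from the same softmax self-bounding bound, the comparator $R_v\bw^*$, and $\tfrac{\tilde\ell}{1+\tilde\ell}\gtrsim\min(1,\tilde\ell)\ge 1-p$. It then replaces $r_t$ by $q_t$ in expectation, swaps $\bx^{(t)}$ for an independent test prompt, and conditions on $\mathcal{E}_\alpha$ via nonnegativity, exactly as you describe.

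One small slip: with $\lambda\asymp\gamma^{-1}\log(TNk\gamma^2)$, the comparator sum $\sum_t\eta_t\ell_t(\lambda\bw^*)\le TNke^{-\lambda\gamma}/4$ is $O(\gamma^{-2})$ rather than $O(1)$. This is harmless, since it is still dominated by $\lambda^2$.
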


\begin{remark}[Online Learning Regret Bound]
    As discussed earlier, while we stated the guarantee of \Cref{thm:pg_or_adaptive_lr} for i.i.d.\ data, we can utilize the same proof technique for an arbitrary sequence $\bx^{(t)},\by^*(\bx^{(t)})$, where an adversary can choose $\bx^{(t)}$ and $\by^*(\bx^{(t)})$ for all $t$, with the only constraint being \Cref{assump:seq_margin} (see Proposition \ref{prop:pg_or_adaptive_lr_online} and its proof).
    In particular, if $q_t$ is simply the uniform policy, we have $\pi_\alpha = 1$ for $\alpha = k^{-N}$, thus we can remove conditioning and write
    \[
    \frac{1}{T}\sum_{t=0}^{T-1}\Earg{p_{\bw_t}(\by^*(\bx^{(t)}) \,|\, \bx^{(t)})} \geq 1 - \tilde{\mathcal{O}}\left(\frac{k^N}{\gamma^2 T}\right).
    \]
    Therefore, the online learning algorithm that draws the response $\by^{(t)}$ to $\bx^{(t)}$ according to $p_{\bw_t}$ will have an expected number of mistakes $\tilde{\mathcal{O}}(k^N/\gamma^2)$ over $T$ rounds\footnote{The prediction query can also be used for exploration: first sample $\by^{(t)}\sim p_{\bw_t}(\cdot\mid\bx^{(t)})$ and use it for the update if it receives positive reward; otherwise, make one additional query from the uniform policy. Even when mistakes are counted on both queries, their expected number is at most twice the prediction error, so the same $\widetilde{\mathcal O}(k^N/\gamma^2)$ mistake bound holds.
}. Indeed, this number of mistakes is minimax optimal (up to log terms) as established in \Cref{thm:or_lower_bound_online}. To our knowledge, this is the first online learning algorithm in this setting that is computationally efficient per iteration, while nearly achieving the minimax optimal number of mistakes. Prior works achieved this number of expected mistakes under a stronger notion of separability and only for $N=1$ \cite{beygelzimer2019bandit}.
\end{remark}

\subsection{Unconditional Convergence of PG}\label{sec:uncond_conv}
In the previous section, we conditioned the expected error on the initial likelihoods of test samples. Our focus now is to use those results to bound the expected test error over the entire distribution. For a policy $q$, we define the Likelihood Quantile (LQ) function $\lq_q : [0,1] \to [0,1]$ as
\[
\lq_q(\varepsilon) \coloneqq \sup\big\{\alpha \in [0,1] : \mathbb{P}_{\bx}(q(\by^*(\bx) \,|\, \bx) \leq \alpha) \leq \varepsilon\big\}.
\]

Specifically, $\lq_q(\varepsilon)$ provides the $\varepsilon$-quantile of the random variable $q(\by^*(\bx)\,|\,\bx)$, which is the likelihood of the ground-truth data $(\bx,\by^*(\bx))$ under the base model $q$.
As can be seen from \Cref{fig:synthetic_lq}, the more concentrated around 1 the random variable $q(\by^*(\bx) \,|\, \bx)$ is (corresponding to longer pre-training), the larger $\lq_q(\varepsilon)$ will be for a fixed $\varepsilon$.
With $\pi_\alpha \coloneqq \mathbb{P}_{\bx}(q(\by^*(\bx) \,|\, \bx) \geq \alpha)$, the mapping $\alpha \mapsto 1-\pi_\alpha$ is the CDF of the likelihood over test samples, and $\lq_q$ is the generalized inverse of this function; it satisfies $\lq_q(1 - \pi_\alpha) \geq \alpha$. 
Through this observation, LQ is connected to other notions studied in the literature; $1-\pi_\alpha$ is referred to as the $\alpha$-coverage profile in \citet{chen2025coverage}. We refer to references therein for other definitions in the literature.

We can examine the lower bounds on $\lq_q$ when $q = p_{\bar{\bw}^\SGD_n}$ is obtained by SGD pre-training with $T^\SGD = n$ samples:
\begin{itemize}
    \item With the constant LR $\eta \asymp 1$: $\lq_q(\varepsilon) \geq e^{-\tilde{\mathcal{O}}(N/(n\gamma^2\varepsilon))}$.
    \item With the adaptive LR $\eta_t \asymp (1 + \twonorm{\nabla \log p_{\bw_t}})^{-1}$: $\lq_q(\varepsilon) \geq \big(1 - \tilde{\mathcal{O}}((n\gamma^2\varepsilon)^{-1})\big) \vee 0$.
\end{itemize}
Further, for the uninformative uniform policy $q$, we have $\lq_q(\varepsilon) = k^{-N}$ for all $\varepsilon < 1$.
We can now state unconditional bounds on the accuracy of the post-trained model.

\begin{corollary}[Test Error of PG-OR]\label{cor:pg_or_test_error}
    Suppose \Cref{assump:seq_margin} holds. Given a base policy $q_0$, let $q = \tfrac{1}{2}q_0 + \tfrac{1}{2}\Unif$. Consider \eqref{eq:pg-or} with $q_t = q$ for all $t$ and with the adaptive learning rate of \Cref{thm:pg_or_adaptive_lr}. Let $\tau \sim \Unif(\{0,\hdots,T-1\})$. Then, for all $\varepsilon \in (0,1)$, with
    \[
    T = \tilde{\mathcal{O}}\Big(\frac{\min\big(\lq_{q_0}\big((1-o(1))\varepsilon\big)^{-1},k^N\big)}{\gamma^2\varepsilon}\Big)
    \]
    iterations, we have $\Earg{p_{\bw_\tau}(\by^*(\bx) \,|\, \bx)} \geq 1 - \varepsilon$.
\end{corollary}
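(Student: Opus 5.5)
We derive this from \Cref{thm:pg_or_adaptive_lr} by splitting the test error according to whether the test prompt has non-trivial likelihood under the fixed behavior policy $q = \tfrac12 q_0 + \tfrac12\Unif$. Since $q_t = q$ for all $t$, the event $\mathcal{E}_\alpha$ reduces to $\{q(\by^*(\bx)\,|\,\bx)\geq\alpha\}$, which depends only on the test prompt $\bx$. Two facts are immediate: $q(\by^*(\bx)\,|\,\bx) \geq \tfrac12 q_0(\by^*(\bx)\,|\,\bx)$ and $q(\by^*(\bx)\,|\,\bx) \geq \tfrac12 k^{-N}$. For any $\alpha$, the tower rule then gives
\[
1 - \Earg{p_{\bw_\tau}(\by^*(\bx)\,|\,\bx)} \leq (1-\pi_\alpha) + \pi_\alpha\Big(1 - \Earg{p_{\bw_\tau}(\by^*(\bx)\,|\,\bx)\,|\,\mathcal{E}_\alpha}\Big) \leq (1-\pi_\alpha) + \tilde{\mathcal{O}}\Big(\frac{1}{\alpha\gamma^2T}\Big),
\]
because the $\pi_\alpha$ factor cancels the $1/\pi_\alpha$ in \Cref{thm:pg_or_adaptive_lr}. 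This cancellation is the key point: it makes the bound unconditional.

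We then choose $\alpha$ in two cases. In the first case take $\alpha = \tfrac12 k^{-N}$. Then $\pi_\alpha = 1$, and the bound is $\tilde{\mathcal{O}}(k^N/(\gamma^2T))$, which is at most $\varepsilon$ once $T = \tilde{\mathcal{O}}(k^N/(\gamma^2\varepsilon))$. In the second case set $\varepsilon' = (1-c)\varepsilon$ with $c = 1/\log(1/\varepsilon)$ (so $c = o(1)$), and take $\alpha = \tfrac12\beta$ for any $\beta < \lq_{q_0}(\varepsilon')$. By the definition of $\lq$ as a supremum and the monotonicity of the CDF, $\mathbb{P}_{\bx}(q_0(\by^*(\bx)\,|\,\bx)\leq\beta) \leq \varepsilon'$ for every $\beta < \lq_{q_0}(\varepsilon')$. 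Hence $1-\pi_\alpha \leq \mathbb{P}(q_0 < \beta) \leq \varepsilon'$. The remaining budget $c\varepsilon$ is met by the second term once $T = \tilde{\mathcal{O}}\big(\lq_{q_0}(\varepsilon')^{-1}/(\gamma^2 c\varepsilon)\big)$. Since $1/c = \log(1/\varepsilon)$ is polylogarithmic, it is absorbed into $\tilde{\mathcal{O}}$. Letting $\beta \uparrow \lq_{q_0}(\varepsilon')$ (or taking $\beta = \lq_{q_0}(\varepsilon')/2$, which costs only a constant) finishes this case. Taking the better of the two cases gives the $\min$ in the statement.

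The main obstacle is bookkeeping in the second case. The quantile function is defined with a non-strict inequality and a supremum, so we cannot evaluate exactly at the supremum; a constant-factor slack in $\beta$ handles this. We must also check that the $o(1)$ loss in the quantile level buys enough room for the statistical term without inflating $T$ beyond polylogarithmic factors. A minor further point is that the logarithmic factors hidden in \Cref{thm:pg_or_adaptive_lr} depend on $T$ itself. This is resolved by the standard argument that $T \gtrsim A\log^2(TB)$ holds for $T = \tilde{\mathcal{O}}(A)$.
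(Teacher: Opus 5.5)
Your proposal is correct and follows essentially the same route as the paper. The paper also decomposes the error on the event $\{q(\by^*(\bx)\,|\,\bx)\geq\tilde\alpha\}$, applies \Cref{thm:pg_or_adaptive_lr} so that the $\pi$ factor cancels, and takes the better of $\tilde\alpha=\tfrac12\lq_{q_0}\big((1-1/\log(1/\varepsilon))\varepsilon\big)$ and $\tilde\alpha=\tfrac12 k^{-N}$. Your slack $\beta<\lq_{q_0}(\varepsilon')$ is just a slightly more careful version of the paper's step bounding $\mathbb{P}(\tilde{\mathcal{E}}^c_{\tilde\alpha})$ by $\varepsilon'$ at the supremum.
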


\begin{remark}
    We distinguish between the reciprocal and inverse in our notation; $\lq_{q_0}(\varepsilon)^{-1} = 1/\lq_{q_0}(\varepsilon)$ while $\lq_{q_0}^{-1}(\cdot)$ denotes the inverse function of $\lq_{q_0}(\cdot)$. By monotonicity of $\lq_{q_0}$, we have $\varepsilon \geq 1 - \pi_{k^{-N}} = \lq_{q_0}^{-1}(k^{-N})$ iff $\lq_{q_0}(\varepsilon) \geq k^{-N}$. Therefore, we can express the above bound as
    \begin{equation}
        T = \begin{cases}
            \tilde{\mathcal{O}}\Big(\frac{\lq_{q_0}\big((1-o(1))\varepsilon\big)^{-1}}{\gamma^2\varepsilon}\Big) \quad & \text{if} \quad  \varepsilon \geq \lq_{q_0}^{-1}(k^{-N})\\[5pt]
            \tilde{\mathcal{O}}\Big(\frac{k^N}{\gamma^2\varepsilon}\Big) \quad & \text{otherwise}.
        \end{cases}
    \end{equation}
\end{remark}
\begin{wrapfigure}{r}{0.5\textwidth}
  \vspace{-8mm}
  \begin{minipage}{0.5\textwidth}
        \begin{algorithm}[H]
        \begin{algorithmic}
            \Require $\bx^{(t)}$, $p_{\bw_t}$, $q_0$, $m$
            \State $\by \sim p_{\bw_t}(\cdot \,|\, \bx^{(t)})$ 
            \State $j = 1$
            \While{$j \leq m$ \textbf{and} $r(\bx^{(t)},\by) = 0$}
            \State $j = j + 1$, $\by \sim (\tfrac{1}{2}q_0 + \tfrac{1}{2}\Unif)(\cdot \,|\, \bx^{(t)})$
            \EndWhile
            \State \textbf{return} $\by$
        \end{algorithmic}
        \caption{Best-of-$m$ exploration for PG-OR.}
        \label{alg:bom_or}
        \end{algorithm}
    \end{minipage}
\end{wrapfigure}
To understand the base model barrier, consider the case where $q_0$ is given by SGD with adaptive LR on $n$ labeled samples. Then, $\lq^{-1}_{q_0}(k^{-N}) = \tilde{\Omega}(1/(n\gamma^2))$. This suggests that to go below expected error $\tilde{\mathcal{O}}(1/(\gamma^2 n))$, \emph{which is already achieved by the SGD-trained base model}, policy gradient will require exponentially many iterations.
One may wonder if this is due to a suboptimal bound on $\lq_{q_0}$ for SGD, or due to a fundamental suboptimality of SGD itself. In \Cref{sec:lower_bounds}, we rule out both cases by showing a minimax bound of $\lq_{q}(\varepsilon) \leq k^{-N}$ when $n \lesssim 1/(\gamma^2\varepsilon)$, among all algorithms that output $q$ with $n$ labeled i.i.d.\ samples.
Note that when $n \gg 1/(\gamma^2\varepsilon)$, the SGD pre-trained model already achieves error $\varepsilon$.
This phenomenon can also be observed in practice. \Cref{fig:qwen3_lq} shows that while for sufficiently large $\varepsilon$ the post-trained Qwen3-8B model has larger LQ than Qwen3-8B-Base, for $\varepsilon$ below a threshold, both models fail completely and have LQ of zero. This suggests that post-training was unable to extend the model’s support beyond regions where the base model already exhibited non-trivial LQ.

The barrier above is not specific to policy gradient and its variants. We demonstrate in \Cref{thm:or_lower_bound_online} that \eqref{eq:pg-or} with the behavior policy described above is minimax optimal (up to log factors) among algorithms that have access to $q_0$ and only make one reward query per context.
However, it is possible to improve both sample/iteration complexity and number of reward queries by considering a modification that makes multiple queries per round. Specifically, by choosing $q_t$ according to \Cref{alg:bom_or},
we have the following.

\begin{corollary}\label{cor:pg_or_improved_test_error}
    Suppose \Cref{assump:seq_margin} holds. Given a base policy $q_0$ and for any $\varepsilon \in (0,1)$, consider \eqref{eq:pg-or} with adaptive learning rate of \Cref{thm:pg_or_adaptive_lr} and $q_t$ given by \Cref{alg:bom_or} with $m = \min\big(\lceil \lq_{q_0}\big((1-o(1))\varepsilon\big)^{-1}\rceil, k^N\big)$.
    Let $T$ and $Q$ denote the total number of iterations (also samples) and reward queries (also queries to $q_0$) respectively, and $\tau \sim \Unif(\{0,\hdots,T-1\})$.
    Then, with $Q = \tilde{\mathcal{O}}((m+\varepsilon^{-1})/\gamma^2)$ reward queries and $T = \tilde{\mathcal{O}}(1/(\gamma^2\varepsilon))$ iterations, we have
    $\Earg{p_{\bw_\tau}(\by^*(\bx) \,|\, \bx)} \geq 1 - \varepsilon$.
\end{corollary}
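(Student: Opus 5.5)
The plan is to reduce \Cref{cor:pg_or_improved_test_error} to the argument behind \Cref{thm:pg_or_adaptive_lr}, applied with the behavior policy $q_t$ induced by \Cref{alg:bom_or}, and then run the same quantile-splitting argument used for \Cref{cor:pg_or_test_error}. The first step is to bound $q_t(\by^*(\bx)\,|\,\bx)$ from below. Set $\bar q \coloneqq \tfrac12 q_0 + \tfrac12\Unif$, and let $a(\bx)\coloneqq \bar q(\by^*(\bx)\,|\,\bx) \geq \tfrac12 q_0(\by^*(\bx)\,|\,\bx) \vee \tfrac12 k^{-N}$. The procedure returns a correct response if either the on-policy draw is correct or one of the $m-1$ fallback draws is correct. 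Hence
\[
q_t(\by^*(\bx)\,|\,\bx) \geq 1-(1-a(\bx))^{m-1} \gtrsim \min\big(1, (m-1)\,a(\bx)\big).
\]
Only correct draws produce nonzero updates, and the returned sample is correct with at least this probability. So the online analysis of \Cref{thm:pg_or_adaptive_lr} applies verbatim, with $\alpha$ replaced by this boosted success probability. That analysis is a potential argument on $\binner{\bw_t}{\bw^*}$ versus $\twonorm{\bw_t}^2$ in which only the probability of a positive reward enters. For the good set $G\coloneqq\{\bx : q_0(\by^*(\bx)\,|\,\bx) \geq \lq_{q_0}((1-o(1))\varepsilon)\}$, the choice of $m$ gives $(m-1)a(\bx)\gtrsim 1$ once $m\geq 2$. (If $m=k^N$, the uniform half gives $a\geq k^{-N}/2$.) So on $G$ the effective $\alpha$ is a constant.

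The second step converts this to an unconditional bound. I would apply the pointwise version of \Cref{thm:pg_or_adaptive_lr} (through its online form, Proposition \ref{prop:pg_or_adaptive_lr_online}). This gives $\sum_t \mathbb{E}[\alpha_t(\bx^{(t)})(1-p_{\bw_t}(\by^*\,|\,\bx^{(t)}))] \lesssim \tilde{\mathcal O}(1/\gamma^2)$, where $\alpha_t$ is the success probability of $q_t$. Restricting to $G$, where $\alpha_t\gtrsim 1$, yields $\mathbb{E}[(1-p_{\bw_\tau})\indic_G] \leq \tilde{\mathcal O}(1/(\gamma^2 T))$. The complement has $\mathbb{P}(G^c)\leq (1-o(1))\varepsilon$ by the definition of $\lq$. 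Choosing $T=\tilde{\mathcal O}(1/(\gamma^2\varepsilon))$ makes the good-set error at most $o(1)\varepsilon$, so the total error is at most $\varepsilon$. The $(1-o(1))$ slack is exactly what absorbs the log factors.

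The third step counts reward queries. Each round costs one query for the on-policy draw, plus up to $m$ extra queries only when that draw is wrong. So $\mathbb{E}[Q] \leq T + m\sum_t \mathbb{E}[1-p_{\bw_t}(\by^*(\bx^{(t)})\,|\,\bx^{(t)})]$. The main obstacle is bounding the cumulative number of on-policy mistakes by $\tilde{\mathcal O}(1/\gamma^2)$ rather than $\varepsilon T$, since the crude bound $m\varepsilon T$ only gives $m/\gamma^2$ times logs. That crude bound is in fact what we want when $T\asymp 1/(\gamma^2\varepsilon)$: it gives $Q\lesssim T + mT\cdot\tilde{\mathcal O}(1/(\gamma^2T))$ provided the expected mistakes sum to $\tilde{\mathcal O}(1/\gamma^2)$ on $G$ and $\lesssim \varepsilon T$ on $G^c$.

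On $G^c$ the sum is $\lesssim \varepsilon T\cdot m$, which could be too large. I would handle it by noting that fallback queries on $G^c$ still succeed with probability $\gtrsim (m-1)a$. In the potential analysis, each successful fallback update increases $\binner{\bw}{\bw^*}$, so the number of rounds with wrong on-policy draws that end in success is $\tilde{\mathcal O}(1/\gamma^2)$ regardless of where they occur. Rounds ending in failure cost $m$ queries, but such failures occur with probability at most $(1-a)^{m-1}$. I would try to bound the expected number of failed rounds by $\tilde{\mathcal O}(1/(m\gamma^2)) + $ lower-order terms. Failing that, I would accept $Q=T+mT\varepsilon$ only when $m\varepsilon T\lesssim m/\gamma^2$, which holds precisely at $T=\tilde{\mathcal O}(1/(\gamma^2\varepsilon))$. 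That gives $Q=\tilde{\mathcal O}((m+\varepsilon^{-1})/\gamma^2)$.
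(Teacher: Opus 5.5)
Your proposal is correct and follows essentially the paper's route: lower-bound the success probability of \Cref{alg:bom_or} by a constant on the good set (via the $k^{-N}/2$ uniform mass when $m=k^N$), apply \Cref{thm:pg_or_adaptive_lr} there, charge the complement's $(1-o(1))\varepsilon$ mass to the quantile, and count queries as $T$ plus $m$ times the cumulative expected on-policy error (the paper writes $2T$, which does not change the bound). Your fallback in the last step is exactly the paper's argument: $mT\varepsilon$ with $T=\tilde{\mathcal{O}}(1/(\gamma^2\varepsilon))$ is already $m\,\tilde{\mathcal{O}}(\gamma^{-2})$, so this is not an obstacle. The finer bound you attempt on failed rounds is unnecessary, and it would not hold on $G^c$, where the fallback draws rarely succeed.
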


\begin{remark}
    Compared to Corollary \ref{cor:pg_or_test_error}, the number of PG iterations and context samples only scale with $\tilde{\mathcal{O}}(1/(\gamma^2\varepsilon))$. On the other hand, the number of reward queries (and sampling queries from $q_0$) still blows up exponentially for $\varepsilon < \lq_{q_0}^{-1}(k^{-N})$, thus still facing the base model barrier.
\end{remark}

\begin{remark}
     As we prove in \Cref{thm:or_lower_bound_stat}, both the number of samples and reward queries above are minimax optimal (up to log factors) among learners with access to $q_0$.
\end{remark}

Unlike Corollary \ref{cor:pg_or_test_error}, the behavior policy in Corollary \ref{cor:pg_or_improved_test_error} depends on the current policy $p_{\bw_t}$, taking advantage of online exploration.
If we remove line 1 from \Cref{alg:bom_or}, then the expected number of reward queries will always be exponential in $N$. When $\varepsilon < \lq^{-1}_{q_0}(k^{-N})$, we could choose $m=\infty$ and obtain the same bounds in expectation. In this case, PG and SGD coincide, and we only rely on \Cref{alg:bom_or} to recover the ground-truth label of $\bx^{(t)}$.

\begin{figure}[t]
    \centering
    \begin{minipage}[c]{0.5\linewidth}
        \centering
        \begin{subfigure}[b]{0.48\columnwidth}
        \centering
        \includegraphics[height=3.5cm]{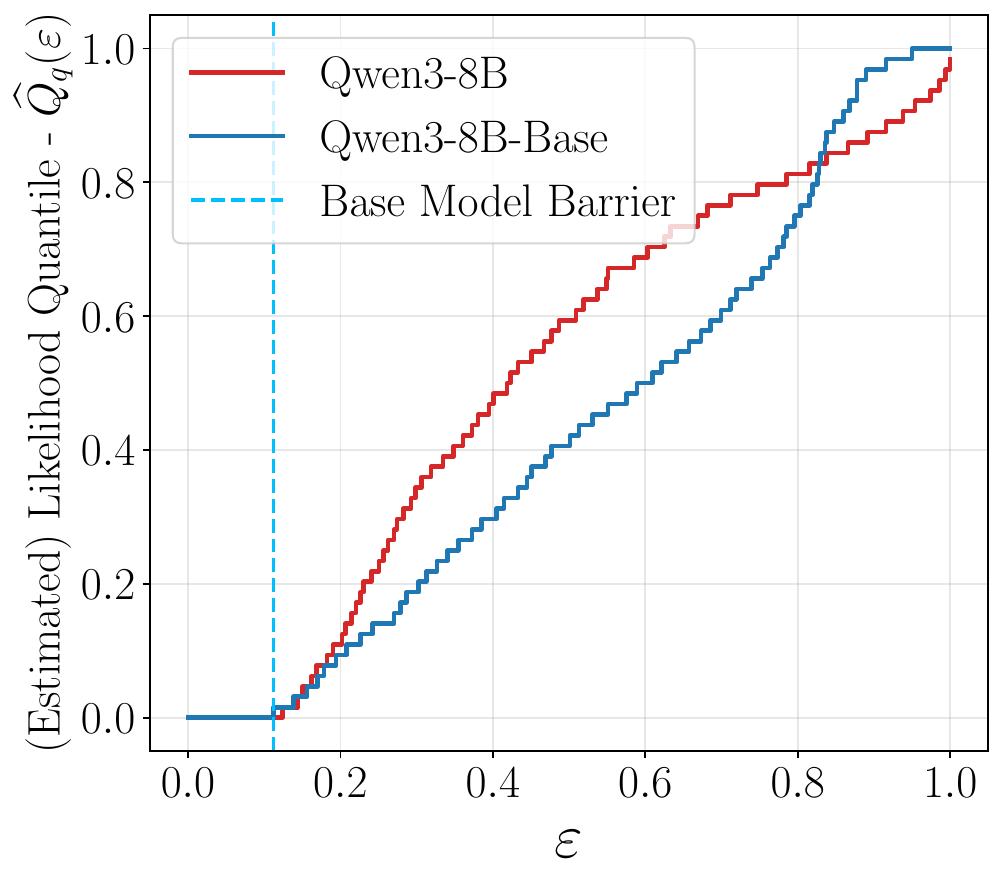}
        \caption{}
        \label{fig:qwen3_lq}
        \end{subfigure}
        \hfill
        \begin{subfigure}[b]{0.48\columnwidth}
            \centering
            \includegraphics[height=3.5cm]{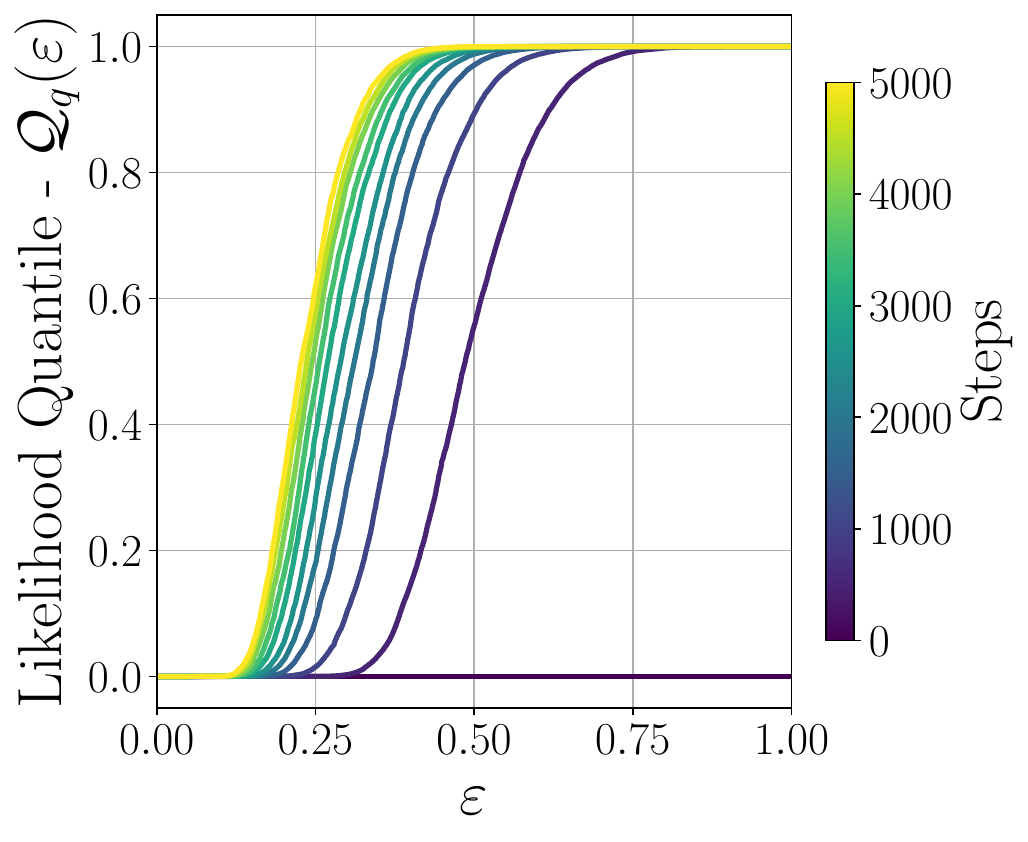}
            \caption{}
            \label{fig:synthetic_lq}
        \end{subfigure}
        \hfill
        \caption{
        (\subref{fig:qwen3_lq}) Estimated LQ of Qwen3-8B and Qwen3-8B-Base on Math500 and (\subref{fig:synthetic_lq}) true LQ of our linear autoregressive model trained with Adagrad with varying number of steps on a synthetic task.
        As the model trains for longer, likelihoods increase and $\lq_q(\varepsilon)$ gets closer to $1$ for fixed $\varepsilon$. Details provided in \Cref{sec:experiments}.
        }
        \label{fig:lq}
    \end{minipage}
    \hfill
    \begin{minipage}[c]{0.47\linewidth}
        \begin{algorithm}[H]
        \begin{algorithmic}
            \Require ($\bx^{(t)}$, $p_{\bw_t}$, $q_0$, $m$)
            \State $\by \sim p_{\bw_t}(\cdot \,|\, \bx^{(t)})$
            \If{$r^*(\bx^{(t)},\by) = 1$}
            \State \textbf{return} $\by$
            \EndIf
            \For{$i=1,\hdots,N$}
            \State $j=1$, $y_{i,j} \sim (\tfrac{1}{2}q_0 + \tfrac{1}{2}\Unif)(\cdot \,|\, \bx^{(t)},\by_{1:i-1})$
            \While{$j \leq m$ \textbf{and} $r^*(\bx^{(t)},\by_{1:i-1},y_{i,j}) = 0$}
            \State $j = j + 1,$
            \State $y_{i,j} \sim (\tfrac{1}{2}q_0 + \tfrac{1}{2}\Unif)(\cdot \,|\, \bx^{(t)},\by_{1:i-1})$
            \EndWhile
            \State $y_i = y_{i,j}$
            \EndFor
            \State \textbf{return} $\by$
        \end{algorithmic}
        \caption{Best-of-$m$ exploration for PG-PR.}
        \label{alg:bom_pr}
    \end{algorithm}
    \end{minipage}
\end{figure}

\section{Policy Gradient with Process Rewards}\label{sec:process_rewards}
In the previous section, we demonstrated the hardness of post-training when we only have access to outcome rewards. An alternative is to provide the model with reward feedback throughout the generation process \citep{setlur2025rewarding,cui2025process}.
While no longer a contextual bandit problem, the policy gradient iterates can be written as
\begin{equation}\label{eq:pg-pr}\tag{PG-PR}
    \begin{split}
        \bw_{t+1} &= \bw_t - \eta_t \nabla_{\bw} \ell_t(\bw_t),\\
    \ell_t(\bw_t) &\coloneqq -\sum_{i=1}^N A^{(t)}_i\log p_{\bw_t}(y^{(t)}_i\,|\,\bx,\by^{(t)}_{1:i-1})
    \end{split}
\end{equation}
where $\by^{(t)} \sim q_t(\cdot \,|\, \bx_t)$ is sampled from some behavior policy, and $A^{(t)}_i$ is the advantage of taking action $y^{(t)}_i$ at step $i$ of rollout $t$. Suppose we have a 0-1 reward function $r^* : \mathcal{X} \times \mathcal{Y}^+ \to \{0,1\}$ that satisfies
\begin{equation}\label{eq:prm}
    r^*(\bx,\by_{1:i}) = \indic[\by_{1:i}=\by^*_{1:i}(\bx)], \quad \forall\, i \in [N].
\end{equation}
The most basic estimator for advantage would be the (undiscounted) return from taking action $y^{(t)}_i$, i.e. $A^{(t)}_i = \sum_{j \geq i}r^*(\bx^{(t)},\by^{(t)}_{1:j})$. However, we can achieve optimal rates by using a simpler advantage $A^{(t)}_i = r^*(\bx,\by^{(t)}_{1:i})$, which is what we consider in the following.
This form of reward allows us to efficiently explore sequences with a base model that is only accurate for next token prediction (and not predicting the entire sequence).
In particular, we can define the Token-Level Likelihood Quantile function $\lq_q^\tl : [0,1] \to [0,1]$ as
\begin{equation*}
    \begin{split}
        \lq_q^\tl(\varepsilon) \coloneqq \sup \big\{&\alpha \in [0,1] :
    \mathbb{P}_{\bx}\big(\min_{i \in [N]} q(y^*_i(\bx) \,|\, \bx, \by^*_{1:i-1}(\bx)) \leq \alpha\big) \leq \varepsilon \big\}
    \end{split}
\end{equation*}
Unlike LQ, for a uniform base policy $q$, we have $\lq_q^\tl(\varepsilon) = k^{-1}$ for all $\varepsilon < 1$, notably independent of $N$. This definition leads to the following guarantee. The proofs of this section are deferred to \Cref{app:proof_process_rewards}.

\begin{theorem}[PG-PR with Adaptive Learning Rate]\label{thm:pg_pr_adaptive_lr}
Suppose \Cref{assump:seq_margin} holds. Given a base policy $q_0$ and for any $\varepsilon \in (0,1)$, let $q_t$ be the output of \Cref{alg:bom_pr} with $m = \lceil 2(\log N + 1) \cdot \min\big(\lq_{q_0}^\tl\big((1-o(1))\varepsilon\big)^{-1},k\big)\rceil$. Let $T$ and $Q$ denote the total number of post-training iterations (also samples) and reward (also sampling from $q_0$) queries respectively, and $\tau \sim \Unif(\{0,\hdots,T-1\})$. Consider \eqref{eq:pg-pr} iterates with the adaptive learning rate
$\eta_t = (4 + 2\Vert \nabla_{\bw} \ell_t(\bw_t)\Vert)^{-1}$.
Then, with $T = \tilde{\mathcal{O}}(1/(\gamma^2\varepsilon))$ iterations and
\[
Q = \tilde{\mathcal{O}}\Big(\frac{N\min\big(\lq_{q_0}^\tl\big((1-o(1))\varepsilon)^{-1},k\big)}{\gamma^2} + \frac{1}{\gamma^2\varepsilon}\Big)
\]
reward queries we have $\Earg{p_{\bw_\tau}(\by^*(\bx) \,|\, \bx)} \geq 1 - \varepsilon$.
\end{theorem}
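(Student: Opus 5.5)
The plan is to reduce \eqref{eq:pg-pr} with \Cref{alg:bom_pr} to an online perceptron-style analysis of adaptive-LR SGD, as in \Cref{prop:sgd} and \Cref{thm:pg_or_adaptive_lr}, and then pay separately for the reward queries.

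\textbf{Step 1: the update is an SGD step on the true label whenever exploration succeeds.} If the returned $\by^{(t)}$ equals $\by^*(\bx^{(t)})$, then $A^{(t)}_i=1$ for all $i$. Hence $\ell_t(\bw)=-\log p_{\bw}(\by^*(\bx^{(t)})\,|\,\bx^{(t)})$, and the step is exactly the adaptive-LR \eqref{eq:sgd} step. If some token is wrong, let $i_0$ be the first wrong position. Then $A^{(t)}_i=1$ for $i<i_0$ and $0$ afterwards, so the step is an SGD step on the correct prefix loss $-\sum_{i<i_0}\log p_{\bw}(y^*_i\,|\,\cdot)$. This is again a loss whose gradient correlates with $\bw^*$ by \Cref{assump:seq_margin}.

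I would rerun the potential argument with $\Phi_t=\twonorm{\bw_t-\bu}^2$ for $\bu=\bw^*\log(TNk\gamma^2)/\gamma$. Using the margin, each prefix loss $\ell_t$ satisfies $\ell_t(\bu)\lesssim 1/T$. The adaptive LR $(4+2\|\nabla\ell_t\|)^{-1}$ keeps $\eta_t\|\nabla\ell_t\|^2\lesssim \ell_t$-type quantities independently of $N$, via the self-bounding property $\|\nabla \ell\|\lesssim$ (error mass), exactly as in \Cref{prop:sgd}. This yields
\[
\sum_t \mathbb{E}\big[r^*(\bx^{(t)},\by^{(t)})\,(1-p_{\bw_t}(\by^*\,|\,\bx^{(t)}))\big]\lesssim \|\bu\|^2=\tilde{\mathcal{O}}(1/\gamma^2),
\]
where only rounds with full success are credited. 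Partial successes only help, since their contribution to the potential is nonnegative up to the same controlled terms.

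\textbf{Step 2: the success probability of \Cref{alg:bom_pr}.} Let $G$ be the event that $\min_i q_0(y^*_i\,|\,\bx,\by^*_{1:i-1})\ge \alpha\coloneqq\lq^\tl_{q_0}((1-o(1))\varepsilon)$, so $\Parg{G^c}\le(1-o(1))\varepsilon$. The mixture gives per-token success probability at least $\tfrac12\max(\alpha,1/k)\eqqcolon\beta$. With $m=\lceil 2(\log N+1)/\beta\rceil$, each token fails all $m$ tries with probability at most $e^{-2(\log N+1)}\le 1/(eN)^2$. A union bound over $N$ tokens then gives sequence success probability at least $1-1/N\ge$ a constant on $G$; the constant is what matters. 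So on $G$ each round succeeds with constant probability, whether or not line 1 succeeds.

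Combining with Step 1, the conditional error on $G$ averaged over $\tau$ is $\tilde{\mathcal{O}}(1/(\gamma^2 T))$. The total error is then at most $\Parg{G^c}+\tilde{\mathcal{O}}(1/(\gamma^2T))$, and choosing $T=\tilde{\mathcal{O}}(1/(\gamma^2\varepsilon))$ gives error $\le\varepsilon$ thanks to the $o(1)$ slack. This mirrors how \Cref{cor:pg_or_improved_test_error} follows from \Cref{thm:pg_or_adaptive_lr}.

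\textbf{Step 3: counting queries.} Each round costs one query for line 1. The exploration loop runs only when $\by\sim p_{\bw_t}$ is wrong, which happens with probability $1-p_{\bw_t}(\by^*\,|\,\bx^{(t)})$, and then costs at most $Nm$ queries. Hence
\[
Q\le T+Nm\sum_t\mathbb{E}[1-p_{\bw_t}].
\]
On $G$, the sum is $\tilde{\mathcal{O}}(1/\gamma^2)$ by Step 1, since success has constant probability. On $G^c$ the sum is at most $\varepsilon T$, and we cap its cost at $Nm\varepsilon T$.

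\textbf{Main obstacle.} This last term is the main obstacle: $Nm\varepsilon T$ would be $\tilde{\mathcal{O}}(Nm/\gamma^2)$, which matches the stated bound, but only if the $G^c$ rounds are accounted for correctly. On $G^c$ some token still has success probability at least $1/(2k)$, and $m\ge 2/\beta$ with $\beta\ge 1/(2k)$ in the min. So either the min is $k$, in which case exploration always succeeds with constant probability and $G^c$ is irrelevant; or it is $\alpha^{-1}$, in which case the crude $\varepsilon T$ bound suffices.

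The delicate part is Step 1 with partial-prefix updates and $N$-free step sizes: I must verify that the self-bounding inequality $\|\nabla\ell_t\|^2\lesssim$ (prefix error) holds uniformly in $N$. I would attempt this by bounding $\|\nabla\ell_t\|\le 2\sum_i(1-p(y^*_i\,|\,\cdot))$ and $\sum_i(1-p_i)\lesssim -\log\prod p_i$, then absorbing the resulting terms through the adaptive step size.
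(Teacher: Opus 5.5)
Your proposal is correct and is essentially the paper's proof. Like the paper, you:
\begin{itemize}
\item write \eqref{eq:pg-pr} as adaptive-step online gradient descent on the convex prefix loss $\ell_t$, using the linear self-bound $\twonorm{\nabla\ell_t}\le 2\ell_t$ from Lemma~\ref{lem:grad_fine_bound}, so the step you call delicate is immediate;
\item compare against $R\bw^*$ and convert online-to-batch via $\mathbb{E}[A^{(t)}_N\mid \bw_t,\bx^{(t)}]=q_t(\by^*(\bx^{(t)})\mid\bx^{(t)})$;
\item lower-bound the success probability of \Cref{alg:bom_pr} by a union bound over tokens;
\item count queries crudely, where $Nm\varepsilon T=\tilde{\mathcal{O}}(Nm/\gamma^2)$ is exactly the bound the paper uses, so your ``main obstacle'' is not an obstacle.
\end{itemize}

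There are two harmless slips. First, your $m=\lceil 2(\log N+1)/\beta\rceil$ is twice the theorem's $m=\lceil(\log N+1)/\beta\rceil$, with $\beta=\tfrac12\max(\alpha,1/k)$. The stated $m$ already gives per-token failure at most $1/(eN)$ and sequence success at least $1-e^{-1}$. (Your ``$1-1/N$'' is vacuous for $N=1$, but the bound $1-1/(e^2N)$ you actually get is fine.) Second, your query count omits the up to $N$ extra reward queries that \eqref{eq:pg-pr} needs for the advantages on failed rounds; this only changes $mN$ to $(m+1)N$.
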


We immediately observe that compared to Corollary \ref{cor:pg_or_improved_test_error} which required $\tilde{\mathcal{O}}((k^N + \varepsilon^{-1})/\gamma^2)$ reward queries in the worst-case, the above theorem requires only $\tilde{\mathcal{O}}((Nk+\varepsilon^{-1})/\gamma^2)$ reward queries.
Further, the requirement on the base model is much milder for this theorem. $q_0$ only needs to generate the next correct token conditioned on a partially correct response. 
One can observe that for any base model $q_0$, we have $\lq^{\tl}_{q_0}(\varepsilon) \geq \lq_{q_0}(\varepsilon)$.
In certain settings, the benefit of process reward models will be significant, and they will enable PG to go beyond the support of the base model and achieve significantly smaller expected test error.
One such example is provided below.

\begin{corollary}\label{cor:pg_pr_special}
    Suppose \Cref{assump:seq_margin} holds, and additionally, the feature map satisfies \Cref{assump:feature_permute}.
    Let $(\bw^\SGD_t)_{t=0}^{n-1}$ denote the SGD iterates with constant learning rate $\eta = 1/(2N)$, and define $q_0 = p_{\bar{\bw}^\SGD_n}$ where $\bar{\bw}^\SGD_n = \tfrac{1}{n}\sum_{t=0}^{n-1}\bw^\SGD_t$.
    Let $(\bw^\PG_t)_{t=0}^{T-1}$ denote the \eqref{eq:pg-pr} iterates in the setting of \Cref{thm:pg_pr_adaptive_lr}, and let $\tau \sim \Unif(\{0,\hdots,T-1\})$. 
    \begin{enumerate}
        \item To achieve $\Earg{p_{\bar{\bw}^\SGD_n}(\by^*(\bx)\,|\,\bx)} \geq 1 - \varepsilon$, we have a supervised pre-training sample complexity of $n = \tilde{\mathcal{O}}(N/(\gamma^2\varepsilon))$.

        \item To achieve $\Earg{p_{\bw^\PG_\tau}(\by^*(\bx)\,|\,\bx)} \geq 1 - \varepsilon$, we have a post-training iteration complexity $T = \tilde{\mathcal{O}}(1/(\gamma^2\varepsilon))$. Further, the sufficient number of reward queries is given by
        \[
        Q = \begin{cases}
            \tilde{\mathcal{O}}((N + \varepsilon^{-1)}/\gamma^2) & \text{if} \quad n = \tilde{\Omega}(1/(\gamma^2\varepsilon))\\
            \tilde{\mathcal{O}}((Nk+\varepsilon^{-1})/\gamma^2) & \text{otherwise}
        \end{cases}.
        \]
    \end{enumerate}
\end{corollary}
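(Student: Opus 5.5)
The plan is to derive both parts from results already stated: Proposition~\ref{prop:sgd} for the pre-training claim, and Theorem~\ref{thm:pg_pr_adaptive_lr} for the post-training claim. The only new ingredient is a lower bound on the token-level quantile $\lq^\tl_{q_0}$ of the SGD model, and that is where \Cref{assump:feature_permute} must be used.

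\textbf{Part 1.} This is a direct reading of Proposition~\ref{prop:sgd}(1). With $\eta = 1/(2N)$ we have
\[
1 - \Earg{p_{\bar{\bw}^\SGD_n}(\by^*(\bx)\,|\,\bx)} \lesssim \frac{N\log(nk\gamma^2)^2}{\gamma^2 n}.
\]
Setting the right-hand side to $\varepsilon$ and solving gives $n = \tilde{\mathcal{O}}(N/(\gamma^2\varepsilon))$, since the logarithm is absorbed into $\tilde{\mathcal{O}}$.

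\textbf{Part 2, iterations and worst case.} The bound $T = \tilde{\mathcal{O}}(1/(\gamma^2\varepsilon))$ is inherited verbatim from Theorem~\ref{thm:pg_pr_adaptive_lr}. For $Q$, the ``otherwise'' case also follows immediately from that theorem, because $\min(\lq^\tl_{q_0}(\cdot)^{-1},k) \le k$ always. So the only real content is the first case. There I would show that, for a suitable absolute constant $c$ (say $c=1/2$),
\[
\lq^\tl_{q_0}\big((1-o(1))\varepsilon\big) \ge c \quad \text{whenever } n = \tilde{\Omega}\big(1/(\gamma^2\varepsilon)\big).
\]
Plugging this into Theorem~\ref{thm:pg_pr_adaptive_lr} gives $m = \mathcal{O}(\log N)$ and $Q = \tilde{\mathcal{O}}(N/\gamma^2 + 1/(\gamma^2\varepsilon))$.

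\textbf{Token-level quantile: the naive route.} By Markov's inequality, it suffices to bound
\[
\mathbb{P}_{\bx}\big(\min_i q_0(y^*_i\,|\,\bx,\by^*_{1:i-1}) \le c\big).
\]
A naive attempt uses $\min_i q_0(y^*_i\,|\,\cdot) \ge p_{q_0}(\by^*\,|\,\bx)$ together with Part 1. This only gives the requirement $n \gtrsim N/(\gamma^2\varepsilon)$, which loses a factor of $N$. That is exactly the gap \Cref{assump:feature_permute} should close.

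\textbf{Token-level quantile: using the permutation assumption.} I would exploit the symmetry as follows. Under the permutation structure of the features, the $N$ token positions play exchangeable roles. The constant-LR SGD update, which sums $N$ token-level logistic gradients with step $1/(2N)$, is then equivalent to averaging the token losses. Consequently, the online/potential analysis behind Proposition~\ref{prop:sgd}(1) should control the average per-token loss
\[
\frac{1}{N}\sum_{i}\Earg{1 - q_0(y_i^*\,|\,\bx,\by^*_{1:i-1})} \lesssim \frac{\log^2(\cdot)}{\gamma^2 n}.
\]
This is the sequence bound divided by $N$. I would then upgrade the average to the minimum. By the symmetry, the margin $\binner{\bar{\bw}^\SGD_n}{\cdot}$ attained at any one position transfers to all positions of the same context. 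Hence the event that some token has likelihood at most $c$ coincides, up to constants, with the event for a single representative token, whose probability is $\tilde{\mathcal{O}}(1/(\gamma^2 n)) \le \varepsilon$ for $n = \tilde{\Omega}(1/(\gamma^2\varepsilon))$.

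\textbf{Main obstacle.} This last transfer step, from average to minimum, is where I expect the difficulty. It requires unpacking exactly what \Cref{assump:feature_permute} guarantees about the averaged iterate; for instance, that the iterate is invariant under the permutation, which holds if $\bw_0=\boldzero$ and the data distribution is permutation-symmetric. If that invariance fails, my fallback is a union bound over positions. That would cost only a $\log N$ factor if the per-token tails are shown to be exponential in the margin, and such a factor is hidden in $\tilde{\Omega}$.
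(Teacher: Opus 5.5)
Your Part~1, the ``otherwise'' case of Part~2, and your plan for the first case all match the paper. That plan is to show $\lq^{\tl}_{q_0}((1-o(1))\varepsilon)$ is bounded below by an absolute constant, so that $m=\mathcal{O}(\log N)$ in Theorem~\ref{thm:pg_pr_adaptive_lr}. The gap is the step you flag as the main obstacle, and neither of your proposed resolutions closes it.

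The per-token average bound you derive needs no assumption at all; it is just the sequence log-loss divided by $N$. The entire role of \Cref{assump:feature_permute} is the passage from average to minimum. You justify that passage via invariance of the SGD iterate, which you propose to obtain by also assuming a permutation-symmetric data distribution. The corollary makes no such hypothesis, and as shown below none is needed.

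Your fallback does not work either. Markov on each token plus a union bound gives
\[
\mathbb{P}\big(\min_i q_0(y^*_i\,|\,\bx,\by^*_{1:i-1})\le c\big)\le (1-c)^{-1}\sum_{i=1}^N \mathbb{E}\big[1-q_0(y^*_i\,|\,\bx,\by^*_{1:i-1})\big].
\]
This is $N$ times the average, so it reinstates exactly the factor $N$ you want to remove. Nothing in the online analysis provides the exponential per-token tails you would need instead.

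The missing observation is that \Cref{assump:feature_permute} makes the correct-token likelihoods identical across positions for \emph{every} $\bw$, so no property of the iterate is required. We have $\phi(\bx,\by^*_{1:i}(\bx))=\psi_1(\bx)$, and the $k$ candidate features at position $i$ are a permutation of $(\psi_1(\bx),\hdots,\psi_k(\bx))$. Hence neither the softmax numerator nor its denominator depends on $i$:
\[
p_{\bw}(y^*_i(\bx)\,|\,\bx,\by^*_{1:i-1}(\bx))=\frac{e^{\binner{\bw}{\psi_1(\bx)}}}{\sum_{j=1}^k e^{\binner{\bw}{\psi_j(\bx)}}}=p_{\bw}(\by^*(\bx)\,|\,\bx)^{1/N}.
\]
Therefore $Z(\bx)\coloneqq\min_i q_0(y^*_i(\bx)\,|\,\bx,\by^*_{1:i-1}(\bx))=q_0(\by^*(\bx)\,|\,\bx)^{1/N}$ exactly.

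The paper then uses the log-loss form of the SGD guarantee. This comes from Proposition~\ref{prop:pg_or_constant_lr_online} with $q_t=\delta_{\by^*}$, followed by online-to-batch conversion and Jensen for the averaged iterate, and gives $\mathbb{E}[-\log q_0(\by^*(\bx)\,|\,\bx)]=\tilde{\mathcal{O}}(N/(\gamma^2 n))$. You need this form rather than Proposition~\ref{prop:sgd}(1) as stated, because a bound on $1-\mathbb{E}[p]$ cannot be split across tokens. Applying Markov to $-\tfrac{1}{N}\log q_0(\by^*(\bx)\,|\,\bx)$ yields $\mathbb{P}(Z(\bx)<\alpha)=\tilde{\mathcal{O}}\big(1/(n\gamma^2\log(1/\alpha))\big)$. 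Hence $\lq^{\tl}_{q_0}(\varepsilon)\ge\exp\big(-\tilde{\mathcal{O}}(1/(n\gamma^2\varepsilon))\big)$, which is an absolute constant once $n=\tilde{\Omega}(1/(\gamma^2\varepsilon))$.

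With the identity in hand, your own Markov argument on a single token also works. Each token's error then equals the per-token average, and the event that the minimum is at most $c$ coincides with the event for any one position.
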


The above clearly demonstrates \eqref{eq:pg-pr} with an SGD-trained base policy as an optimal combination of PG and SGD. While learning with (constant LR) SGD alone would require a number of supervised samples scaling linearly with $N$, and doing PG alone would require a number of reward queries scaling with $Nk$; their combination removes the $N$ factor in the supervised sample complexity and the $k$ factor in the number of reward queries.
We remark however that this still does not outperform SGD with adaptive LR with the same number of supervised samples. We leave such an attempt to future work.

\section{Lower Bounds}\label{sec:lower_bounds}
In this section, we argue about the tightness and optimality of the policy gradient algorithms studied for post-training, and of SGD with adaptive LR for pre-training. To establish post-training optimality, it is natural to define classes of algorithms with access to a base model and a reward function.
Our lower bounds cover both online learning settings where the learner receives only one reward feedback per iteration and its goal is to minimize the number of mistakes, and the statistical setting where the learner has access to the reward model and can make multiple queries on the same context, and its goal is to have high accuracy on a new i.i.d.\ sample.

We are specifically interested in the role the base model plays in the lower bounds. Hence, we formalize the definition of online and statistical learners with access to a base model as an oracle, and we believe these definitions might be useful in subsequent works on this topic.

\begin{definition}[Online Learner with Bandit Feedback]\label{def:online_learner}
    An online learner with access to base model $q(\cdot \,|\, \bx)$ is a potentially randomized algorithm that at every iteration receives $\bx_t$, predicts $\by_t$ (using its history and the base model), receives reward $r(\bx_t,\by_t)$, and proceeds to the next round.
\end{definition}

\begin{definition}[Statistical Learner with Bandit Feedback]\label{def:stat_learner}
    A statistical learner with access to base model $q(\cdot \,|\, \bx)$ is a potentially randomized algorithm that receives $n$ samples $\{\bx_t\}_{t=1}^n$, and on sample $t$ makes $m_t$ reward queries $\{r(\bx_t,\by^{(i)}_t)\}_{i=1}^{m_t}$ where $\by^{(i)}_t$ can be chosen adaptively using access to the base model. The algorithm then constructs the training set $S=\{(\bx_t,\by^{(i)}_t,r(\bx_t,\by^{(i)}_t)) \,:\, t \in [n], i \in [m_t]\}$ and returns a prediction rule $\hat{\by}(S) : \mathcal{X} \to \mathcal{Y}^N$.
\end{definition}

The performance of an online learner is measured by the number of mistakes it makes over $T$ rounds, i.e.\ $\sum_{t=0}^{T-1}\indic[\by_t \neq \by^*(\bx_t)]$, while the performance of a statistical learner is measured by its expected error over a new test sample, i.e.\ $\mathbb{P}_{\bx}(\hat{\by}(S)(\bx) \neq y^*(\bx))$.
Note that \eqref{eq:pg-or} is an example of both algorithms. It can be used to make online predictions as $\by_t \sim p_{\bw_t}(\cdot \,|\, \bx_t)$ (assuming the behavior policy does not make additional reward queries), and it can also be used to make predictions on new test samples at the end of training via $\hat{\by} \sim p_{\bar{\bw}}(\cdot \,|\, \bx)$ where $\bar{\bw}$ is the averaged iterate.

With the above definitions, we are ready to state the lower bound for statistical learners, which establishes the near optimality of the PG variant with adaptive learning rate presented in Corollary \ref{cor:pg_or_improved_test_error}. The proofs of this section are deferred to \Cref{app:lower_bounds}.
\begin{theorem}[Lower Bound for Statistical Learner under ORM]\label{thm:or_lower_bound_stat}
    Let $(\bx_t)_{t \geq 0}$ be an i.i.d. sequence drawn from some distribution $\mathcal{D}$. Consider the statistical learner of Definition \ref{def:stat_learner}, where $r$ is the outcome reward. 
    For every $\gamma \in (0,N^{-1})$, $\alpha \in [k^{-N},0.5]$, $\varepsilon^*,\varepsilon \leq c$ for a sufficiently small absolute constant $c > 0$, and $D \geq k/\gamma^2$, there exists a base policy $q$ satisfying
    \begin{equation}\label{eq:lb_base_cq}
        \lq_{q}(\varepsilon) = \begin{cases}
        \alpha & \text{if} \quad \varepsilon \geq \varepsilon^*\\
        k^{-N} & \text{if} \quad \varepsilon < \varepsilon^*
    \end{cases}
    \end{equation}
    and a feature map $\phi$, such that for every learner with access to $q$, there exist a distribution $\mathcal{D}$ and $\bw^*$ that satisfy \Cref{assump:seq_margin} with margin $\gamma$ and to achieve expected test error less than $\varepsilon$ the learner requires at least $Q = \tilde{\Omega}\big(\lq_q\big((1+o(1))\varepsilon\big)^{-1}/\gamma^2\big)$ reward queries and at least $T = \Omega(1/(\gamma^2 \varepsilon))$ samples.
\end{theorem}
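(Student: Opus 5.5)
We will build a hard instance where contexts are split into two groups: a mass of easy contexts, on which the base policy is informative, and a rare group of hard contexts, on which the base policy is uniform. The feature map is orthogonal across contexts and tokens. Concretely, we take $\mathcal{X}=\{\bx_1,\hdots,\bx_M\}$ with $M\asymp 1/\gamma^2$ (up to a factor absorbing $N$). For each context $j$ and each prefix we assign features $\phi(\bx_j,\by_{1:i-1},y)$ that lie in a block of coordinates private to context $j$. Within that block, the $k$ candidate next tokens are mapped to orthonormal directions scaled by $1/\sqrt N$-type constants, which keeps $\twonorm{\phi}\le 1$. The ground-truth labels $\by^*(\bx_j)$ are drawn independently and uniformly from $\mathcal{Y}^N$.

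The margin comes from $\bw^*=\sum_j \bw^*_j/\sqrt M$, where $\bw^*_j$ puts weight on the correct-token directions in block $j$. This gives margin of order $1/\sqrt{M}$ per token. The condition $\gamma<1/N$ and $D\ge k/\gamma^2$ provide enough room, so that Assumption~\ref{assump:seq_margin} holds with margin $\gamma$ for \emph{every} labeling. The distribution $\mathcal{D}$ puts mass $1-\varepsilon^*$ on easy contexts and spreads mass $\varepsilon^*$ uniformly over $M'\asymp 1/\gamma^2$ hard contexts, each of mass about $\varepsilon^*\gamma^2$. The base policy $q$ puts probability $\alpha$ on the true response for easy contexts and is uniform on hard ones. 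Since $q$ depends on the labels of easy contexts, it is a fixed object, and it yields exactly the quantile profile in \eqref{eq:lb_base_cq}. A single easy context does not suffice here, because the LQ at $\varepsilon\ge\varepsilon^*$ must equal $\alpha$ rather than exceed it.

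\emph{Sample lower bound.} We use the standard unseen-context argument. If $T\le c/(\gamma^2\varepsilon)$ and $\varepsilon\le\varepsilon^*$, each hard context has mass $\varepsilon^*/M'$. When the mass per context is on the order of $\gamma^2\varepsilon$ (we tune $\varepsilon^*$ or the number of hard contexts accordingly, which the construction's freedom permits), a constant fraction of the hard mass, of order $\varepsilon$, lies on contexts never sampled. On these contexts the labels are independent of $S$, so the learner errs with probability $1-k^{-N}\ge 1/2$. For the regime $\varepsilon>\varepsilon^*$, the same counting argument is applied within the easy contexts. It can also be invoked directly from the VC lower bound noted after Proposition~\ref{prop:sgd}.

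\emph{Query lower bound.} This is the main step. To achieve error below $\varepsilon$, the learner must identify the correct response on a set of contexts of total mass at least $m_\varepsilon-\varepsilon$, where $m_\varepsilon$ is the mass of the group whose likelihood quantile is $\lq_q((1+o(1))\varepsilon)$. When $\varepsilon<\varepsilon^*$, this means a constant fraction of the roughly $1/\gamma^2$ hard contexts must be solved. Solving a hard context means finding its uniformly random label in $\mathcal{Y}^N$ using 0/1 outcome queries, and the base model carries no information about it. By a standard needle-in-haystack argument, each query succeeds with probability at most $1/(k^N-\#\text{prior queries})$, so $\Omega(k^N)$ queries per context are needed to succeed with constant probability. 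Summing over $\Omega(1/\gamma^2)$ contexts gives $Q=\Omega(k^N/\gamma^2)$.

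When $\varepsilon\ge\varepsilon^*$, we instead randomize the easy labels. The easy label is hidden among a set of $1/\alpha$ candidates, with $q$ uniform on that set, so that $q(\by^*\,|\,\bx)=\alpha$. The same argument then gives $\Omega(1/(\alpha\gamma^2))$ queries.

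The delicate points are, first, making the base model $q$ fixed while the labels remain random: $q$ must reveal only the candidate set, not which element is correct. Second, the quantile bookkeeping that produces the $(1+o(1))\varepsilon$ argument and the log factors hidden in $\tilde\Omega$. Finally, a Yao-style averaging over random labels converts the averaged lower bound into the existence of a bad $\mathcal{D}$ and $\bw^*$ for each learner.
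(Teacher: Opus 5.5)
Your query lower bound follows the paper's route. Hard contexts have labels uniform on $\mathcal{Y}^N$ with $q$ uniform. Easy contexts have labels uniform on a fixed candidate set of size $1/\alpha$ on which $q$ is uniform, so $q$ is fixed while the labels stay random. Then come the guessing lemma and averaging over labels.

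The gap is in the sample lower bound $T=\Omega(1/(\gamma^2\varepsilon))$, which your instance cannot deliver. The ``tuning'' you invoke is not available. $\varepsilon^*$ is an input of the theorem. The number of contexts in each group is capped at $O(1/\gamma^2)$ by the unit-norm budget of $\bw^*$, since each context lives in a private block and needs norm $\Omega(\gamma)$ there. Hence every hard context has mass $\Omega(\varepsilon^*\gamma^2)$ and every easy one has mass $\Omega(\gamma^2)$, regardless of $\varepsilon$. After $T=O\big(\gamma^{-2}(\varepsilon^*)^{-1}\log(1/\varepsilon)\big)$ samples, the expected unseen mass is below $\varepsilon/2$. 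A learner that queries exhaustively on every seen context then has error below $\varepsilon$. So for $\varepsilon\ll\varepsilon^*/\log(1/\varepsilon)$ (e.g.\ $\varepsilon^*$ constant, $\varepsilon\to 0$), the claimed bound is false for your distribution. Likewise, your unseen-context counting within the easy group for $\varepsilon\ge\varepsilon^*$ only yields a bound logarithmic in $1/\varepsilon$. Citing the VC lower bound does not close this: its hard distribution would have to be embedded into this same instance without changing the quantile profile of $q$, and that embedding is the missing step.

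The missing idea, which is what the paper adds, is a separate low-mass subpopulation. Take $\Theta(1/\gamma^2)$ extra contexts of total mass $\delta=4\varepsilon$. Split this mass as $(1-\varepsilon^*)\delta$ on contexts where $q$ is the same uniform-on-candidates policy and $\varepsilon^*\delta$ on contexts where $q$ is uniform. Rescale the main groups to $(1-\varepsilon^*)(1-\delta)$ and $\varepsilon^*(1-\delta)$, so $\lq_q$ is unchanged. The extra labels are uniform on $\{(1,1,\hdots,1),(2,1,\hdots,1)\}$. This set lies inside the candidate set, costs only $\gamma^2$ of the norm budget per context, and forces error $1/2$ on every unseen such context, which gives $T\gtrsim 1/(\gamma^2\varepsilon)$.

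A related budgeting point affects your easy group. If each easy context uses private directions at all $N$ positions, it costs $\Theta(N\gamma^2)$, and you get only $\Omega(1/(N\alpha\gamma^2))$ queries. That is a polynomial loss when $\alpha$ is not tiny. The paper lets the candidates vary only in the first $L=\log_k(1/\alpha)$ tokens and routes the fixed all-ones tail of every context through one shared direction $\bu$. Each easy context then costs $\gamma^2 L$, and the bound becomes $\alpha^{-1}/(\gamma^2\log_k\alpha^{-1})$.
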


We highlight a contrast with the result of \citep{jia2025do}. They show that learning a reward model using trajectory samples can be done efficiently with a polynomial dependence on $N$. 
Our lower bound however, states that the number of reward queries, either true reward or queries to a learned reward model, may have to scale exponentially with $N$ in the worst-case, which provides a computational, rather than a statistical, barrier.

Next, we establish the near optimality of the online PG variant with adaptive learning rate of Corollary \ref{cor:pg_or_test_error} among online learners.

\begin{theorem}[Lower Bound for Online Learner under ORM]\label{thm:or_lower_bound_online}
    Consider the same setting as \Cref{thm:or_lower_bound_stat}, except with the online learner of Definition \ref{def:online_learner}. There exists a base policy $q$ satisfying \eqref{eq:lb_base_cq} and a feature map $\phi$ such that for every learner, there is a distribution such that to achieve expected test error $\varepsilon$ on a new sample the learner needs at least $T = \tilde{\Omega}\big(\lq_q\big((1+o(1))\varepsilon\big)^{-1}/(\gamma^2\varepsilon)\big)$ reward queries (equivalently samples).
    Further, over $T \gtrsim k^N/(\gamma^2\varepsilon^* N)$ rounds the expected number of mistakes is at least $\Omega(k^N/(\gamma^2 N))$.
\end{theorem}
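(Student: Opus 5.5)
We reduce the online setting to a hard instance built from independent ``needle-in-a-haystack'' blocks, reusing the construction from the proof of \Cref{thm:or_lower_bound_stat}.

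\textbf{Construction.} We take an orthonormal family of feature directions, with $M \asymp 1/(\gamma^2 N)$ independent ``hard'' prompts. Each hard prompt $\bx_j$ has its own block of coordinates in $\reals^D$; this is possible since $D \geq k/\gamma^2$. The correct response $\by^*(\bx_j)$ is drawn uniformly from $\mathcal{Y}^N$, independently across $j$. The feature map $\phi(\bx_j,\by_{1:i-1},y)$ places the token $y$ at position $i$ in orthogonal coordinates of block $j$. We then set $\bw^*$ proportional to the sum over $j$ and $i$ of the coordinates indexed by $(j,i,y^*_i(\bx_j))$, normalized. Its norm is then $\sqrt{MN}$ before normalization, so the margin is $1/\sqrt{MN} \asymp \gamma$. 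This is exactly where the factor $N$ in $k^N/(\gamma^2N)$ comes from. The constraint $\gamma<N^{-1}$ guarantees $M\geq 1$.

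\textbf{Base model and distribution.} The base policy $q$ assigns likelihood $\alpha$ to the correct answer on ``easy'' prompts, which have mass $1-\varepsilon^*$. On these we can also embed the correct response, since $q$ is part of the construction. On the $M$ hard prompts, which carry total mass $\varepsilon^*$, $q$ is uniform. This yields \eqref{eq:lb_base_cq}.

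\textbf{Information-theoretic step.} With bandit 0-1 feedback, an incorrect guess on hard prompt $\bx_j$ eliminates only one of the $k^N$ candidate responses. Until the correct response is hit, the posterior over $\by^*(\bx_j)$ is uniform on the remaining candidates, and $q$ carries no information about it. A standard argument then applies: the $s$-th guess on a fresh prompt succeeds with probability $1/(k^N-s+1)$. Hence the expected number of mistakes before first success on $\bx_j$ is at least $(k^N-1)/2$, whenever that prompt is presented at least $k^N/2$ times.

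\textbf{Counting mistakes.} Over $T\gtrsim k^N/(\gamma^2\varepsilon^*N)$ rounds, each hard prompt appears about $T\varepsilon^*/M\gtrsim k^N$ times in expectation. Concentration, or simply Markov/Chernoff applied to the counts, gives at least about $k^N/2$ appearances with constant probability. Summing over the $M$ blocks gives $\Omega(Mk^N)=\Omega(k^N/(\gamma^2N))$ mistakes.

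\textbf{Query lower bound for test error.} For the first claim, we place the hard set at mass $\approx (1+o(1))\varepsilon$, split among $M'\asymp 1/\gamma^2$ prompts. This is consistent with the quantile $\lq_q((1+o(1))\varepsilon)$ being either $\alpha$ or $k^{-N}$, depending on whether $\varepsilon\ge\varepsilon^*$. If $\varepsilon<\varepsilon^*$, the hard prompts are uniform under $q$. If $\varepsilon\geq\varepsilon^*$, we instead make the relevant hard prompts have $q$-likelihood $\alpha$, with $q$ spreading its mass over $1/\alpha$ candidates that contain the truth. To push error below $\varepsilon$, the learner must identify the answer on a constant fraction of these prompts, each of which requires about $1/\lq_q$ attempts. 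In the online model each attempt costs a round, and a given prompt appears only at rate $\varepsilon/M'=\gamma^2\varepsilon$ per round. Combining these gives $T=\tilde\Omega(\lq_q^{-1}/(\gamma^2\varepsilon))$.

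\textbf{Main obstacle.} The delicate part is this last coupling: showing that, since the learner cannot choose which prompt it sees, the number of rounds must be inflated by $1/(\gamma^2\varepsilon)$ times the per-prompt search cost. Controlling this requires a careful application of a Wald-type or balls-in-bins argument over random arrival counts, together with the constant-fraction requirement.
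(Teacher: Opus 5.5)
The mistake bound is essentially right, but the first claim (the test-error bound) is not established. You leave the step you call the main obstacle open, and the heuristic you give for it does not fit your own construction.

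\textbf{The regime $\lq_q((1+o(1))\varepsilon)=k^{-N}$.} Here $\varepsilon^*>(1+\tfrac{1}{\log(1/\varepsilon)})\varepsilon$ and the hard prompts must be uniform under $q$. Then \eqref{eq:lb_base_cq} caps their total mass at $\varepsilon^*$, which may be only $(1+\Theta(1/\log(1/\varepsilon)))\varepsilon$. Reaching error $\varepsilon$ then forces the learner to solve only a $\Theta(1/\log(1/\varepsilon))$ fraction of the hard prompts, not a constant fraction.

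\textbf{The missing step.} It is short and needs no Wald or balls-in-bins argument. Let the hard block $\mathcal{X}_i$ consist of $I_i$ prompts of total mass $p_i$, with labels uniform over $m_i$ candidates. The base model $q$ must be label-independent on these prompts, e.g.\ uniform over the candidate set, not ``embedding'' the truth. Condition on the sequence of arriving prompts, which is independent of the labels. For a test prompt $\bx\in\mathcal{X}_i$ that appeared $m(\bx)$ times during training, the test prediction is the learner's $(m(\bx)+1)$-st guess on $\bx$. Everything else it has seen is independent of $\by^*(\bx)$, so by Lemma~\ref{lem:guessing} it errs with probability at least $(m_i-1-m(\bx))/m_i$. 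This bound is \emph{linear} in $m(\bx)$, so averaging with $\mathbb{E}[m(\bx)\mid\bx\in\mathcal{X}_i]=Tp_i/I_i$ gives $\varepsilon\ge p_i(1-m_i^{-1}-Tp_i/(m_iI_i))$, that is, $T\ge (m_iI_i/p_i)(1-m_i^{-1}-\varepsilon/p_i)$. Now choose the block in each regime:
\begin{itemize}
\item When $\lq_q((1+o(1))\varepsilon)=\alpha$, take $p_i\asymp\varepsilon$ inside the likelihood-$\alpha$ region, with $m_i=1/\alpha$.
\item When it equals $k^{-N}$, take $p_i=(1+\tfrac{1}{2\log(1/\varepsilon)})\varepsilon<\varepsilon^*$, with $m_i=k^N$.
\end{itemize}
This yields the claim. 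The $\log(1/\varepsilon)$ loss is exactly the factor $1-\varepsilon/p_i$.

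\textbf{The size of the hard block.} Your $M'\asymp 1/\gamma^2$ is not admissible. A prompt whose label is uniform over $k^L$ sequences spends $L$ directions of weight $\gamma$ in $\bw^*$. More generally, a teacher-forced token-level Perceptron makes $O(1/\gamma^2)$ token mistakes under \Cref{assump:seq_margin}, while $M'$ fresh such prompts force $\Omega(M'L)$ of them. So $I_i\asymp 1/(\gamma^2L)$ with $L=N$ or $L=\log_k(1/\alpha)$, as in your own first construction. This costs only the factor $\log_k(1/\lq_q((1+o(1))\varepsilon))$, which $\tilde\Omega$ hides.

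\textbf{The mistake bound.} This part is correct and uses the paper's instance. The paper counts per round rather than per prompt, using the same linearity: a hard prompt seen $n$ times before is missed with probability at least $(k^N-1-n)/k^N$. So only $\mathbb{E}[n]\lesssim t\varepsilon^*/I_4$ is needed, and summing the resulting $\Omega(\varepsilon^*)$ terms over $t\lesssim k^NI_4/\varepsilon^*$ gives $\Omega(k^NI_4)$. Your per-prompt route also works, but it needs a lower-tail bound on arrival counts, which Chernoff gives and Markov does not. Also, $k^N/2$ appearances give about $3k^N/8$ expected mistakes rather than $(k^N-1)/2$; this is harmless.
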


\subsection{Lower Bounds on the Base Model}
So far we established that under an ORM, the optimal number of reward queries to achieve expected error $\varepsilon$ scales as $\lq_q(\varepsilon)^{-1}/\gamma^2$. One may hope that there is a supervised pre-training algorithm that with a modest number of samples $n$ can achieve an LQ that is polynomially (rather than exponentially) small in $N$. The following lower bound shows that such an algorithm does not exist in general.

\begin{theorem}[Lower Bound for Pre-Training]\label{thm:pre_lower_bound}
    Let $S_n \coloneqq \{(\bx^{(t)},\by^*(\bx^{(t)}))\}_{t=1}^n$ be $n$ i.i.d. samples drawn from some distribution $\mathcal{D}$. Consider a learner that receives $S_n$ and outputs a learned weight $\widehat{\bw}(S_n)$ and a corresponding autoregressive base policy $q = p_{\widehat{\bw}(S_n)}$.
    For every $\gamma \in (0,0.5]$, $D \geq k/\gamma^2$, and $\varepsilon < c$ for some absolute constant $c > 0$, there exists a distribution $\mathcal{D}$ that satisfies \Cref{assump:seq_margin}, we have $\lq_q(\varepsilon) \leq k^{-N}$ for all $n \lesssim 1/(\gamma^2\varepsilon)$.
\end{theorem}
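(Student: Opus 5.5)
We construct a hard distribution on which the learner sees too few samples, so that on an $\Omega(\varepsilon)$ fraction of the test mass the context is fresh and its label is uniformly random and independent of $S_n$. Any autoregressive policy then assigns, in expectation, likelihood at most $k^{-N}$ to the correct response on such contexts. Turning this into a bound on the quantile of $q(\by^*(\bx)\,|\,\bx)$ is the delicate part.

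\textbf{Construction.} Take $M \asymp 1/\gamma^2$ orthonormal directions. Embed the features so that every context $\bx_j$, $j\in[M]$, and every prefix have their own token-level block, for instance
\[
\phi(\bx_j,\by_{1:i-1},y)=\be_{(j,i,\by_{1:i-1},y)}\otimes u,
\]
suitably scaled. For each $j$, draw $\by^*(\bx_j)$ uniformly from $\mathcal{Y}^N$, independently across $j$. Choose $\bw^*$ with margin $\gamma$ on all $M$ contexts at all positions. The obstacle is fitting $M\cdot N$ independent token-level constraints into a norm-one $\bw^*$ with margin $\gamma$. The clean way is the $N=1$-style construction: let the feature depend on $(\bx,y_i)$ through $\vect(\be_{y}\bx_j^\top)$, plus a position index that shares the same context vector, so that $\bw^*$ needs only $k$ rows per coordinate. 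Then $\bw^*=\gamma\sum_j \sum_i \ldots$ has norm $\lesssim \gamma\sqrt{M N}$. Since the statement allows $D\ge k/\gamma^2$ but not $N$, I would instead let each context's labels be coupled by a single direction per context, which reduces to norm $\gamma\sqrt{M}\le 1$. I expect this margin bookkeeping to be the main technical step, and $\gamma\le 0.5$ gives the needed room.

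\textbf{Distribution.} Put mass $1-c'\varepsilon'$ on $\bx_0$, for some $\varepsilon' \asymp \varepsilon$ chosen below, and spread the remaining mass $c'\varepsilon'$ uniformly over $\bx_1,\dots,\bx_M$.

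\textbf{Unseen contexts.} With $n\lesssim 1/(\gamma^2\varepsilon)$, the expected number of samples landing in the rare block is $n c'\varepsilon'/1 \lesssim M/2$. So at least half of the rare contexts are unseen, with constant probability, and the unseen rare mass exceeds $2\varepsilon$ once the constants are tuned.

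\textbf{Likelihood on unseen contexts.} Fix $S_n$. For an unseen $\bx_j$, $\by^*(\bx_j)$ is uniform and independent of $\widehat{\bw}$, so
\[
\mathbb{E}\big[q(\by^*(\bx_j)\,|\,\bx_j)\big]=k^{-N}.
\]

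\textbf{From expectation to quantile.} Averaging does not by itself give the pointwise statement $q(\by^*\,|\,\bx)\le k^{-N}$ on an $\varepsilon$-mass. Two routes handle this:
\begin{itemize}
\item Because the $M$ unseen labels are independent, Markov/averaging gives that at least half of them have likelihood $\le 2k^{-N}$. We then need exactly $k^{-N}$. Note that $\lq$ is a sup over $\alpha$ with $\mathbb P(q\le\alpha)\le\varepsilon$, so $\lq_q(\varepsilon)\le k^{-N}$ holds whenever $\mathbb P(q\le k^{-N})>\varepsilon$.
\item Alternatively, use the minimum over the random labels. For a policy $q(\cdot\,|\,\bx_j)$ on $k^N$ sequences, exactly those $\by$ with $q(\by\,|\,\bx_j)\le k^{-N}$ form a set of size at least $1$, so a uniform $\by^*$ hits it with probability at least $k^{-N}$. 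That is too weak on its own.
\end{itemize}

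The better route is to have the lower bound hold for some adversarial $\mathcal D$, which is exactly what the statement allows, since "there exists a distribution" and the learner is fixed. By the probabilistic method, there is a labeling under which, on the unseen contexts, $q(\by^*\,|\,\bx_j)\le k^{-N}$ for a constant fraction of them: for each $\bx_j$, pick $\by^*(\bx_j)$ to be a minimizer of $q(\cdot\,|\,\bx_j)$, which always has mass $\le k^{-N}$.

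The remaining subtlety is that $\widehat{\bw}$ depends on the labels of seen contexts only, so choosing unseen labels adversarially is consistent. Formally, average over the uniform labeling to find a realization where the event holds with mass $>\varepsilon$, using symmetrization of the minimizer choice. Conclude $\lq_q(\varepsilon)\le k^{-N}$.
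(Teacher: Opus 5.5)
\paragraph{Gap: the construction and the exact threshold $k^{-N}$.} The proof breaks at the two places you flag as delicate.

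First, the construction cannot be realized. Labels drawn uniformly from $\mathcal{Y}^N$ on $M\asymp 1/\gamma^2$ contexts cannot all satisfy Assumption~1 when $D$ is only $k/\gamma^2$. Suppose two labelings differ, realized by $\bw_1,\bw_2$. Take a context where they differ and the first position where they differ, so the prefixes agree and the tokens are $a\neq b$. The margins give $\langle \bw_1-\bw_2,\phi(\cdot,a)-\phi(\cdot,b)\rangle\ge 2\gamma$, hence $\|\bw_1-\bw_2\|_2\ge\gamma$. So at most $(3/\gamma)^{D}$ labelings are realizable, far fewer than $k^{NM}$ when $N$ is large. Your fallback of one direction per context couples the tokens. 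The label is then uniform over a small candidate set $C$, and the identity $\mathbb{E}[q(\by^*(\bx_j)\mid\bx_j)]=k^{-N}$ no longer holds.

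Second, and more fundamentally, $\mathcal{Q}_q(\varepsilon)\le k^{-N}$ requires $\mathbb{P}_{\bx}(q(\by^*(\bx)\mid\bx)\le k^{-N})>\varepsilon$ at exactly the threshold $k^{-N}$. Markov only gives $2k^{-N}$. For a generic policy, a label uniform on $C$ lands in $\{q\le k^{-N}\}$ with probability as small as $k^{-N}$ when $C=\mathcal{Y}^N$. It can even be $0$ when $|C|<k^N$ and the policy class can spread its mass over $C$.

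Your fix of taking $\by^*(\bx_j)$ to be a minimizer of $q(\cdot\mid\bx_j)$ is circular. The labels are part of $\mathcal{D}$ and must be fixed before $S_n$ is drawn. The learner, however, can make the low-likelihood sequence of an unseen context depend on $S_n$, e.g.\ on the seen labels. So neither a fixed nor a random labeling is guaranteed to hit it, and ``symmetrization'' does not supply this.

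\paragraph{Missing idea: use the model class.} You need to exploit the restriction $q=p_{\widehat{\bw}}$ through the feature map, so that the bound holds for every $\bw$ simultaneously. The paper does this as follows.
\begin{enumerate}
\item Take $2I_0=1/(2\gamma^2)$ contexts with labels $(B_i,\ldots,B_i)$, where $B_i\sim\Unif\{1,2\}$.
\item Let the feature depend only on the last token: $\be_{i,1}$ for token $1$, $\be_{i,2}$ for token $2$, and $(\be_{i,1}+\be_{i,2})/2$ for every other token.
\item The logits are then $a$, $b$ and $(a+b)/2$. By AM--GM the normalizer is at least $ke^{(a+b)/2}$, so $p_{\bw}(1\mid\bx)\,p_{\bw}(2\mid\bx)\le k^{-2}$.
\item Hence $\min\bigl(p_{\bw}(\mathbf{1}_N\mid\bx),p_{\bw}(\mathbf{2}_N\mid\bx)\bigr)\le k^{-N}$ for all $\bw$.
\item On an unseen context, $B_i$ is independent of $\widehat{\bw}$. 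So $q(\by^*\mid\bx)\le k^{-N}$ with probability at least $1/2$, at exactly the required threshold.
\item The vector $\bw^*=\gamma\sum_i(\be_{i,B_i}-\be_{i,3-B_i})$ has unit norm and margin $\gamma$.
\end{enumerate}
Your unseen-mass counting then finishes the proof; the paper uses rare mass $\delta=\tfrac12\min(1,I_0/n)$.

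Tying the $k-2$ extra tokens to the average feature is essential. With plain $\vect(\be_y\bx^\top)$ features and the $k$ constant sequences as candidates, you would only get probability $1/k$ and lose a factor $k$ in the admissible $n$.
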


The above lower bound implies that to have $\lq_q(\varepsilon) > k^{-N}$, any pre-training algorithm that outputs an autoregressive linear model needs $n \gtrsim 1/(\gamma^2\varepsilon)$. On the other hand, SGD (with adaptive LR) has an expected error of $\tilde{\mathcal{O}}(1/(\gamma^2 n))$ over $n$ samples. Thus with $n \gg 1/(\gamma^2\varepsilon)$ samples, SGD can already achieve an expected error of order $\varepsilon$ (up to log terms), and there will be no significant improvement coming from PG in terms of expected error.
This argument shows that the barrier of the base model is not an artifact of our analysis or specific to pre-training with SGD, it is a fundamental property of post-training with outcome rewards.

\section{Experiments}\label{sec:experiments}
To validate intuitions from our theory, we perform experiments on synthetic as well as a mathematical reasoning tasks.

For the mathematical reasoning task, we use two models from the Qwen3 family~\cite{yang2025qwen3}, namely Qwen3-8B-Base which is a pre-trained base model, and Qwen3-8B which has undergone post-training for improved reasoning capabilities. These models are evaluated on the Math-500 benchmark, a dataset consisting of 500 problems from the MATH dataset \cite{lightman2024lets}. To estimate the likelihood of generating a correct answer, we generate 64 completions for each problem, and estimate the likelihood as the ratio of correct over all completions. We use temperature $0.7$, top-p $0.95$, and top-k $50$ for generation. The resulting likelihood quantile functions are plotted in \Cref{fig:qwen3_lq}.

In the synthetic task, $\bx$ is drawn according to either a mixture model or uniformly from the hypercube $\{\pm 1\}^d$.
For the mixture model, to generate $\bx$, we sample uniformly from the standard basis vectors, add an isotropic Gaussian noise with per-dimension standard deviation $0.05/\sqrt{d}$ whose norm is clipped at $0.05$, and project the resulting vector on $\mathbb{S}^{d-1}(\sqrt{d})$.

To generate $\by^*(\bx)$, we first fix $\bW^*_1 \in \reals^{k \times d}$ and $\bW^*_2 \in \reals^{k \times k}$ once by drawing their entries i.i.d.\ from $\mathcal{N}(0,1)$.
Then $y^*_1(\bx) = \argmax (\bW^*_1 \bx)$ and $y^*_i = \argmax (\bW^*_1 \bx + \bW^*_2 \be_{y^*_{i-1}(\bx)})$.
Since $y_{i+1}$ can be predicted from $\bx$ and $y_i$, we can use the feature map $\phi(\bx,\by_{1:i}) = \operatorname{Concat}\big(\vect(\be_{y_i}\bx^\top),\vect(\be_{y_i}\be_{y_{i-1}}^\top)\big)$ if $i > 1$ and $\phi(\bx,y) = \operatorname{Concat}\big(\vect(\be_{y}\bx^\top),\boldzero_{k^2}\big)$ otherwise. Thus $\phi(\cdot,\cdot) \in \reals^{dk + k^2}$. 

In \Cref{fig:likelihood_vs_iters}, we use the mixture dataset described above with $N=128$ and $d=k=32$. To obtain the base model, we train a linear autoregressive model initialized from zero using Adagrad on negative log-likelihood for $1000$ steps with a learning rate of $0.1$ and a batch size of $256$. We then use this model to initialize on-policy PG with either outcome rewards of \Cref{sec:outcome_reward} or process rewards of \Cref{sec:process_rewards}. For PG, we use Adagrad with $4000$ steps, $0.1$ learning rate, and a batch size of $1024$. Each batch is a fresh draw of i.i.d. samples during both pre- and post-training.

For \Cref{fig:likelihood_vs_iters}, we define off-support samples as mixture centers that have a likelihood under the base model smaller than $10^{-12}$. 4 out of 32 centers satisfy this constraint. We plot the average likelihood of generating a correct response over these 4 samples throughout PG training.
PG with ORM is unable to boost the likelihood over any of these 4 samples, while the average likelihood improves under PRM.
We further plot the expected error over $1024$ i.i.d. test samples throughout PG.
We also plot the likelihood trajectory of 16 out of 32 mixture centers under ORM, where we can observe samples that start near $0$ stay near $0$.
This is in line with \Cref{thm:pg_or_adaptive_lr}. For these samples, the initial likelihood is $\alpha \asymp k^{-N}$, and improving over the base model takes exponentially many iterations.
However, samples with larger $\alpha$ show progress throughout PG.

In \Cref{fig:synthetic_lq}, we illustrate the evolution of the LQ function over the supervised training trajectory. For this experiment, we use the uniform over hypercube distribution for $\bx$, and set $N=128$, $d=32$, and $k=10$. We use the Adagrad optimizer with learning rate $1.0$ and batch size $256$.

The results can be reproduced using
\url{https://github.com/mousavih/rlvr-base-model-barrier}.

\section{Conclusion}\label{sec:conclusion}
In this paper, we studied variants of policy gradient for post-training an autoregressive linear model, where the responses satisfy a separability assumption.
We proved that with an outcome reward model, PG can efficiently increase the likelihood of samples inside the support of the base model. To go outside this support and achieve test error significantly smaller than the base model,
the number of reward queries depends on a property of the base model we call the likelihood quantile, and can be exponential in the sequence length.
This issue can be alleviated using a process reward model, where we introduce a token-level likelihood quantile and show that it scales more favorably with sequence length.
We further demonstrated the tightness of our analysis, as well as fundamental statistical limitations on LQ, through minimax lower bounds.

While we assume access to accurate process rewards, it is interesting to find algorithms that can learn such a reward with statistical and computational efficiency whenever possible.
Another open question is to extend the results of this paper to settings with noisy and non-separable responses, and develop optimal post-training algorithms accordingly.

\section*{Acknowledgments}
The authors would like to thank Denny Wu, Juno Kim, and anonymous reviewers for helpful discussions and feedback. MAE was partially supported by the NSERC Grant [2019-06167], the CIFAR AI Chairs program, the CIFAR Catalyst grant, and the Ontario Early Researcher Award.

\bibliographystyle{alpha}
\bibliography{ref}

\appendix
\newpage
\section{Omitted Results and Proofs of Section~\ref{sec:outcome_reward}}\label{app:proof_outcome_reward}
Our core argument for PG upper bounds is that PG can be seen as online gradient descent on a convex loss. Therefore, we state and use the following general fact for online gradient descent.
\begin{proposition}\label{prop:online_gd}
    Let $(\ell_t)_{t=0}^{T-1}$ be a sequence of convex and non-negative functions on $\reals^D$. Consider online gradient descent updates $(\bw_t)_{t=0}^T$, initialized from $\bw_0$ and defined as
    \[
    \bw_{t+1} = \bw_t - \eta_t\nabla \ell_t
    \]
    where $(\eta_t)_{t=0}^{T-1}$ is the learning rate sequence. Then:
    \begin{enumerate}
        \item If $\twonorm{\nabla^2 \ell_t} \leq L$ for some $L > 0$ and all $t$, let $\eta_t = \eta$ where $\eta \in (0,L^{-1})$. Then, for all $\bv \in \reals^D$ we have
        \begin{equation}
            \frac{1}{T}\sum_{t=0}^{T-1}\ell_t(\bw_t) \leq \frac{1}{1-\eta L}\left(\frac{1}{T}\sum_{t=0}^{T-1}\ell_t(\bv) + \frac{\twonorm{\bv - \bw_0}^2}{2\eta T}\right).
        \end{equation}
        \item If $\twonorm{\nabla \ell_t} \leq C\ell_t$ for some $C > 0$ and all $t$, let $\eta_t = \eta/(\lambda + \twonorm{\nabla \ell_t})$ where $\lambda > 0$ and $\eta \in (0,2C^{-1})$. Then, for all $\bv \in \reals^D$ we have
        \begin{equation}
            \frac{1}{T}\sum_{t=0}^{T-1}\frac{\ell_t(\bw_t)}{1 + C\lambda^{-1}\ell_t(\bw_t)} \leq \frac{1}{1-C\eta/2}\left(\frac{1}{T}\sum_{t=0}^{T-1}\ell_t(\bv) + \frac{\lambda\twonorm{\bv - \bw_0}^2}{2\eta T}\right).
        \end{equation}
    \end{enumerate}
\end{proposition}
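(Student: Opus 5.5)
Both parts rest on the same one-step expansion of the squared distance to the comparator $\bv$, followed by telescoping. For any step size $\eta_t$,
\[
\twonorm{\bw_{t+1}-\bv}^2 = \twonorm{\bw_t-\bv}^2 - 2\eta_t\binner{\nabla\ell_t(\bw_t)}{\bw_t-\bv} + \eta_t^2\twonorm{\nabla\ell_t(\bw_t)}^2 .
\]
By convexity, $\binner{\nabla\ell_t(\bw_t)}{\bw_t-\bv} \geq \ell_t(\bw_t)-\ell_t(\bv)$. Substituting, every step satisfies
\[
2\eta_t\big(\ell_t(\bw_t)-\ell_t(\bv)\big) \leq \twonorm{\bw_t-\bv}^2-\twonorm{\bw_{t+1}-\bv}^2+\eta_t^2\twonorm{\nabla\ell_t(\bw_t)}^2 .
\]
What remains in each part is to absorb the quadratic term $\eta_t^2\twonorm{\nabla\ell_t}^2$ into the left-hand side, using the respective structural assumption, and then to sum over $t$.

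\textbf{Part 1 (smooth case).} Here I use self-bounding of nonnegative smooth functions: if $\ell\ge0$ is $L$-smooth, then $\twonorm{\nabla\ell(\bw)}^2\le 2L\ell(\bw)$. This follows by applying the descent lemma at the point $\bw-\nabla\ell/L$ and using nonnegativity. With $\eta_t=\eta$, the quadratic term is at most $2\eta^2L\,\ell_t(\bw_t)$. Dividing by $2\eta$ gives
\[
(1-\eta L)\,\ell_t(\bw_t)\le \ell_t(\bv)+\frac{\twonorm{\bw_t-\bv}^2-\twonorm{\bw_{t+1}-\bv}^2}{2\eta}.
\]
Summing over $t$ telescopes the distances. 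Dropping $-\twonorm{\bw_T-\bv}^2\le0$, dividing by $T(1-\eta L)$, and using $\eta<1/L$ to keep the factor positive yields the claim.

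\textbf{Part 2 (adaptive step size).} Write $g_t=\twonorm{\nabla\ell_t(\bw_t)}$ and $\eta_t=\eta/(\lambda+g_t)$. Then $\eta_t^2g_t^2 \le \eta_t\,\eta\, g_t \le \eta_t\,\eta C\ell_t(\bw_t)$, using $g_t/(\lambda+g_t)\le1$ and then $g_t\le C\ell_t$. Dividing the one-step inequality by $2\eta_t$ gives
\[
(1-C\eta/2)\,\ell_t(\bw_t) \le \ell_t(\bv) + \frac{\twonorm{\bw_t-\bv}^2-\twonorm{\bw_{t+1}-\bv}^2}{2\eta_t}.
\]
The obstacle is that $1/\eta_t$ varies with $t$, so the distance terms no longer telescope.

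To fix this I instead divide the per-step inequality by $2\eta/\lambda$, a constant, rather than by $2\eta_t$. The step size then appears as the weight $\eta_t\lambda/\eta = \lambda/(\lambda+g_t)$, and the inequality reads
\[
\frac{\lambda}{\lambda+g_t}\Big((1-C\eta/2)\ell_t(\bw_t)-\ell_t(\bv)\Big)\le \frac{\lambda}{2\eta}\big(\twonorm{\bw_t-\bv}^2-\twonorm{\bw_{t+1}-\bv}^2\big).
\]
Summing over $t$, the right-hand side telescopes to at most $\lambda\twonorm{\bv-\bw_0}^2/(2\eta)$.

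On the left, $g_t\le C\ell_t(\bw_t)$ gives the lower bound
\[
\frac{\lambda}{\lambda+g_t}\,\ell_t(\bw_t)\ \ge\ \frac{\ell_t(\bw_t)}{1+C\lambda^{-1}\ell_t(\bw_t)}.
\]
For the comparator term, $\lambda/(\lambda+g_t)\le1$ and $\ell_t(\bv)\ge0$ give $\frac{\lambda}{\lambda+g_t}\ell_t(\bv)\le\ell_t(\bv)$. Combining these two bounds, dividing by $T(1-C\eta/2)$, and noting that this factor is positive because $\eta<2/C$, yields the stated inequality.
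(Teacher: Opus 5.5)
Your proof is correct and follows the paper's argument essentially step for step. Both use the convexity-plus-squared-distance expansion, which the paper packages as Lemma~\ref{lem:iterative_update}. Both use the self-bounding estimate $\twonorm{\nabla \ell_t}^2 \le 2L\ell_t$ in Part 1. In Part 2 both use the bound $\eta_t^2\twonorm{\nabla\ell_t(\bw_t)}^2 \le C\eta\,\eta_t\,\ell_t(\bw_t)$ with an $\eta_t$-weighted sum rescaled by the constant $\lambda/\eta$; the paper multiplies through after summing while you divide each step before summing, which is the same computation.
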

While the constant learning rate case is well-known in the literature, the adaptive learning rate case, inspired by \cite{chen2025coverage}, is particularly relevant for gradient descent to maximize log-likelihood on softmax-linear policies where the smoothness constant can grow with the sequence length.

To prove this proposition, we recall the following lemmas.
\begin{lemma}\label{lem:grad_bound_smooth}
    Suppose $f : \reals^d \to \reals$ satisfies $\twonorm{\nabla^2 f} \leq L$. Then, $\twonorm{\nabla f(x)}^2 \leq 2L(f(x) - \inf f)$ for all $x \in \reals^d$.
\end{lemma}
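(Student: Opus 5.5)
Since $\nabla^2 f \preceq L I$, the natural route is the standard descent-lemma argument: take one gradient step of size $1/L$ from $x$ and compare its value with $\inf f$. Assume $\inf f > -\infty$, since otherwise the right-hand side is $+\infty$ and there is nothing to prove.

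\textbf{Step 1 (quadratic upper bound).} For any $x,y$, Taylor's theorem with integral remainder gives
\[
f(y) = f(x) + \binner{\nabla f(x)}{y-x} + \int_0^1 (1-s)\,(y-x)^\top \nabla^2 f\big(x+s(y-x)\big)(y-x)\,ds .
\]
The operator norm bound $\twonorm{\nabla^2 f} \leq L$ bounds the remainder by $\tfrac{L}{2}\twonorm{y-x}^2$, so
\[
f(y) \leq f(x) + \binner{\nabla f(x)}{y-x} + \tfrac{L}{2}\twonorm{y-x}^2 .
\]
Only the upper bound $\nabla^2 f \preceq LI$ is used here; no convexity is needed.

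\textbf{Step 2 (evaluate at the gradient step).} Plug in $y = x - \tfrac{1}{L}\nabla f(x)$. The linear term equals $-\tfrac1L\twonorm{\nabla f(x)}^2$ and the quadratic term equals $\tfrac{1}{2L}\twonorm{\nabla f(x)}^2$. Hence
\[
\inf f \leq f(y) \leq f(x) - \tfrac{1}{2L}\twonorm{\nabla f(x)}^2 .
\]
Rearranging gives $\twonorm{\nabla f(x)}^2 \leq 2L\big(f(x)-\inf f\big)$.

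There is no real obstacle. The only care points are the $\inf f = -\infty$ case, handled above, and the Hessian existing along the segment, which the hypothesis guarantees.
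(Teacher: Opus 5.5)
Your proposal is correct and follows the same argument as the paper: the quadratic upper bound implied by $\twonorm{\nabla^2 f} \leq L$, evaluated at the gradient step $x - \tfrac{1}{L}\nabla f(x)$ and compared with $\inf f$. Your additional details, namely the integral-remainder justification and the $\inf f = -\infty$ case, spell out what the paper leaves implicit under ``by smoothness''.
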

\begin{proof}
    By smoothness, for any pair $\bx,\bx'$ we have
    $f(\bx') \leq f(\bx) + \binner{\nabla f(\bx)}{\bx' - \bx} + \frac{L}{2}\twonorm{\bx' - \bx}^2$.
    The proof is completed by letting $\bx' = \bx - \frac{1}{L}\nabla f(\bx)$.
\end{proof}

\begin{lemma}\label{lem:iterative_update}
    Suppose $(\bu_t)_{t=0}^{T-1}$ is an arbitrary sequence of vectors, and starting from $\bw_0$, define $\bw_{t+1} \coloneqq \bw_t - \bu_t$ for all $t \leq T-1$. Then, for any $\bv$,
    \[
    \sum_{t=0}^{T-1}\binner{\bu_t}{\bw_t - \bv} \leq \frac{\norm{\bv - \bw_0}^2}{2} + \frac{\sum_{t=0}^{T-1}\norm{\bu_t}^2}{2}.
    \]
\end{lemma}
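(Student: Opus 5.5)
The plan is to expand the squared distance $\norm{\bw_{t+1}-\bv}^2$ one step at a time and telescope; this is the standard online gradient descent identity. Substituting $\bw_{t+1} = \bw_t - \bu_t$ and expanding the square gives the identity
\[
\norm{\bw_{t+1}-\bv}^2 = \norm{\bw_t - \bv}^2 - 2\binner{\bu_t}{\bw_t-\bv} + \norm{\bu_t}^2 .
\]
Rearranging isolates the inner product:
\[
\binner{\bu_t}{\bw_t - \bv} = \tfrac{1}{2}\big(\norm{\bw_t-\bv}^2 - \norm{\bw_{t+1}-\bv}^2\big) + \tfrac{1}{2}\norm{\bu_t}^2 .
\]
This is an equality for each $t$ and for arbitrary vectors $\bu_t$. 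No convexity or any other property of the sequence is needed.

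Next, sum this identity over $t=0,\hdots,T-1$. The distance terms telescope and leave $\tfrac12\big(\norm{\bw_0-\bv}^2 - \norm{\bw_T-\bv}^2\big)$. The remaining terms give $\tfrac12\sum_{t}\norm{\bu_t}^2$. Dropping the nonpositive term $-\tfrac12\norm{\bw_T - \bv}^2$ yields exactly the claimed inequality.

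There is no real obstacle here. The only point needing care is the sign bookkeeping in the expansion, together with noting that $\norm{\bv-\bw_0} = \norm{\bw_0 - \bv}$. The norm is the Euclidean one induced by $\binner{\cdot}{\cdot}$, which is what makes the polarization expansion valid.
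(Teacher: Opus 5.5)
Your proof is correct and follows the paper's argument: expand $\norm{\bw_{t+1}-\bv}^2$ using $\bw_{t+1}=\bw_t-\bu_t$, rearrange to isolate $\binner{\bu_t}{\bw_t-\bv}$, and telescope over $t$. Dropping the nonpositive term $-\tfrac12\norm{\bw_T-\bv}^2$ then gives the stated bound.
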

\begin{proof}
    For any $t \geq 0$,
    \[
    \norm{\bw_{t+1} - \bv}^2 = \norm{\bw_t - \bv}^2 + \norm{\bu_t}^2 - 2\binner{\bu_t}{\bw_t - \bv}.
    \]
    By rearranging the terms,
    \[
    2\sum_{t=0}^{T-1}\binner{\bu_t}{\bw_t - \bv} = \sum_{t=0}^{T-1}\norm{\bw_t - \bv}^2 - \norm{\bw_{t+1} - \bv}^2 + \sum_{t=0}^{T-1}\norm{\bu_t}^2 \leq \norm{\bw_0 - \bv}^2 + \sum_{t=0}^{T-1}\norm{\bu_t}^2.
    \]
\end{proof}

We can now state the proof of the proposition.

\begin{proof}[Proof of Proposition \ref{prop:online_gd}]
    By convexity of $\ell_t$, for any $\bv \in \reals^D$ we have $\ell_t(\bw_t) - \ell_t(\bv) \leq \binner{\nabla \ell_t(\bw_t)}{\bw_t - \bv}$. Therefore
    \begin{align}
        \sum_{t=0}^{T-1}\eta_t(\ell_t(\bw_t) - \ell_t(\bv)) &\leq \sum_{t=0}^{T-1}\binner{\eta_t\nabla \ell_t(\bw_t)}{\bw_t - \bv}\nonumber\\
        &\leq \frac{\twonorm{\bv - \bw_0}^2}{2} + \frac{\sum_{t=0}^{T-1}\eta_t^2\twonorm{\nabla \ell_t(\bw_t)}^2}{2}.\label{eq:online_gd_core_bound}
    \end{align}
    Now, suppose $\twonorm{\nabla^2 \ell_t} \leq L$. By Lemma \ref{lem:grad_bound_smooth} and non-negativity of $\ell_t$, we have $\twonorm{\nabla \ell_t}^2 \leq 2L\ell_t$, consequently
    \[
    \sum_{t=0}^{T-1}\eta_t(\ell_t(\bw_t) - \ell_t(\bv)) \leq \frac{\twonorm{\bv - \bw_0}^2}{2} + \sum_{t=0}^{T-1}\eta_t^2L\ell_t(\bw_t).
    \]
    Recall that in the first case we choose $\eta_t = \eta < 1/L$, therefore by rearranging the above
    \[
    (1-\eta L)\sum_{t=0}^{T-1}\ell_t(\bw_t) \leq \sum_{t=0}^{T-1}\ell_t(\bv) + \frac{\twonorm{\bv - \bw_0}^2}{2\eta},
    \]
    which finishes the proof of the first case.

    For the second case, we have $\eta_t = \eta/(\lambda + \twonorm{\nabla \ell_t(\bw_t)})$. Thus $\eta_t \twonorm{\nabla \ell_t(\bw_t)} \leq \eta$. Combining this fact with \eqref{eq:online_gd_core_bound} yields
    \begin{align*}
        \sum_{t=0}^{T-1}\eta_t(\ell_t(\bw_t) - \ell_t(\bv)) &\leq \frac{\twonorm{\bv - \bw_0}^2}{2} + \frac{\eta\sum_{t=0}^{T-1}\eta_t \twonorm{\nabla \ell_t(\bw_t)}}{2}\\
        &\leq \frac{\twonorm{\bv - \bw_0}^2}{2} + \frac{C\eta\sum_{t=0}^{T-1}\eta_t \ell_t(\bw_t)}{2}.
    \end{align*}
    By rearranging the terms,
    \begin{align*}
        \left(1 - \frac{C\eta}{2}\right)\sum_{t=0}^{T-1}\eta_t \ell_t(\bw_t) &\leq \sum_{t=0}^{T-1}\eta_t \ell_t(\bv) + \frac{\twonorm{\bv - \bw_0}^2}{2}\\
        &\leq \sum_{t=0}^{T-1}\frac{\eta \ell_t(\bv)}{\lambda} + \frac{\twonorm{\bv - \bw_0}^2}{2}.
    \end{align*}
    Furthermore, notice that
    \[
    \eta_t \ell_t(\bw_t) = \frac{\eta \ell_t(\bw_t)}{\lambda + \twonorm{\nabla \ell_t(\bw_t)}} \geq \frac{\eta \ell_t(\bw_t)}{\lambda + C\ell_t(\bw_t)}.
    \]
    As a result
    \[
    \left(1 - \frac{C\eta}{2}\right)\sum_{t=0}^{T-1}\frac{\ell_t(\bw_t)}{1 + C\lambda^{-1}\ell_t(\bw_t)} \leq \sum_{t=0}^{T-1}\ell_t(\bv) + \frac{\lambda \twonorm{\bv - \bw_0}^2}{2\eta},
    \]
    which concludes the proof of the second case.
\end{proof}

With this general theorem in hand, we can prove the results of \Cref{sec:outcome_reward}. 
Note that Proposition \ref{prop:sgd} is a special case of \Cref{thm:pg_or_constant_lr} and \Cref{thm:pg_or_adaptive_lr}, where $q_t(\cdot \,|\, \bx) = \delta_{\by^*(\bx)}$ is the Dirac measure. As a result, we only need to prove the two theorems.

\subsection{Proof of Theorem~\ref{thm:pg_or_constant_lr}}

This theorem uses the first case of Proposition \ref{prop:online_gd}. Thus, we need to estimate the smoothness of log-likelihood, achieved by the following lemma.
\begin{lemma}\label{lem:logloss_smooth}
    For any $\bw \in \reals^D,\bx \in \reals^d$, and $\by \in \mathcal{Y}^N$, we have
    \[
    \nabla^2_{\bw}\log p_{\bw}(\by\,|\,\bx) = \sum_{i=1}^N -\Eargs{Y_i}{\phi(\bx,\by_{1:i-1},Y_i)\phi(\bx,\by_{1:i-1},Y_i)^\top} + \Eargs{Y_i}{\phi(\bx,\by_{1:i-1},Y_i)}\Eargs{Y_i}{\phi(\bx,\by_{1:i-1},Y_i)^\top}
    \]
    where $Y_i \sim p_{\bw}(\cdot \,|\, \bx,\by_{1:i-1})$. As a result, if $\twonorm{\phi(\bx,\by_{1:i})} \leq R$ for all $\bx,\by,i$, then $\twonorm{\nabla^2 \log p_{\bw}(\by \,|\, \bx)} \leq N R^2$ for all $\bx,\by$.
\end{lemma}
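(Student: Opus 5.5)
The plan is to compute the Hessian of each autoregressive factor separately and then sum. Since $\log p_{\bw}(\by\,|\,\bx) = \sum_{i=1}^N \log p_{\bw}(y_i\,|\,\bx,\by_{1:i-1})$ and the Hessian is linear, it suffices to handle a single token. Fix $i$ and write $\phi_y \coloneqq \phi(\bx,\by_{1:i-1},y)$ for $y \in \mathcal{Y}$. Then
\[
\log p_{\bw}(y_i\,|\,\bx,\by_{1:i-1}) = \binner{\bw}{\phi_{y_i}} - \log \sum_{y'\in\mathcal{Y}} \exp\big(\binner{\bw}{\phi_{y'}}\big).
\]
The first term is linear in $\bw$, so its Hessian vanishes. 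This leaves minus the Hessian of the log-partition function $A(\bw) = \log\sum_{y'}\exp(\binner{\bw}{\phi_{y'}})$.

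For $A$, I would do the standard exponential-family calculation. Differentiating once gives $\nabla A(\bw) = \sum_{y'} p_{\bw}(y')\phi_{y'} = \Eargs{Y_i}{\phi_{Y_i}}$. Differentiating again, using $\nabla p_{\bw}(y') = p_{\bw}(y')\big(\phi_{y'} - \Eargs{Y_i}{\phi_{Y_i}}\big)$, gives
\[
\nabla^2 A = \Eargs{Y_i}{\phi_{Y_i}\phi_{Y_i}^\top} - \Eargs{Y_i}{\phi_{Y_i}}\Eargs{Y_i}{\phi_{Y_i}}^\top,
\]
which is the covariance of $\phi_{Y_i}$ under $Y_i \sim p_{\bw}(\cdot\,|\,\bx,\by_{1:i-1})$. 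Negating and summing over $i$ yields the stated identity.

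For the norm bound, each summand is minus a covariance matrix $\Cov(\phi_{Y_i})$, which is positive semidefinite. For any unit vector $\bu$, its quadratic form is
\[
\bu^\top \Cov(\phi_{Y_i})\,\bu = \mathrm{Var}\big(\binner{\bu}{\phi_{Y_i}}\big) \leq \Eargs{Y_i}{\binner{\bu}{\phi_{Y_i}}^2} \leq R^2,
\]
using Cauchy--Schwarz and $\twonorm{\phi}\leq R$. So each summand has operator norm at most $R^2$, and by the triangle inequality (or because a sum of PSD matrices has norm at most the sum of norms) the full Hessian has norm at most $NR^2$.

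The only step needing care is the sign and product-rule bookkeeping in the second derivative of $A$; the rest is routine. This result also gives concavity of the log-likelihood, which the surrounding argument uses to treat $-\log p_{\bw}$ as a convex, smooth loss.
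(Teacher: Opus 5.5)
Your proposal is correct and follows essentially the same route as the paper. Both arguments split the log-likelihood into per-token terms, differentiate the feature expectation using $\nabla p = p\,\nabla\log p$ (equivalently, take the Hessian of the log-partition function) to get minus the covariance of $\phi(\bx,\by_{1:i-1},Y_i)$, and sum over $i$; your variance bound giving the $NR^2$ operator-norm estimate is also correct and supplies a step the paper's proof leaves implicit.
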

\begin{proof}
    We have
    \[\nabla^2_{\bw}\log p_{\bw}(\by\,|\,\bx) = \sum_{i=1}^N \nabla^2_{\bw}\log p_{\bw}(y_i\,|\,\bx,\by_{1:i-1}).\]
    Further,
    \[\nabla_{\bw} \log p_{\bw}(y_i \,|\, \bx,\by_{1:i-1}) = \phi(\bx,\by_{1:i-1},y_i) - \Eargs{Y_i}{\phi(\bx,\by_{1:i-1},Y_i)}.\]
    Taking a second derivative yields
    \begin{align*}
        \nabla^2_{\bw} \log p_{\bw}(y_i \,|\, \bx,\by_{1:i-1}) &= -\sum_{y=1}^k \phi(\bx,\by_{1:i-1},y)\nabla p_{\bw}(y\,|\,\bx,\by_{1:i-1})\\
        &= -\sum_{y=1}^k p_{\bw}(y\,|\,\bx,\by_{1:i-1})\phi(\bx,\by_{1:i-1},y)\nabla \log p_{\bw}(y\,|\,\bx,\by_{1:i-1})\\
        &= -\Eargs{Y_i}{\phi(\bx,\by_{1:i-1},Y_i)\nabla \log p_{\bw}(Y_i \,|\, \bx,\by_{1:i-1})}\\
        &= -\Eargs{Y_i}{\phi(\bx,\by_{1:i-1},Y_i)\phi(\bx,\by_{1:i-1},Y_i)^\top} + \Eargs{Y_i}{\phi(\bx,\by_{1:i-1},Y_i)}\Eargs{Y_i}{\phi(\bx,\by_{1:i-1},Y_i)^\top},
    \end{align*}
    which completes the proof.
\end{proof}

To handle settings that involve both i.i.d.\ and online settings, we state a more general version of \Cref{thm:pg_or_constant_lr} as a corollary of Proposition \ref{prop:online_gd}, and then show how \Cref{thm:pg_or_constant_lr} for i.i.d.\ data follows from this corollary.

\begin{proposition}\label{prop:pg_or_constant_lr_online}
Let $(\bw_t)$ denote the \eqref{eq:pg-or} iterates. Suppose the sequence $(\bx^{(t)},\by^*(\bx^{(t)}))$ is chosen arbitrarily (in particular not necessarily i.i.d.), with the only constraint being \Cref{assump:seq_margin}. For simplicity, assume $\eta_t = 1/(2N)$ for all $t$ and $\twonorm{\bw_0} \leq 1/\gamma \cdot \log(kT\gamma^2)$. Then,
    \begin{equation}\label{eq:pg_or_constant_lr_online}
        \Eargs{\by^{(1)},\hdots,\by^{(T)}}{\frac{1}{T}\sum_{t=0}^{T-1}-q_t(\by^*(\bx^{(t)})\,|\,\bx^{(t)})\log p_{\bw_t}(\by^*(\bx^{(t)}) \,|\, \bx^{(t)})} \leq \frac{10N\log(kT\gamma^2)^2}{\gamma^2 T},
    \end{equation}
    where the expectation is over the randomness in sampling $\by^{(t)} \sim q_t(\cdot \,|\, \bx^{(t)})$.
\end{proposition}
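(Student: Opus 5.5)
The plan is to recognize \eqref{eq:pg-or} as online gradient descent on a convex, nonnegative, smooth surrogate loss, apply the first case of Proposition \ref{prop:online_gd}, and then take expectations over the sampled responses. Define, for each round $t$,
\[
\ell_t(\bw) \coloneqq -r_t \log p_{\bw}(\by^{(t)} \,|\, \bx^{(t)}),
\]
so that the update reads $\bw_{t+1} = \bw_t - \eta\nabla \ell_t(\bw_t)$.

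\textbf{Step 1: verify the hypotheses of Proposition \ref{prop:online_gd}.} Since $r_t\in\{0,1\}$ and $p_{\bw}\le 1$, each $\ell_t$ is nonnegative. The negative log-likelihood of a softmax-linear autoregressive model is convex in $\bw$, since Lemma \ref{lem:logloss_smooth} gives $-\nabla^2\log p_{\bw}$ as a sum of covariance matrices. The same lemma with $R=1$ from \Cref{assump:seq_margin} gives $\twonorm{\nabla^2\ell_t}\le N$.

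\textbf{Step 2: apply the first case of Proposition \ref{prop:online_gd}.} Take $L=N$ and $\eta=1/(2N)<1/L$, so that $1/(1-\eta L)=2$. The choice of $\ell_t$ depends on $\by^{(t)}$, but the proposition holds for an arbitrary sequence of convex losses, so it applies pathwise to each realization.

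\textbf{Step 3: choose the comparator.} Take $\bv = c\,\bw^*$ with $c = \log(kT\gamma^2)/\gamma$. By \Cref{assump:seq_margin}, at each position $i$ every wrong token's logit under $\bv$ is below the correct one by at least $c\gamma$. Hence
\[
-\log p_{\bv}(y_i^* \,|\, \bx,\by^*_{1:i-1}) \le \log\bigl(1+(k-1)e^{-c\gamma}\bigr) \le k e^{-c\gamma}.
\]
Summing over the $N$ positions gives $-\log p_{\bv}(\by^*\,|\,\bx)\le Nke^{-c\gamma} = N/(T\gamma^2)$. Since $\ell_t(\bv)$ is nonzero only when $\by^{(t)}=\by^*(\bx^{(t)})$, we get $\ell_t(\bv)\le N/(T\gamma^2)$ deterministically. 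Also
\[
\twonorm{\bv-\bw_0}^2 \le \bigl(\twonorm{\bv}+\twonorm{\bw_0}\bigr)^2 \le 4\log(kT\gamma^2)^2/\gamma^2,
\]
and this divided by $2\eta T$ equals $4N\log(kT\gamma^2)^2/(\gamma^2 T)$. Combining with the factor $2$ yields a pathwise bound of the form
\[
\frac1T\sum_t \ell_t(\bw_t) \le 2\Bigl(\frac{N}{T\gamma^2} + \frac{4N\log(kT\gamma^2)^2}{\gamma^2T}\Bigr) \le \frac{10N\log(kT\gamma^2)^2}{\gamma^2T},
\]
where the last inequality needs $\log(kT\gamma^2)\ge 1$. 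In the degenerate regime $kT\gamma^2<e$ the right-hand side of \eqref{eq:pg_or_constant_lr_online} should be handled separately; it is plausibly trivial there, or one replaces the logarithm by its maximum with $1$.

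\textbf{Step 4: take expectations.} The iterate $\bw_t$ is measurable with respect to $\by^{(0)},\dots,\by^{(t-1)}$, and $q_t$ is chosen from the past. Because $r_t=\indic[\by^{(t)}=\by^*(\bx^{(t)})]$, conditioning on the past gives
\[
\mathbb{E}[\ell_t(\bw_t)\mid \text{past}] = -q_t(\by^*(\bx^{(t)})\,|\,\bx^{(t)})\log p_{\bw_t}(\by^*(\bx^{(t)})\,|\,\bx^{(t)}).
\]
Taking the total expectation of the pathwise bound then gives \eqref{eq:pg_or_constant_lr_online}.

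The main obstacle is this last step: the online loss sequence is itself random and adapted to the iterates. This is handled by applying the deterministic regret bound pathwise and only afterwards using the tower property, which requires $q_t$ and $\bw_t$ to be determined before $\by^{(t)}$ is drawn. Beyond that, the only delicate part is the constant bookkeeping in Step 3.
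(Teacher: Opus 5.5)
Your proof is correct and follows the paper's argument essentially step for step. It uses the same surrogate loss: your $-r_t\log p_{\bw}(\by^{(t)}\,|\,\bx^{(t)})$ coincides with the paper's $-r_t\log p_{\bw}(\by^*(\bx^{(t)})\,|\,\bx^{(t)})$ because of the 0-1 reward. It also uses the first case of Proposition~\ref{prop:online_gd} with $L=N$, the comparator $\gamma^{-1}\log(kT\gamma^2)\,\bw^*$, and the tower property. The only cosmetic difference is that the paper takes expectations before bounding $q_t\le 1$ in the comparator term, whereas you bound $r_t\le 1$ pathwise.

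Your degenerate-regime caveat applies equally to the paper, whose final inequality also silently uses $\log(kT\gamma^2)\ge 1$. It is not merely cosmetic, and the regime is not ``plausibly trivial'': at $kT\gamma^2=1$ the hypothesis forces $\bw_0=\boldzero$ and the right-hand side is $0$. The left-hand side is then at least $\tfrac{1}{T}q_0(\by^*(\bx^{(0)})\,|\,\bx^{(0)})\,N\log k$, which is positive whenever the behavior policy charges the correct response. So the statement genuinely needs that condition, or your fix of replacing the logarithm by its $\max$ with $1$.
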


\begin{proof}
Note that since $r(\bx_t,\by_t) = \indic[\by_t = \by^*(\bx_t)]$, we can rewrite \eqref{eq:pg-or} as
\[
\bw_{t+1} = \bw_t + \eta_t r(\bx^{(t)},\by^{(t)}) \nabla \log p_{\bw_t}({\by^*}^{(t)} \,|\, \bx^{(t)}).
\]
For simplicity, throughout the proof we will use the shorthand ${\by^*}^{(t)} \coloneqq \by^*(\bx^{(t)})$. Define
\[
\ell_t(\bw) \coloneqq -r(\bx^{(t)},\by^{(t)})\log p_{\bw}({\by^*}^{(t)}\,|\,\bx^{(t)}).
\]
Then $\ell_t$ is convex and $\twonorm{\nabla^2 \ell_t} \leq N$ by Lemma \ref{lem:logloss_smooth} and the fact that $r(\bx^{(t)},\by^{(t)}) \leq 1$.
Therefore, for $\eta_t = 1/(2N)$, we can invoke the first case of Proposition \ref{prop:online_gd} to write
\begin{equation}\label{eq:pg_const_lr_online_interm}
    \frac{1}{T}\sum_{t=0}^{T-1}-r(\bx^{(t)},\by^{(t)})\log p_{\bw_t}({\by^*}^{(t)}) \,|\, \bx^{(t)}) \leq \frac{2}{T}\sum_{t=0}^{T-1} -r(\bx^{(t)},\by^{(t)})\log p_{\bv}({\by^*}^{(t)} \,|\, \bx^{(t)}) + \frac{2N\twonorm{\bv - \bw_0}^2}{T}
\end{equation}
for any $\bv \in \reals^D$. Next, we take expectation over the randomness of $\by^{(0)},\hdots,\by^{(T-1)}$ from both sides. In the above, the only random quantities are $(\by^{(t)})_{t=0}^{T-1}$ and $(\bw_t)_{t=0}^{T-1}$, as we are considering an arbitrary sequence of context $(\bx^{(t)})_{t=0}^{T-1}$. Note that $\bw_t$ is independent of the draw of $\by^{(t)}$, therefore
\begin{align*}
    \Earg{r(\bx^{(t)},\by^{(t)})\log p_{\bw_t}({\by^*}^{(t)}\,|\,\bx^{(t)})} &= \Earg{\log p_{\bw_t}(\by^*(\bx^{(t)})\,|\,\bx^{(t)})\Earg{r(\bx^{(t)},\by^{(t)})\,|\, \bw_t}}\\
    &= \Earg{q_t({\by^*}^{(t)}\,|\,\bx^{(t)})\log p_{\bw_t}({\by^*}^{(t)}\,|\,\bx^{(t)})}.
\end{align*}
In the above, we used the fact that $\Earg{r(\bx^{(t)},\by^{(t)}) \,|\, \bw_t} = q_t({\by^*}^{(t)} \,|\, \bx^{(t)})$. By plugging this into \eqref{eq:pg_const_lr_online_interm}, we have
\begin{align*}
    \Earg{\frac{1}{T}\sum_{t=0}^{T-1}-q_t({\by^*}^{(t)}\,|\,\bx^{(t)})\log p_{\bw_t}({\by^*}^{(t)}\,|\,\bx^{(t)})} &\leq \frac{2}{T}\sum_{t=0}^{T-1}-q_t({\by^*}^{(t)}\,|\,\bx^{(t)})\log p_{\bv}({\by^*}^{(t)}\,|\,\bx^{(t)}) + \frac{2N\twonorm{\bv - \bw_0}^2}{T}\\
    &\leq -\frac{2}{T}\sum_{t=0}^{T-1}\log p_{\bv}({\by^*}^{(t)} \,|\, \bx^{(t)}) + \frac{2N\twonorm{\bv - \bw_0}^2}{T}.
\end{align*}

Let $\bv^* = R_v\bw^*$ where $\bw^*$ is the unit-norm vector of \Cref{assump:seq_margin}, and $r$ is to be chosen later.
Then, for any $\bx$,
\begin{align*}
    -\log p_{\bv}(\by^* \,|\, \bx) &= -\sum_{i=1}^N\log p_{\bv}(y^*_i\,|\, \bx,\by^*_{1:i-1})\\
    &= \sum_{i=1}^N\log \Big(1 + \sum_{y' \neq y^*_i}e^{R_v\binner{\bw^*}{\phi(\bx,\by^*_{1:i-1},y') - \phi(\bx,\by^*_{1:i})}}\Big) \\
    &\leq Nke^{-R_v\gamma}.
\end{align*}
Therefore,
\begin{align*}
    \Eargs{\by^{(0)},\hdots,\by^{(T-1)}}{\frac{1}{T}\sum_{t=0}^{T-1}-q_t\log p_{\bw_t}({\by^*}^{(t)}\,|\,\bx^{(t)})} &\leq 2Nke^{-R_v\gamma} + \frac{4N\twonorm{\bv}^2 + 4N\twonorm{\bw_0}^2}{T}\\
    &\leq 2Nke^{-R_v\gamma} + \frac{4NR_v^2 + 4N\twonorm{\bw_0}^2}{T}.
\end{align*}
We can balance the first two terms by choosing $R_v = \frac{1}{\gamma}\log(kT\gamma^2)$, and obtain
\begin{align*}
    \Eargs{\by^{(0)},\hdots,\by^{(T-1)}}{\frac{1}{T}\sum_{t=0}^{T-1}-q_t\log p_{\bw_t}({\by^*}^{(t)}\,|\,\bx^{(t)})} &\leq \frac{6N}{\gamma^2 T}\log^2(kT\gamma^2) + \frac{2N\twonorm{\bw_0}^2}{T}\\
    &\leq \frac{10N\log^2(kT\gamma^2)}{\gamma^2 T},
\end{align*}
which completes the proof.
\end{proof}

To finish the proof of \Cref{thm:pg_or_constant_lr}, we can employ a simple online-to-batch conversion.

\begin{proof}[Proof of \Cref{thm:pg_or_constant_lr}]
    By taking expectation from both sides of \eqref{eq:pg_or_constant_lr_online}, we have
    \[
    \Earg{\frac{1}{T}\sum_{t=0}^{T-1}-q_t(\by^*(\bx_t)\,|\,\bx_t)\log p_{\bw_t}(\by^*(\bx_t) \,|\, \bx_t)} \leq \frac{10N\log(kT\gamma^2)^2}{\gamma^2 T}.
    \]
    Since $\bx_t$ is independent of $\bw_t$ and $q_t$, we can replace $\bx_t$ with the test sample $\bx$ that has the same law as $\bx_t$ and is independent from all $(\bw_t)$ and $(q_t)$ while preserving the expectation above, and obtain
    \[
    \Earg{\frac{1}{T}\sum_{t=0}^{T-1}-q_t(\by^*(\bx)\,|\,\bx)\log p_{\bw_t}(\by^*(\bx) \,|\, \bx)} \leq \frac{10N\log(kT\gamma^2)^2}{\gamma^2 T}.
    \]
    Let
    \[\mathcal{L} \coloneqq \frac{1}{T}\sum_{t=0}^{T-1}-q_t(\by^*(\bx)\,|\,\bx)\log p_{\bw_t}(\by^*(\bx) \,|\, \bx).\]
    Using the law of total expectation and the fact that $\mathcal{L} \geq 0$, we have $\Earg{\mathcal{L} \,|\, \mathcal{E}_\alpha}\Parg{\mathcal{E}_\alpha} \leq \Earg{\mathcal{L}}$. As a result,
    \[
    \Earg{\frac{1}{T}\sum_{t=0}^{T-1}-q_t(\by^*(\bx)\,|\,\bx)\log p_{\bw_t}(\by^*(\bx) \,|\, \bx) \,|\, \mathcal{E}_\alpha} \leq \frac{10N\log(kT\gamma^2)^2}{\pi_\alpha\gamma^2 T}.
    \]
    Note that on $\mathcal{E}_\alpha$, we have $q_t(\by^*(\bx) \,|\, \bx) \geq \alpha$ for all $t$, thus
    \[
    \Earg{\frac{1}{T}\sum_{t=0}^{T-1}-\log p_{\bw_t}(\by^*(\bx) \,|\, \bx) \,|\, \mathcal{E}_\alpha} \leq \frac{10N\log(kT\gamma^2)^2}{\pi_\alpha\alpha\gamma^2 T}.
    \]
    Using the convexity of negative log-likelihood and the Jensen inequality, we obtain
    \[
    \Earg{-\log p_{\bar{\bw}^\PG_T}(\by^*(\bx) \,|\, \bx) \,|\, \mathcal{E}_\alpha} \leq \frac{10N\log(kT\gamma^2)^2}{\pi_\alpha\alpha\gamma^2 T}.
    \]
    Finally, we can use the inequality $1 - x \leq -\log x$ for $x > 0$ to complete the proof.
\end{proof}

\subsection{Proof of Theorem~\ref{thm:pg_or_adaptive_lr}}
For this part, we are going to use the second case of Proposition \ref{prop:online_gd}. To do so, we will take advantage of the structure of the log-likelihood gradient characterized by the following lemma.

\begin{lemma}\label{lem:grad_fine_bound}
    Let $R = \sup_{\bw,\bx,\by,i}\twonorm{\phi(\bx,\by_{1:i})}$. Then, for any $\bw \in \reals^D$, $\bx \in \reals^d$, $\by \in \mathcal{Y}^N$, and $i \in [N]$, we have
    \[
    \twonorm{\nabla_{\bw} \log p_{\bw}(y_i\,|\,\bx,\by_{1:i-1})} \leq -2R\log p_{\bw}(y_i\,|\,\bx,\by_{1:i-1}).
    \]
    Therefore, by the triangle inequality, we also have
    \[
    \twonorm{\nabla_{\bw} \log p_{\bw}(\by\,|\,\bx)} \leq -2R\log p_{\bw}(\by\,|\,\bx).
    \]
\end{lemma}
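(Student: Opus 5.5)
We prove the per-token bound first and then sum over tokens. Fix $\bw$, $\bx$, $\by$, $i$, and write $\phi_y \coloneqq \phi(\bx,\by_{1:i-1},y)$ and $p_y \coloneqq p_{\bw}(y\,|\,\bx,\by_{1:i-1})$. By the gradient computation in the proof of Lemma \ref{lem:logloss_smooth},
\[
\nabla_{\bw}\log p_{y_i} = \phi_{y_i} - \sum_{y} p_y\phi_y = \sum_{y \neq y_i} p_y(\phi_{y_i} - \phi_y).
\]
This identity uses $\sum_y p_y = 1$. Each difference satisfies $\twonorm{\phi_{y_i}-\phi_y} \leq 2R$, so the triangle inequality gives
\[
\twonorm{\nabla_{\bw}\log p_{y_i}} \leq 2R\sum_{y\neq y_i}p_y = 2R(1-p_{y_i}).
\]
The elementary inequality $1-x \leq -\log x$ for $x\in(0,1]$ then yields
\[
\twonorm{\nabla_{\bw}\log p_{\bw}(y_i\,|\,\bx,\by_{1:i-1})} \leq -2R\log p_{\bw}(y_i\,|\,\bx,\by_{1:i-1}).
\]

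For the sequence-level statement, write $\log p_{\bw}(\by\,|\,\bx) = \sum_{i=1}^N \log p_{\bw}(y_i\,|\,\bx,\by_{1:i-1})$. The gradient of this sum is the sum of the per-token gradients. Applying the triangle inequality and then the per-token bound to each term gives
\[
\twonorm{\nabla_{\bw}\log p_{\bw}(\by\,|\,\bx)} \leq \sum_{i=1}^N\twonorm{\nabla_{\bw}\log p_{\bw}(y_i\,|\,\bx,\by_{1:i-1})} \leq -2R\sum_{i=1}^N\log p_{\bw}(y_i\,|\,\bx,\by_{1:i-1}) = -2R\log p_{\bw}(\by\,|\,\bx).
\]

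The only nontrivial step is rewriting the centered gradient as a sum over the incorrect classes $y\neq y_i$. This rewriting produces the factor $1-p_{y_i}$ instead of a crude constant bound of $2R$. Everything after it is routine.
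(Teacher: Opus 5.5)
Your proposal is correct and follows essentially the same route as the paper. The paper writes the centered gradient as $(1-p_{y_i})\phi_{y_i} - \sum_{y\neq y_i}p_y\phi_y$ rather than your $\sum_{y\neq y_i}p_y(\phi_{y_i}-\phi_y)$, which is the same identity regrouped; both reach $2R(1-p_{y_i})$, then apply $1-x\le -\log x$ and the triangle inequality over tokens.
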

\begin{proof}
    Let us define $p_y = p_{\bw}(y\,|\, \bx,\by_{1:i-1})$. Then,
    \begin{align*}
        \twonorm{\nabla_{\bw} \log p_{\bw}(y_i\,|\,\bx,\by_{1:i-1})} &= \twonorm{\phi(\bx,\by_{1:i-1},y_i) - \sum_{y=1}^k \phi(\bx,\by_{1:i-1},y)p_y}\\
        &\leq (1-p_{y_i})\twonorm{\phi(\bx,\by_{1:i-1},y_i)} + \sum_{y \neq y_i}p_y\twonorm{\phi(\bx,\by_{1:i-1},y)}\\
        &\leq 2R(1-p_{y_i})\\
        &\leq -2R\log p_{y_i},
    \end{align*}
    where the last inequality used the fact that $1-x \leq -\log x$ for $x > 0$.
\end{proof}

Similar to the previous section, we first prove a more general online guarantee for PG with adaptive learning rate.

\begin{proposition}\label{prop:pg_or_adaptive_lr_online}
Let $(\bw_t)$ denote the \eqref{eq:pg-or} iterates. Suppose the sequence $(\bx^{(t)},\by^*(\bx^{(t)}))$ is chosen arbitrarily (in particular not necessarily i.i.d.), with the only constraint being \Cref{assump:seq_margin}. Assume $\eta_t = 1/(2(\lambda + \twonorm{\nabla \log p_{\bw_t}(\by^{(t)}\,|\,\bx^{(t)})}))$ for all $t$ and some $\lambda > 0$, and $\twonorm{\bw_0} \leq 1/\gamma \cdot \log(NkT\gamma^2/\lambda)$. Then,
    \begin{equation}\label{eq:pg_or_adaptive_lr_online}
        \Eargs{\by^{(0)},\hdots,\by^{(T-1)}}{\frac{1}{T}\sum_{t=0}^{T-1}q_t(\by^*(\bx^{(t)})\,|\,\bx^{(t)})\min\big(\tfrac{\lambda}{2},-\log p_{\bw_t}(\by^*(\bx^{(t)}) \,|\, \bx^{(t)})\big)} \leq \frac{20\lambda\log(NkT\gamma^2/\lambda)^2}{\gamma^2 T}.
    \end{equation}
\end{proposition}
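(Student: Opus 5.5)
We mirror the proof of Proposition \ref{prop:pg_or_constant_lr_online}, now invoking the second case of Proposition \ref{prop:online_gd}. Writing ${\by^*}^{(t)} \coloneqq \by^*(\bx^{(t)})$ and using $r(\bx^{(t)},\by^{(t)}) = \indic[\by^{(t)} = {\by^*}^{(t)}]$, we rewrite \eqref{eq:pg-or} as online gradient descent on the convex, non-negative losses
\[
\ell_t(\bw) \coloneqq -r(\bx^{(t)},\by^{(t)})\log p_{\bw}({\by^*}^{(t)}\,|\,\bx^{(t)}).
\]
By Lemma \ref{lem:grad_fine_bound} with $R=1$, we have $\twonorm{\nabla\ell_t} \leq 2\ell_t$, so $C=2$. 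When $r_t = 0$, the loss and its gradient vanish, so both the update and the bound are trivially consistent.

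The given step size is $\eta_t = 1/(2(\lambda + \twonorm{\nabla \ell_t(\bw_t)}))$. To match the form $\eta/(\lambda' + \twonorm{\nabla\ell_t})$, we take $\eta = 1/2$ and $\lambda' = \lambda$. Then $C\eta/2 = 1/2$, so the prefactor $1/(1-C\eta/2)$ equals $2$. The second case of Proposition \ref{prop:online_gd} then gives
\[
\frac1T\sum_t \frac{\ell_t(\bw_t)}{1+2\lambda^{-1}\ell_t(\bw_t)} \leq 2\Big(\frac1T\sum_t \ell_t(\bv) + \frac{\lambda\twonorm{\bv-\bw_0}^2}{T}\Big).
\]
On the left we use the elementary inequality $\frac{x}{1+2x/\lambda} \geq \frac12\min(\lambda/2, x)$ for $x \ge 0$. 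To check it, split into the cases $x \le \lambda/2$, where the denominator is at most $2$, and $x>\lambda/2$, where the ratio is increasing in $x$ and is at least $\lambda/4$ at $x = \lambda/2$. Since $r_t\in\{0,1\}$, we also have $\min(\lambda/2, \ell_t) = r_t \min(\lambda/2, -\log p_{\bw_t}({\by^*}^{(t)}\,|\,\bx^{(t)}))$.

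Next we take expectations over $\by^{(0)},\hdots,\by^{(T-1)}$. Since $\bw_t$ (and $q_t$) are determined before $\by^{(t)}$ is drawn, conditioning gives $\Earg{r_t\,|\,\bw_t} = q_t({\by^*}^{(t)}\,|\,\bx^{(t)})$. This produces exactly the left-hand side of \eqref{eq:pg_or_adaptive_lr_online}, up to a factor of $2$. On the right we bound $r_t\le 1$ to get $\ell_t(\bv)\le -\log p_{\bv}({\by^*}^{(t)}\,|\,\bx^{(t)})$.

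We choose $\bv = R_v\bw^*$. By \Cref{assump:seq_margin}, $-\log p_{\bv}({\by^*}^{(t)}\,|\,\bx^{(t)}) \le Nk e^{-R_v\gamma}$, exactly as in the constant-LR proof. Combining with $\twonorm{\bv-\bw_0}^2 \le 2R_v^2 + 2\twonorm{\bw_0}^2$, the right side is at most a constant times
\[
Nke^{-R_v\gamma} + \frac{\lambda(R_v^2 + \twonorm{\bw_0}^2)}{T}.
\]
Setting $R_v = \gamma^{-1}\log(NkT\gamma^2/\lambda)$ makes the first term $\lambda/(\gamma^2 T)$. The second term is at most $\lambda\log(NkT\gamma^2/\lambda)^2/(\gamma^2T)$ times a constant, using the assumption on $\twonorm{\bw_0}$.

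Collecting constants should give the stated factor $20$: a factor $2$ from the left-side inequality, $2$ from the prefactor, and $2$ from the norm split. We assume $NkT\gamma^2/\lambda \ge e$ so that $\log(NkT\gamma^2/\lambda) \ge 1$; otherwise the bound is trivial. The main point requiring care is the bookkeeping of this constant, together with verifying that the lower bound $\frac{x}{1+2x/\lambda}\ge \frac12\min(\lambda/2,x)$ holds with constants compatible with $20$. None of these steps is conceptually difficult.
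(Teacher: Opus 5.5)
Your proposal is correct and follows the paper's proof essentially step for step: you rewrite \eqref{eq:pg-or} as online gradient descent on $\ell_t$ and apply the second case of Proposition \ref{prop:online_gd} with $C=2$, $\eta=1/2$; you use $\frac{x}{1+2x/\lambda}\ge\frac12\min(\lambda/2,x)$ and condition to replace $r_t$ by $q_t$; and you take $\bv=R_v\bw^*$, with the same constant count $4+16=20$. One small correction: when $\log(NkT\gamma^2/\lambda)<1$ the bound is not automatically trivial (at $NkT\gamma^2=\lambda$ the assumption forces $\bw_0=\boldzero$ and makes the right-hand side $0$, while the $t=0$ term on the left can be positive), so, like the paper, you are really assuming this logarithm is at least $1$.
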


\begin{proof}
    We can make a similar observation as in Proposition \ref{prop:pg_or_constant_lr_online} about the \eqref{eq:pg-or} update, and rewrite it as
    \begin{align*}
        \bw_{t+1} &= \bw_t + \eta_t r(\bx^{(t)},\by^{(t)}) \nabla \log p_{\bw_t}({\by^*}^{(t)} \,|\, \bx^{(t)})\\
        &= \bw_t - \eta_t \nabla \ell_t(\bw_t)
    \end{align*}
    for $\ell_t(\bw_t) \coloneqq -r(\bx^{(t)},\by^{(t)})\log p_{\bw_t}({\by^*}^{(t)} \,|\, \bx^{(t)})$, where we recall ${\by^*}^{(t)} \coloneqq \by^*(\bx^{(t)})$.
    Note that $\ell_t$ is convex, and by Lemma \ref{lem:grad_fine_bound}, it satisfies $\twonorm{\nabla \ell_t} \leq 2\ell_t$.
    For simplicity, define $r_t \coloneqq r(\bx^{(t)},\by^{(t)})$ and $\tilde{\ell}_t(\bw) \coloneqq -\log p_{\bw}({\by^*}^{(t)} \,|\, \bx^{(t)})$.
    Recall the learning rate $\eta_t = (\lambda + \twonorm{\nabla \tilde{\ell}_t(\bw_t)})^{-1}/2$, and note that $\eta_t$ is only important when $r_t = 1$, in which case $\tilde\ell_t = \ell_t$, thus $\eta_t = (\lambda + \twonorm{\nabla \ell_t(\bw_t)})^{-1}/2$.
    We can therefore invoke the second case of Proposition \ref{prop:online_gd} and write
    \[
    \frac{1}{T}\sum_{t=0}^{T-1}\frac{r_t\tilde\ell_t(\bw_t)}{1 + 2\lambda^{-1}\tilde\ell_t(\bw_t)} \leq \frac{1}{T}\sum_{t=0}^{T-1}\frac{r_t\tilde\ell_t(\bw_t)}{1 + 2\lambda^{-1}r_t\tilde\ell_t(\bw_t)} \leq \frac{2}{T}\sum_{t=0}^{T-1}r_t\tilde{\ell}_t(\bv) + \frac{2\lambda\twonorm{\bv - \bw_0}^2}{T}
    \]
    for any $\bv \in \reals^D$. Similar to the proof of Proposition \ref{prop:pg_or_constant_lr_online}, we can take expectation over the randomness of $\by^{(0)},\hdots,\by^{(T-1)}$ from both sides above, and use the independence of $\by^{(t)}$ and $\bw_t$ as well as the fact that $\Earg{r(\bx^{(t)},\by^{(t)}) \,|\, \bw_t} = q_t({\by^*}^{(t)}\,|\,\bx^{(t)}) \eqqcolon q_t$ to write
    \begin{align*}
        \Eargs{\by^{(0)},\hdots,\by^{(T-1)}}{\frac{1}{T}\sum_{t=0}^{T-1}\frac{q_t\tilde\ell_t(\bw_t)}{1 + 2\lambda^{-1}\tilde\ell_t(\bw_t)}} \leq \frac{2}{T}\sum_{t=0}^{T-1}q_t\tilde{\ell}_t(\bv) + \frac{2\lambda\twonorm{\bv - \bw_0}^2}{T}
    \end{align*}
    Note that we have $\frac{\tilde \ell_t}{1 + 2\lambda^{-1}\tilde\ell_t} = \frac{(\lambda/2) \tilde\ell_t}{\lambda/2 + \tilde\ell_t} \geq \tfrac{1}{2}\min(\lambda/2,\tilde\ell_t)$. Thus,
    \begin{align*}
        \Eargs{\by^{(0)},\hdots,\by^{(T-1)}}{\frac{1}{T}\sum_{t=0}^{T-1}q_t\min(\tfrac{\lambda}{2},\tilde\ell_t)} &\leq \frac{4}{T}\sum_{t=0}^{T-1}q_t\tilde{\ell}_t(\bv) + \frac{4\lambda\twonorm{\bv - \bw_0}^2}{T}\\
        &\leq \frac{4}{T}\sum_{t=0}^{T-1}\tilde{\ell}_t(\bv) + \frac{4\lambda\twonorm{\bv - \bw_0}^2}{T}.
    \end{align*}
    
    Recall from the proof of Proposition \ref{prop:pg_or_constant_lr_online} that for $\bv = R_v \bw^*$ we have $\tilde \ell_t(\bv) \leq Nke^{-R_v \gamma}$.
    Therefore,
    \begin{align*}
        \Eargs{\by^{(0)},\hdots,\by^{(T-1)}}{\frac{1}{T}\sum_{t=0}^{T-1}q_t\min(\tfrac{\lambda}{2},\tilde\ell_t(\bw_t))} &\leq 4Nke^{-R_v\gamma^2} + \frac{4\lambda\twonorm{\bv - \bw_0}^2}{T}\\
        &\leq 4Nke^{-R_v\gamma^2} + \frac{4\lambda\twonorm{\bv}^2 + 4\lambda\twonorm{\bw_0}^2}{T}\\
        &\leq 4Nke^{-R_v\gamma^2} + \frac{8\lambda R_v^2 + 8\lambda \twonorm{\bw_0}^2}{T}.
    \end{align*}
    By choosing $R_v = 1/\gamma \cdot \log (NkT\gamma^2/\lambda)$, we obtain
    \[
    \Eargs{\by^{(0)},\hdots,\by^{(T-1)}}{\frac{1}{T}\sum_{t=0}^{T-1}q_t\min(\tfrac{\lambda}{2},\tilde\ell_t(\bw_t))} \leq \frac{20\lambda \log(NkT\gamma^2/\lambda)^2}{\gamma^2 T},
    \]
    which completes the proof.
\end{proof}

Similar to the previous section, we can use an online-to-batch conversion to prove \Cref{thm:pg_or_adaptive_lr}.

\begin{proof}[Proof of \Cref{thm:pg_or_adaptive_lr}]
    Note that by taking expectation from both sides of \eqref{eq:pg_or_adaptive_lr_online} and replacing $\bx_t$ with $\bx$ (as they are both independent of $\bw_t$, $q_t$ and have the same law) we obtain
    \[
    \Earg{\frac{1}{T}\sum_{t=0}^{T-1}q_t(\by^*(\bx) \,|\, \bx) \min(\tfrac{\lambda}{2}, -\log p_{\bw_t}(\by^*(\bx)\,|\,\bx))} \lesssim \frac{\lambda \log(NkT\gamma^2/\lambda)^2}{\gamma^2 T}.
    \]
    Using the inequality $1 - x \leq -\log x$ for all $x$, we obtain
    \[
    \Earg{\frac{1}{T}\sum_{t=0}^{T-1}q_t(\by^*(\bx) \,|\, \bx) \min(\tfrac{\lambda}{2}, 1 - p_{\bw_t}(\by^*(\bx)\,|\,\bx))} \lesssim \frac{\lambda \log(NkT\gamma^2/\lambda)^2}{\gamma^2 T}.
    \]
    Now by choosing $\lambda = 2$, we can remove $\min$ and write
    \[
    \Earg{\frac{1}{T}\sum_{t=0}^{T-1}q_t(\by^*(\bx) \,|\, \bx) (1 - p_{\bw_t}(\by^*(\bx)\,|\,\bx))} \lesssim \frac{\log(NkT\gamma^2)^2}{\gamma^2 T}.
    \]
    Similar to the proof of \Cref{thm:pg_or_constant_lr}, we can use the non-negativity of the LHS above along with the law of total expectation to write
    \[
    \Earg{\frac{1}{T}\sum_{t=0}^{T-1}q_t(\by^*(\bx) \,|\, \bx) (1 - p_{\bw_t}(\by^*(\bx)\,|\,\bx)) \,|\, \mathcal{E}_\alpha} \lesssim \frac{\log(NkT\gamma^2)^2}{\pi_\alpha\gamma^2 T}.
    \]
    Observing that $q_t(\by^*(\bx) \,|\, \bx) \geq \alpha$ on $\pi_\alpha$ concludes the proof.
\end{proof}

\subsection{Proof of Corollary \ref{cor:pg_or_test_error}}\label{app:proof_pg_or_test_error}
Let $\tilde{\mathcal{E}}_{\tilde\alpha} \coloneqq \{\bx : q(\by^*(\bx) \,|\, \bx) \geq \tilde\alpha\}$. Then, for any $\tilde\alpha \in (0,1]$,
\begin{equation}\label{eq:pg_test_error_total_exp}
    \Earg{1 - p_{\bw_\tau}(\by^*(\bx) \,|\, \bx)} = \Earg{1 - p_{\bw_\tau}(\by^*(\bx) \,|\, \bx) \,|\, \tilde{\mathcal{E}}_{\tilde\alpha}}\Parg{\tilde{\mathcal{E}}_{\tilde\alpha}} + \Earg{1 - p_{\bw_\tau}(\by^*(\bx) \,|\, \bx) \,|\, \tilde{\mathcal{E}}^c_{\tilde\alpha}}\Parg{\tilde{\mathcal{E}}^c_{\tilde\alpha}}.
\end{equation}
We will consider two approaches to choose $\tilde\alpha$ for bounding this expression.

First, note that
\[
\Parg{\tilde{\mathcal{E}}^c_{\tilde\alpha}} = \Parg{\tfrac{1}{2}q_0(\by^*(\bx) \,|\,\bx) + \tfrac{k^{-N}}{2} < \tilde\alpha} \leq \Parg{q_0(\by^*(\bx)\,|\,\bx) \leq 2\tilde\alpha} = 1 - \pi_{2\tilde\alpha}
\]
where we recall $\pi_\alpha \coloneqq \Parg{q_0(\by^*(\bx) \,|\, \bx) \geq \alpha}$. Then, we can rewrite \eqref{eq:pg_test_error_total_exp} as
\begin{equation}\label{eq:pg_err_bound_high_err}
    \Earg{1 - p_{\bw_\tau}(\by^*(\bx) \,|\, \bx)} \leq \Earg{1 - p_{\bw_\tau}(\by^*(\bx) \,|\, \bx) \,|\, \tilde{\mathcal{E}}_{\tilde\alpha}}\Parg{\tilde{\mathcal{E}}_{\tilde\alpha}} + 1 - \pi_{2\tilde\alpha}.
\end{equation}
By choosing $\tilde\alpha = \tfrac{1}{2}\lq_{q_0}\big((1-\tfrac{1}{\log(1/\varepsilon)})\varepsilon\big)$, we have $1 - \pi_{2\tilde\alpha} \leq \varepsilon(1 - \tfrac{1}{\log(1/\varepsilon)})$.
Combined with the bound from \Cref{thm:pg_or_adaptive_lr} for the first term, we obtain
\begin{align*}
    \Earg{1 - p_{\bw_\tau}(\by^*(\bx) \,|\, \bx)} &\leq \tilde{\mathcal{O}}\left(\frac{\tilde\alpha^{-1}}{\gamma^2 T}\right) + 1 - \pi_{2\tilde\alpha}\\
    &\leq \tilde{\mathcal{O}}\left(\frac{\lq_{q_0}\big((1 - o(1))\varepsilon\big)^{-1}}{\gamma^2 T}\right) + \varepsilon(1 - \tfrac{1}{\log(1/\varepsilon)}),
\end{align*}
where $o(1)$ is hiding the $1/\log(1/\varepsilon)$ factor. By letting $T = \tilde{\mathcal{O}}(\tilde\alpha^{-1}/(\gamma^2 \varepsilon))$, we can derive the first term below $\varepsilon/\log(1/\varepsilon)$, and the total expectation below $\varepsilon$. This gives us a sufficient number of iterations
\[
T = \tilde{\mathcal{O}}\left(\frac{\lq_{q_0}\big((1 - o(1))\varepsilon)^{-1}}{\gamma^2 \varepsilon}\right).
\]

For the second approach, note that for $\tilde\alpha = \tfrac{k^{-N}}{2}$ we have $\Parg{\tilde{\mathcal{E}}_{\tilde\alpha}} = 1$. Thus, we can directly use \Cref{thm:pg_or_adaptive_lr} to write $\Earg{1 - p_{\bw_\tau}(\by^*(\bx) \,|\, \bx)} \leq \tilde{\mathcal{O}}(k^N/(\gamma^2 T))$. Hence, another sufficient number of iterations is given by $T = \tilde{\mathcal{O}}(k^N/(\gamma^2 \varepsilon))$.
We conclude the proof by considering the minimum between the two approaches.
\qed

\subsection{Proof of Corollary \ref{cor:pg_or_improved_test_error}}\label{app:proof_pg_or_improved_test_error}
Recall the definitions
\begin{align*}
    \mathcal{E}_\alpha &\coloneqq \{\bx : q_0(\by^*(\bx) \,|\, \bx) \geq \alpha\},\\
    \tilde{\mathcal{E}}_{\tilde \alpha} &\coloneqq \{\bx : q(\by^*(\bx) \,|\, \bx) \geq \tilde\alpha\}.
\end{align*}
From \Cref{thm:pg_or_adaptive_lr},
\[
\Earg{1 - p_{\bw_\tau}(\by^*(\bx) \,|\, \bx) \,|\, \tilde{\mathcal{E}}_{\tilde \alpha}}\Parg{\tilde{\mathcal{E}}_{\tilde \alpha}} \leq \tilde{\mathcal{O}}\left(\frac{1}{\tilde \alpha\gamma^2 T}\right).
\]
By inserting the above bound into \eqref{eq:pg_test_error_total_exp} we obtain
\begin{equation}\label{eq:cor_pg_or_test_improved_interm}
    \Earg{1 - p_{\bw_\tau}(\by^*(\bx) \,|\, \bx)} \leq \tilde{\mathcal{O}}\left(\frac{1}{\tilde\alpha\gamma^2 T}\right) + \Parg{\tilde{\mathcal{E}}^c_{\tilde\alpha}}.
\end{equation}
Let
\[
m \coloneqq \min\Big(\lceil \lq_{q_0}\big((1 - \tfrac{1}{\log(1/\varepsilon)})\varepsilon\big)^{-1}, k^N\rceil\Big).
\]
We break down the argument based on which term attains the minimum.

\begin{itemize}
    \item Suppose the minimum is attained by the first term.
    Since $q(\by^*(\bx) \,|\, \bx) \geq 1 - (1 - q_0(\by^*(\bx) \,|\, \bx)/2)^{m}$, with $\alpha = m^{-1}$ we have $\tilde \alpha = 1 - e^{-1/2}$. Further,
    \[
    \Parg{\mathcal{E}^c_\alpha} = 1 - \pi_\alpha \leq 1 - \pi_{m^{-1}} =  \lq^{-1}_{q_0}(m^{-1}) \leq (1 - \tfrac{1}{\log(1/\varepsilon)})\varepsilon.
    \]
    Since $\Parg{\tilde{\mathcal{E}}^c_{\tilde \alpha}} \leq \Parg{\mathcal{E}^c_\alpha}$, from \eqref{eq:cor_pg_or_test_improved_interm} we have
    \[
    \Earg{1 - p_{\bw_\tau}(\by^*(\bx) \,|\, \bx)} \leq \tilde{\mathcal{O}}\left(\frac{1}{\gamma^2 T}\right) + \varepsilon(1 - \tfrac{1}{\log(1/\varepsilon)}).
    \]
    Thus $T = \tilde{\mathcal{O}}(1/(\gamma^2 \varepsilon))$ is sufficient to ensure the first term is less than $\varepsilon/\log(1/\varepsilon)$ and the overall expectation is less than $\varepsilon$.

    \item Suppose instead that the minimum is instead attained by $k^N$. Note that
    \[
    q(\by^*(\bx) \,|\, \bx) \geq 1 - (1 - k^{-N}/2)^m \geq 1 - e^{-1/2}.
    \]
    Hence, this time for $\tilde \alpha = 1 - e^{-1/2}$ we have $\Parg{\tilde{\mathcal{E}}^c_{\tilde\alpha}} = 0$. As a result, \eqref{eq:cor_pg_or_test_improved_interm} reads
    \[
    \Earg{1 - p_{\bw_\tau}(\by^*(\bx) \,|\, \bx)} \leq \tilde{\mathcal{O}}\left(\frac{1}{\gamma^2 T}\right),
    \]
    and taking $T = \tilde{\mathcal{O}}(1/(\gamma^2\varepsilon))$ is once again sufficient.
\end{itemize}

For the number of reward queries, we observe that every time $\by^{(t)} = \by^*(\bx^{(t)})$ for $\by^{(t)} \sim p_{\bw_t}$, \Cref{alg:bom_or} makes one reward query, otherwise it makes at most $1 + m$ queries. The \eqref{eq:pg-or} also makes one reward query. Therefore
\begin{align*}
    Q &= 2T + m\Earg{\sum_{t=1}^T\indic[\hat{\by}^{(t)} \neq \by^*(\bx^{(t)})]} \\
    &= 2T + m\sum_{t=1}^T\Earg{1 - p_{\bw_t}(\by^*(\bx^{(t)}) \,|\, \bx^{(t)})}\\
    &= 2T + m\sum_{t=1}^T \Earg{1 - p_{\bw_t}(\by^*(\bx) \,|\, \bx)} && \text{(Using independence of $\bx^{(t)}$ from $\bw_t$)}\\
    &\leq 2T + mT\Parg{\tilde{\mathcal{E}}^c_{\tilde \alpha}} + m\tilde{\mathcal{O}}(\gamma^{-2}) && \text{(Using \eqref{eq:cor_pg_or_test_improved_interm} and $\tilde\alpha = 1 - e^{-1/2}$)}.\\
\end{align*}
Recall that $\Parg{\tilde{\mathcal{E}}^c_\alpha} \leq \varepsilon(1 - o(1))$ and $T = \tilde{\mathcal{O}}(1/(\gamma^2 \varepsilon))$. Therefore we can combine the middle and last terms on the RHS above, and write
\[
Q \leq 2T + m\tilde{\mathcal{O}}(\gamma^{^{-2}}) \leq \tilde{\mathcal{O}}((m + \varepsilon^{-1})/\gamma^2),
\]
finishing the proof.
\qed

\subsection{Fully On-Policy Policy Gradient}\label{app:on_pg}
In this section, we will consider \eqref{eq:pg-or} when $q_t = p_{\bw_t}$ for all $t$ starting from a base model $q_0 = p_{\bw_0}$. We first state this result and then explain the intuition behind it, and how it is qualitatively related to the observations in \Cref{sec:outcome_reward}. We then proceed to prove the result.

\begin{proposition}[Fully Online PG]\label{prop:on-pg}
    Consider the \eqref{eq:pg-or} updates under \Cref{assump:seq_margin}, with a base policy $q_0 = p_{\bw_0}$ and assume $\twonorm{\bw_0}^2 \lesssim \log(Tk\gamma^2)/\gamma$ for simplicity. 
    Let $\tau \sim \Unif\{0,\hdots,T-1\})$. For any $\delta \in (0,1)$, with a proper choice of learning rate of order
    $\eta = \tilde{\Theta}(\gamma^2 \delta/N)$,
    after
    \[
    T = \tilde{\mathcal{O}}\left(\frac{N}{\gamma^4 \delta^2\varepsilon}\cdot \max\Big(1,\frac{1-\Earg{q_0(\by^*(\bx)\,|\,\bx)}}{\delta}\Big)\right)
    \]
    we have $p_{\bw_\tau}(\by^*(\bx) \,|\, \bx) \geq 1 - \varepsilon$ with probability at least $\Earg{q_0(\by^*(\bx)\,|\,\bx)} - \delta$.
\end{proposition}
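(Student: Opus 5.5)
The plan is to combine two ingredients. The first is the online regret bound of Proposition \ref{prop:pg_or_constant_lr_online} specialized to $q_t = p_{\bw_t}$. The second is a separate ``no-collapse'' argument showing that the population reward $J(\bw) \coloneqq \Earg{p_{\bw}(\by^*(\bx)\,|\,\bx)}$ does not decrease much along the on-policy trajectory.

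\textbf{Step 1 (regret bound).} Run the proof of Proposition \ref{prop:pg_or_constant_lr_online} with a general constant LR $\eta \le 1/(2N)$, i.e.\ the first case of Proposition \ref{prop:online_gd}, and apply the online-to-batch conversion as in the proof of \Cref{thm:pg_or_constant_lr}. With $q_t = p_{\bw_t}$ this gives
\[
\Earg{p_{\bw_\tau}(\by^*(\bx)\,|\,\bx)\big(-\log p_{\bw_\tau}(\by^*(\bx)\,|\,\bx)\big)} \lesssim \frac{\log(Tk\gamma^2)^2}{\eta\gamma^2 T}.
\]
The function $u\mapsto u(-\log u)$ is of order $\varepsilon$ at $u = 1-\varepsilon$, and it is bounded below by a constant multiple of $\min(u, 1-u)$. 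Markov's inequality then shows the following: outside an event of probability $\lesssim \log^2/(\eta\gamma^2 T\,\varepsilon\delta')$, the random likelihood $P_\tau \coloneqq p_{\bw_\tau}(\by^*(\bx)\,|\,\bx)$ lies either in $[1-\varepsilon,1]$ or in $[0,\delta']$. Here $\delta' \asymp \delta$ is a small threshold to be fixed later. So after PG every test sample is essentially ``solved'' or ``dead''; this dichotomy step is routine.

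\textbf{Step 2 (no collapse, the main obstacle).} We must show $\Earg{P_\tau} \ge \Earg{q_0(\by^*(\bx)\,|\,\bx)} - \delta/2$, i.e.\ on-policy PG does not lose much reward mass. Write $J$ for the population reward. Its exact on-policy gradient is $\nabla J(\bw) = \Earg{r\,\nabla\log p_{\bw}(\by\,|\,\bx)}$, so the PG step is an unbiased stochastic gradient ascent step on $J$.
\begin{itemize}[leftmargin=*]
\item First-order term: each step increases $\Earg{J(\bw_{t+1}) - J(\bw_t)}$ by $\eta\twonorm{\nabla J(\bw_t)}^2 \ge 0$.
\item Second-order term: a smoothness/Taylor bound on $\bw\mapsto p_{\bw}(\by^*\,|\,\bx)$ gives a Hessian of norm $\lesssim N$, using Lemma \ref{lem:logloss_smooth} together with $\twonorm{\nabla \log p}\le 2N$. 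The loss is then at most $\eta^2 N\,\Earg{r_t\twonorm{\nabla\log p_{\bw_t}(\by^{*(t)})}^2}$.
\end{itemize}
The key refinement is that this second-order term is only paid when $r_t=1$. In addition, by Lemma \ref{lem:grad_fine_bound} it is controlled by $N\,p_{\bw_t}(-\log p_{\bw_t})^2$, which is essentially the regret quantity from Step 1 up to a $\log$ factor. Summing over $t$, the total expected decrease is at most
\[
\eta^2 N^2\cdot\tilde{\mathcal{O}}\big(1/(\eta\gamma^2)\big) = \tilde{\mathcal{O}}(\eta N/\gamma^2),
\]
independent of $T$. Choosing $\eta = \tilde\Theta(\gamma^2\delta/N)$ makes this $\le \delta/2$, for all $t$ and hence for $\tau$. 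If this refinement fails, my fallback is to bound the decrease by $\eta^2 N \cdot \Earg{\#\{t: r_t=1\}}$ using the cruder gradient bound $\twonorm{\nabla\log p}\le 2N$, and then accept a worse $T$.

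\textbf{Step 3 (combine).} Write $\Earg{P_\tau} \le \Parg{P_\tau \ge 1-\varepsilon} + \delta' + \Parg{\text{middle}}$. Using Step 2, this gives
\[
\Parg{P_\tau\ge 1-\varepsilon} \ge \Earg{q_0} - \delta/2 - \delta' - \Parg{\text{middle}}.
\]
Take $\delta' = \delta/4$ and require $\Parg{\text{middle}}\le\delta/4$. By Step 1, with $\eta$ as above, this needs
\[
T \gtrsim \frac{N}{\gamma^4\delta^2\varepsilon}\ \text{(up to logs)}.
\]
The extra factor $\max\big(1,(1-\Earg{q_0})/\delta\big)$ should come from a finer version of the middle-mass estimate. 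The only mass that can be in the middle region is mass not already near $1$ under $q_0$, whose size is controlled by $1-\Earg{q_0}$. Normalizing the Markov bound by this mass rather than by $1$ produces the stated ratio. I expect Step 2, the $T$-independent control of the second-order drift, to be the delicate part. Steps 1 and 3 are bookkeeping on top of Proposition \ref{prop:online_gd}.
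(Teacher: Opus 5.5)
\textbf{Gap in Step 3.} Steps 1 and 2 follow the paper's architecture: a regret bound on $\sum_t\Earg{-p_t\log p_t}$ from the first case of Proposition~\ref{prop:online_gd}, plus a no-collapse bound obtained by viewing on-policy PG as SGD on a smooth objective, as in Lemma~\ref{lem:p_lower_bound}. Step 3, however, does not deliver the stated rate, and your explanation of the factor $\max\big(1,(1-\Earg{q_0})/\delta\big)$ is wrong (here $\Earg{q_0}$ abbreviates $\Earg{q_0(\by^*(\bx)\,|\,\bx)}$).

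First, the arithmetic. Your middle-mass bound is $\lesssim\log^2/(\eta\gamma^2T\varepsilon\delta')$. With $\delta'=\delta/4$, target $\delta/4$ and $\eta=\tilde\Theta(\gamma^2\delta/N)$, you need $T\gtrsim\log^2/(\eta\gamma^2\varepsilon\delta^2)=\tilde\Omega\big(N/(\gamma^4\delta^3\varepsilon)\big)$, not $N/(\gamma^4\delta^2\varepsilon)$. Had you really obtained $\delta^{-2}$ you would already be done, since the $\max$ factor is at least $1$. As it stands, you prove the proposition only with $T=\tilde{\mathcal{O}}(N/(\gamma^4\delta^3\varepsilon))$. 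That is worse than the statement by a factor $\min\big(\delta^{-1},(1-\Earg{q_0})^{-1}\big)$; for example, when $1-\Earg{q_0}\lesssim\delta$ the statement promises $\tilde{\mathcal{O}}(N/(\gamma^4\delta^2\varepsilon))$.

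The loss comes from the crude estimate $\Earg{P_\tau\indic[P_\tau\le\delta']}\le\delta'$, which forces the dead threshold to be $O(\delta)$. The missing idea, which the paper uses, is to keep the dead mass and bound it by Markov's inequality on $1-P_\tau$:
$\Parg{P_\tau<\alpha}\le\Earg{1-P_\tau}/(1-\alpha)\le\big(1-\Earg{q_0}+\text{drift}\big)/(1-\alpha)$.
The threshold then costs only $\alpha(1-\Earg{q_0})/(1-\alpha)$ beyond the unavoidable $1-\Earg{q_0}$. This allows $\alpha\asymp\min\big(\tfrac12,\delta/(1-\Earg{q_0})\big)$. The middle term $\Earg{-P_\tau\log P_\tau}/(\alpha\varepsilon)\lesssim\log^2/(\alpha\varepsilon\gamma^2\eta T)$ then yields exactly the factor $\max\big(1,(1-\Earg{q_0})/\delta\big)$.

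Your suggested mechanism is that only mass not already near $1$ under $q_0$ can occupy the middle region. Nothing in the argument supports this. After training, the middle mass is controlled only through $\Earg{-P_\tau\log P_\tau}$, and nothing ties $P_\tau$ to $q_0(\by^*(\bx)\,|\,\bx)$ sample by sample.

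\textbf{Step 2: a different route that works after repairs.} The paper proves an $N$-independent Hessian bound on the likelihood itself (Lemma~\ref{lem:prob_smoothness}, via $\twonorm{\nabla\log p}\le-2\log p$ and boundedness of $x\mapsto e^{-x}(x^2+x)$). It pays the factor $N$ through $\twonorm{\nabla\log p}^2\le 2N(-\log p)$, using Lemma~\ref{lem:grad_bound_smooth} with the $N$-smoothness of Lemma~\ref{lem:logloss_smooth}. The second-order term is then directly $\lesssim\eta^2N\,\Earg{-p_t\log p_t}$ and sums to $\tilde{\mathcal{O}}(\eta N/\gamma^2)$.

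You instead use $L\lesssim N$ together with $\twonorm{\nabla\log p}^2\le4\log^2p$. Three repairs are needed:
\begin{enumerate}
\item Your justification of $L\lesssim N$ via $\twonorm{\nabla\log p}\le2N$ only gives $4N^2$ for the term $p\,\nabla\log p\,\nabla\log p^\top$. Use Lemma~\ref{lem:grad_fine_bound} there instead.
\item The prefactor in your display must be $\eta^2N$, since $\eta^2N^2\cdot\tilde{\mathcal{O}}(1/(\eta\gamma^2))=\tilde{\mathcal{O}}(\eta N^2/\gamma^2)$.
\item Relating $\sum_t\Earg{p_t\log^2p_t}$ to the regret ``up to a log factor'' requires a truncation. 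For example, $p\log^2p\le\log T\cdot(-p\log p)$ when $p\ge1/T$, and $p\log^2p\le\log^2T/T$ when $p<1/T$ (for $T\ge e^2$).
\end{enumerate}
With these repairs your drift is also $\tilde{\mathcal{O}}(\eta N/\gamma^2)$, at the cost of one extra logarithm.
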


We can observe that with a fixed probability of $\Earg{q_0(\by^*(\bx) \,|\, \bx)} - \delta$, online PG can boost the likelihood to be $1 - \varepsilon$, with a rate that although suboptimal, still depends only linearly on the sequence length and polynomially on other parameters.
Crucially, the role of the base model here is to determine the probability of success, which can at most be $\Earg{q_0(\by^*(\bx) \,|\, \bx)}$.
While the above proposition does not characterize which samples at test-time win this lottery ticket, \Cref{fig:likelihood_vs_iters} hints that these are samples where the initial likelihood is non-trivial. Further, we expect the above rate to be an artifact of our analysis and possibly improvable.

To prove the above proposition, we make a number of observations.

\begin{lemma}\label{lem:prob_smoothness}
    Recall $\sup_{\bw,\bx,\by,i} \twonorm{\phi(\bx,\by_{1:i})} \leq 1$. We then have
    $\sup_{\bw,\bx,\by}\twonorm{\nabla^2_{\bw} p_{\bw}(\by \,|\, \bx)} \lesssim 1$.
\end{lemma}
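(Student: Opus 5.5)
The plan is to write $\nabla^2 p_{\bw}$ in terms of derivatives of $\log p_{\bw}$. Then bound each piece by a function of $-\log p_{\bw}(\by\,|\,\bx)$ that does not depend on $N$, and absorb it using the prefactor $p_{\bw}$. Write $p = p_{\bw}(\by\,|\,\bx)$. Since $\nabla p = p\,\nabla \log p$, differentiating once more gives
\[
\nabla^2_{\bw} p = p\left(\nabla \log p\,\nabla\log p^\top + \nabla^2 \log p\right),
\qquad\text{hence}\qquad
\twonorm{\nabla^2 p} \le p\left(\twonorm{\nabla \log p}^2 + \twonorm{\nabla^2\log p}\right).
\]

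\textbf{Gradient term.} Lemma~\ref{lem:grad_fine_bound} with $R=1$ gives $\twonorm{\nabla\log p}\le -2\log p$, so the first term is at most $4p\log^2 p$.

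\textbf{Hessian term.} Lemma~\ref{lem:logloss_smooth} only gives the crude bound $N$, which is too weak. I will refine it token by token, in the same way Lemma~\ref{lem:grad_fine_bound} refines the gradient. Fix a token $i$ and let $Y_i\sim p_{\bw}(\cdot\,|\,\bx,\by_{1:i-1})$. By Lemma~\ref{lem:logloss_smooth}, the $i$-th summand of the Hessian is minus the covariance of $\phi(\bx,\by_{1:i-1},Y_i)$. The covariance minimizes the second moment over all centering points, so I can center at the observed feature $\phi(\bx,\by_{1:i})$ instead of at the mean. This gives
\[
\twonorm{\Cov(\phi(\bx,\by_{1:i-1},Y_i))}\le \Earg{\twonorm{\phi(\bx,\by_{1:i-1},Y_i)-\phi(\bx,\by_{1:i})}^2}\le 4\,\Parg{Y_i\neq y_i}.
\]
Since the features have norm at most $1$, the right-hand side is $4(1-p_{y_i})\le -4\log p_{y_i}$, where $p_{y_i}=p_{\bw}(y_i\,|\,\bx,\by_{1:i-1})$. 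Summing over $i$ and using $\sum_i \log p_{y_i} = \log p$ gives $\twonorm{\nabla^2\log p}\le -4\log p$.

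\textbf{Combining.} Together the two bounds give $\twonorm{\nabla^2 p}\le 4p\log^2 p - 4p\log p$. This is bounded uniformly over $p\in(0,1]$: $\sup_u u\log^2 u = 4/e^2$ and $\sup_u (-u\log u)=1/e$, so $\twonorm{\nabla^2 p}\lesssim 1$, independent of $N$, $k$, $\bw$, $\bx$ and $\by$.

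The main obstacle is the Hessian term. Getting an $N$-free bound requires replacing the crude per-token bound $1$ with the likelihood-dependent bound $-4\log p_{y_i}$, so that the sum over tokens collapses to $-\log p$. The key step for this is the covariance-recentering trick.
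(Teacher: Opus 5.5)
Your proof is correct and follows essentially the same route as the paper. Both use the identity $\nabla^2 p = p(\nabla\log p\,\nabla\log p^\top + \nabla^2\log p)$, Lemma~\ref{lem:grad_fine_bound} for the gradient term, a per-token covariance bound of order $1-p_{y_i}\le -\log p_{y_i}$ so that the sum collapses to $-\log p$, and boundedness of $e^{-x}(x^2+x)$. The only difference is the per-token covariance step: your recentering at the observed feature $\phi(\bx,\by_{1:i})$ gives $4(1-p_{y_i})$ directly, whereas the paper bounds the operator norm by the trace, reduces the centered second moment to a first moment using that centered features have norm at most $2$, and then applies the triangle inequality to reach $8(1-p_{y_i})$.
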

\begin{proof}
    Define the shorthand notations 
    \[
    p \coloneqq p_{\bw}(\by\,|\,\bx), \quad p^i_y \coloneqq p_{\bw}(y\,|\,\bx,\by_{1:i-1}), \quad p^i \coloneqq p_{\bw}(y_i \,|\, \bx, \by_{1:i-1}), \quad Y_i \sim p_{\bw}(\cdot \,|\, \bx, \by_{1:i-1}).
    \]
    We have
    \[
    \nabla^2 p = p\nabla \log p \nabla \log p^\top + p\nabla^2 \log p.
    \]
    Therefore,
    \begin{align}
        \twonorm{\nabla^2 p} &\leq p\twonorm{\nabla \log p}^2 + p\twonorm{\nabla ^2 \log p}\nonumber\\
        &\leq p\left(\sum_{i=1}^N \twonorm{\nabla \log p^i}\right)^2 + p\sum_{i=1}^N\twonorm{\nabla ^2 \log p^i}\label{eq:prob_smoothness_interm}
    \end{align}
    Further, from Lemma \ref{lem:grad_fine_bound} with $R=1$, recall that $\twonorm{\nabla \log p^i} \leq -2\log p^i$. Also, from the proof of Lemma \ref{lem:logloss_smooth}, recall that
    \begin{align*}
        \twonorm{\nabla^2 \log p^i} &= \twonorm{\Eargs{Y_i}{\phi_{Y_i}\phi_{Y_i}^\top} - \Eargs{Y_i}{\phi_{Y_i}}\Earg{\phi_{Y_i}}^\top}\\
        &\leq \Eargs{Y_i}{\twonorm{\phi_{Y_i}}^2} - \twonorm{\Eargs{Y_i}{\phi_{Y_i}}}^2 && \text{(Since $\twonorm{\Cov_{Y_i}(\phi_{Y_i})} \leq \Tr(\Cov_{Y_i}(\phi_{Y_i}))$)}\\
        &= \Eargs{Y_i}{\twonorm{\phi_{Y_i} - \Eargs{Y_i}{\phi_{Y_i}}}^2}\\
        &\leq 2\Eargs{Y_i}{\twonorm{\phi_{Y_i} - \Eargs{Y_i}{\phi_{Y_i}}}} && \text{(Since $\twonorm{\phi_{Y_i} - \Eargs{Y_i}{\phi_{Y_i}}} \leq 2$)}\\
        &\leq 2\twonorm{\phi_{y_i} - \Eargs{Y_i}{\phi_{Y_i}}} + 2\sum_{y \neq y_i}p^i_y\twonorm{\phi_y - \Earg{Y_i}{\phi_{Y_i}}}\\
        &\leq 4(1-p_i) + 4\sum_{y \neq y_i}p^i_y && \text{(From the proof of Lemma \ref{lem:grad_fine_bound})}\\
        &\leq 8(1-p_i)\\
        &\leq -8 \log p_i,
    \end{align*}
    where $\phi_y \coloneqq \phi(\bx,\by_{1:i-1},y)$. Plugging these estimates back into \eqref{eq:prob_smoothness_interm} yields
    \begin{align*}
        \twonorm{\nabla^2 p} \lesssim p\left(\Big(\sum_{i=1}^N -\log p_i\Big)^2 - \sum_{i=1}^N\log p_i\right) = e^{\log p}\left((-\log p)^2 - \log p\right).
    \end{align*}
    Using the fact that the function $x \mapsto e^{-x}(x^2 + x)$ is bounded over $x \geq 0$ completes the proof.
\end{proof}

To apply Proposition \ref{prop:pg_or_constant_lr_online}, we need to lower bound $q_\tau$ where $\tau \sim \Unif\{0,\hdots,T-1\}$. In the case of on-policy PG, this means lower bounding $p_{\bw_\tau}$ itself, which is the objective of Proposition \ref{prop:pg_or_constant_lr_online}.
We thus need another tool to lower bound $p_{\bw_\tau}$, and use this to achieve an improved conditional lower bound on $p_{\bw_\tau}$ using \ref{prop:pg_or_constant_lr_online}.
We establish this lower bound by using the fact that on-policy PG is effectively doing stochastic gradient ascent on the function $\bw \mapsto \Earg{p_{\bw}(\by^*(\bx)\,|\,\bx)}$.
Therefore, this quantity should not decrease significantly from its initial value along the PG dynamics if the learning rate is sufficiently small.
This fact is established by the following lemma.
\begin{lemma}\label{lem:p_lower_bound}
    Let $(\bw_t)$ denote the on-policy \eqref{eq:pg-or} iterates where $q_t = p_{\bw_t}$, initialized from the base policy $p_{\bw_0} = q_0$. Suppose \Cref{assump:seq_margin} holds and
    $\twonorm{\bw_0}^2 \lesssim \log(Tk\gamma^2)/\gamma^2$ for simplicity.
    Define $\tau \sim \Unif(\{0,\hdots,T-1\})$. Then, for any constant learning rate $\eta \leq 1/(2N)$, we have
    \[
    \Earg{p_{\bw_\tau}(\by^*(\bx) \,|\, \bx)} \geq \Earg{q_0(\by^*(\bx) \,|\, \bx)} - \frac{C\eta N \log(Tk\gamma^2)^2}{\gamma^2},
    \]
    where $C > 0$ is an absolute constant.
\end{lemma}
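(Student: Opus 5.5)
The plan is to view on-policy \eqref{eq:pg-or} as stochastic gradient ascent on the population objective $F(\bw) \coloneqq \Earg{p_{\bw}(\by^*(\bx)\,|\,\bx)}$, and to exploit the smoothness of $p_{\bw}$ from Lemma \ref{lem:prob_smoothness}. With $q_t = p_{\bw_t}$, the update is $\bw_{t+1} = \bw_t + \eta g_t$ with
\[
g_t = r_t\nabla\log p_{\bw_t}(\by^{(t)}\,|\,\bx^{(t)}) = r_t\nabla \log p_{\bw_t}({\by^*}^{(t)}\,|\,\bx^{(t)}).
\]
Conditioned on $\bw_t$ and $\bx^{(t)}$, we have $\Earg{r_t} = p_{\bw_t}({\by^*}^{(t)}\,|\,\bx^{(t)})$. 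Since $\bx^{(t)}$ is a fresh i.i.d.\ sample, it follows that $\Earg{g_t \,|\, \bw_t} = \nabla F(\bw_t)$. Lemma \ref{lem:prob_smoothness} gives $\twonorm{\nabla^2 F}\le L$ for an absolute constant $L$, so the descent lemma yields
\[
F(\bw_{t+1}) \ge F(\bw_t) + \eta\binner{\nabla F(\bw_t)}{g_t} - \tfrac{L\eta^2}{2}\twonorm{g_t}^2 .
\]
Taking conditional expectations makes the linear term equal to $\eta\twonorm{\nabla F(\bw_t)}^2 \ge 0$, so we may drop it. This leaves
\[
\Earg{F(\bw_{t+1})} \ge \Earg{F(\bw_t)} - \tfrac{L\eta^2}{2}\Earg{\twonorm{g_t}^2}.
\]
Summing over $t$ gives, for every $s\le T$,
\[
\Earg{F(\bw_s)} \ge F(\bw_0) - \tfrac{L\eta^2}{2}\sum_{t=0}^{T-1}\Earg{\twonorm{g_t}^2}.
\]
Averaging over $s=\tau$ then yields the statement, provided the accumulated second moment is controlled.

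The key step is bounding $\sum_t \Earg{\twonorm{g_t}^2}$. By Lemma \ref{lem:grad_fine_bound},
\[
\twonorm{g_t}^2 \le 2 r_t\twonorm{\nabla\tilde\ell_t(\bw_t)}\cdot\big(-\log p_{\bw_t}({\by^*}^{(t)}\,|\,\bx^{(t)})\big),
\]
where $\tilde\ell_t(\bw) = -\log p_{\bw}({\by^*}^{(t)}\,|\,\bx^{(t)})$. I would not use this crude product form. Instead, I would use smoothness via Lemma \ref{lem:grad_bound_smooth}: $\ell_t = r_t\tilde\ell_t$ is nonnegative and $N$-smooth by Lemma \ref{lem:logloss_smooth}, so $\twonorm{g_t}^2 = \twonorm{\nabla\ell_t(\bw_t)}^2 \le 2N\ell_t(\bw_t)$. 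Hence
\[
\sum_t\Earg{\twonorm{g_t}^2} \le 2N\sum_t\Earg{r_t\tilde\ell_t(\bw_t)} = 2N\sum_t\Earg{q_t({\by^*}^{(t)})\tilde\ell_t(\bw_t)}.
\]
This last sum is exactly what Proposition \ref{prop:pg_or_constant_lr_online} controls. Applying it with constant step $\eta\le 1/(2N)$ (rerunning the first case of Proposition \ref{prop:online_gd} with general $\eta$ rather than $1/(2N)$) gives
\[
\sum_t \Earg{q_t\tilde\ell_t} \lesssim \frac{\twonorm{\bv-\bw_0}^2}{\eta} + T\,Nke^{-R_v\gamma} .
\]
We choose $\bv = R_v\bw^*$ with $R_v = \gamma^{-1}\log(Tk\gamma^2)$ and use the assumed bound on $\twonorm{\bw_0}$. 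The exponential term is then $\lesssim N/\gamma^2$ up to logs, and the first term is $\lesssim \log(Tk\gamma^2)^2/(\eta\gamma^2)$.

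Plugging this in, the loss in $F$ is at most
\[
L\eta^2 N\cdot O\!\left(\frac{\log^2}{\eta\gamma^2} + \frac{TN}{T\gamma^2}\cdot\text{(tail)}\right) \lesssim \frac{\eta N\log(Tk\gamma^2)^2}{\gamma^2},
\]
which is the claimed bound. The main obstacle is the exponential residual term $\eta^2 N\cdot T Nk e^{-R_v\gamma}$. It must be made $\lesssim \eta N\log^2/\gamma^2$, which requires $e^{-R_v\gamma}\lesssim 1/(\eta TNk\gamma^2)$. I would handle this by enlarging $R_v$ to $\gamma^{-1}\log(TNk\gamma^2)$, so that only logarithmic factors change; since $\eta\le 1/(2N)$ and the factors of $N$ inside the log, these are absorbed into $C$ and the $\log(Tk\gamma^2)^2$ (up to the usual $\log N$ slack).
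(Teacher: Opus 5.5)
Your proposal is correct and follows the paper's proof essentially step for step: you treat on-policy PG-OR as unbiased stochastic ascent on $\bw\mapsto\Earg{p_{\bw}(\by^*(\bx)\,|\,\bx)}$, invoke the $O(1)$-smoothness from Lemma~\ref{lem:prob_smoothness}, bound $\twonorm{g_t}^2\le 2N\,r_t\tilde\ell_t(\bw_t)$ by $N$-smoothness of the log-loss, and control $\sum_t\Earg{q_t\tilde\ell_t(\bw_t)}$ by rerunning Proposition~\ref{prop:pg_or_constant_lr_online} with a general $\eta\le 1/(2N)$. The one thing to drop is your ``main obstacle'': with $R_v=\gamma^{-1}\log(Tk\gamma^2)$ the residual is $O(\eta^2N^2/\gamma^2)=O(\eta N/\gamma^2)$ because $\eta N\le 1/2$, so you incur no $\log N$ slack, and the paper takes $R_v=\gamma^{-1}\log(Nk\gamma^2\eta T)$ for this reason.
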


\begin{proof}
    Let $F(\bw) \coloneqq -\Earg{p_{\bw}(\by^*(\bx) \,|\, \bx)}$ and recall $\ell_t(\bw) \coloneqq -r(\bx^{(t)},\by^{(t)})\log p_{\bw}(\by^*(\bx^{(t)} \,|\, \bx^{(t)})$.
    Also define $p_t \coloneqq p_{\bw_t}(\by^*(\bx^{(t)})\,|\,\bx^{(t)})$ and $r_t \coloneqq r(\bx^{(t)},\by^{(t)})$ for brevity.
    Recall that using the 0-1 structure of the reward, we can write \eqref{eq:pg-or} as
    \[
    \bw_{t+1} = \bw_t - \eta \nabla \ell_t(\bw_t).
    \]
    Then
    \[
    \Earg{\nabla \ell_t(\bw_t) \,|\, \bw_t} = -\Earg{r(\bx^{(t)},\by^{(t)})\nabla \log p_{\bw_t}(\by^*(\bx^{(t)}) \,|\, \bx^{(t)}) \,|\, \bw_t} = \nabla F(\bw_t),
    \]
    as expected from policy gradient, where we used $\Earg{r(\bx^{(t)},\by^{(t)}) \,|\, \bx^{(t)},\bw_t} = p_{\bw_t}(\by^*(\bx^{(t)})\,|\,\bx^{(t)})$.
    Further, Lemma \ref{lem:prob_smoothness} proves that $\twonorm{\nabla^2 F(\bw)} \leq L$ where $L=\mathcal{O}(1)$ for all $\bw$.
    Hence, using the smoothness of $F$, we can write
    \begin{align*}
        F(\bw_{t+1}) &\leq F(\bw_t) - \binner{\nabla F(\bw_t)}{\eta \nabla \ell_t(\bw_t)} + \frac{L\eta^2}{2}\twonorm{\nabla \ell_t(\bw_t)}^2\\
    &= F(\bw_t) - \binner{\nabla F(\bw_t)}{\eta \nabla \ell_t(\bw_t)} + \frac{L\eta^2r_t}{2}\twonorm{\nabla \log p_t}^2,
    \end{align*}
    where we used $r^2 = r$. By taking expectation from both sides, we obtain
    \begin{align*}
        \Earg{F(\bw_{t+1})} &\leq \Earg{F(\bw_t)} - \eta \Earg{\twonorm{\nabla F(\bw_t)}^2} + \frac{L\eta^2}{2}\Earg{p_t\twonorm{\nabla \log p_t}^2}\\
        &\leq \Earg{F(\bw_t)} + \frac{L\eta^2 N}{2}\Earg{-p_t \log p_t} && \text{(By Lemma \ref{lem:logloss_smooth})}\\
        &\leq \Earg{F(\bw_0)} + \frac{L\eta^2 N}{2}\sum_{t=0}^{T-1}\Earg{-p_t\log p_t}.
    \end{align*}
    Therefore, using $\Earg{p_{\bw_\tau}(\by^*(\bx)\,|\, \bx)} = \tfrac{1}{T}\sum_{t=0}^{T-1}\Earg{p_{\bw_t}(\by^*(\bx)\,|\,\bx)}$, we have
    \begin{equation}
        \Earg{p_{\bw_\tau}(\by^*(\bx)\,|\,\bx)} \geq \Earg{q_0(\by^*(\bx)\,|\,\bx)} - \frac{L\eta^2 N}{2}\sum_{t=0}^{T-1}\Earg{-p_t \log p_t}.\label{eq:p_lower_bound_interm}
    \end{equation}
    To bound $\sum_{t=0}^{T-1}\Earg{-p_t \log p_t}$, we can follow the exact same path as the proof of Proposition \ref{prop:pg_or_constant_lr_online}, while keeping $\eta \leq 1/(2N)$ in all equations, which yields
    \begin{equation}\label{eq:plogp_bound}
        \sum_{t=0}^{T-1}\Earg{-p_t \log p_t} \lesssim \frac{\log(Nk\gamma^2\eta T)^2}{\gamma^2\eta } + \frac{\twonorm{\bw_0}^2}{\eta T}.
    \end{equation}
    Using the bound $\twonorm{\bw_0} \lesssim \log(Tk\gamma^2)/\gamma$, we obtain
    \[
    \sum_{t=0}^{T-1}\Earg{-p_t \log p_t} \lesssim \frac{\log(Tk\gamma^2)^2}{\gamma^2 \eta}.
    \]
    Substituting this bound into \eqref{eq:p_lower_bound_interm} results in
    \[
    \Earg{p_{\bw_\tau}(\by^*(\bx) \,|\, \bx)} \geq \Earg{q_0(\by^*(\bx)\,|\,\bx)} - \frac{CN\eta\log(Tk\gamma^2)^2}{\gamma^2},
    \]
    for some absolute constant $C > 0$, where we used the fact that $L = \mathcal{O}(1)$.
\end{proof}

With the tools above, we can state the proof of the proposition.

\begin{proof}[Proof of Proposition \ref{prop:on-pg}]
    Let $\alpha \in (0,1)$ to be fixed later. For convenience, we use the notation $p_{\bw_\tau} \coloneqq p_{\bw_\tau}(\by^*(\bx)\,|\,\bx)$ where $(\bx,\by^*(\bx))$ is an independently drawn test sample.
    We rely on the following decomposition
    \begin{align*}
        \Parg{1 - p_{\bw_\tau} \geq \varepsilon} &= \Parg{1 - p_{\bw_\tau} \geq \varepsilon \,|\, p_{\bw_\tau} \geq \alpha}\Parg{p_{\bw_\tau} \geq \alpha} + \Parg{1 - p_{\bw_\tau} \geq \varepsilon \,|\, p_{\bw_\tau} < \alpha}\Parg{p_{\bw_\tau} < \alpha}\\
        &\leq \Parg{1 - p_{\bw_\tau} \geq \varepsilon \,|\, p_{\bw_\tau} \geq \alpha}\Parg{p_{\bw_\tau} \geq \alpha} + \Parg{p_{\bw_\tau} < \alpha}\\
        &\leq \frac{1}{\varepsilon}\Earg{1 - p_{\bw_\tau} \,|\, p_{\bw_\tau} \geq \alpha}\Parg{p_{\bw_\tau} \geq \alpha} + \frac{\Earg{1 - p_{\bw_\tau}}}{1 - \alpha} \tag*{(By Markov's Inequality)}\\
        &\leq \frac{1}{\alpha\varepsilon}\Earg{p_{\bw_\tau}(1 - p_{\bw_\tau}) \,|\, p_{\bw_\tau} \geq \alpha}\Parg{p_{\bw_\tau} \geq \alpha} + \frac{\Earg{1 - p_{\bw_\tau}}}{1-\alpha}\\
        &\leq \frac{1}{\alpha\varepsilon}\Earg{-p_{\bw_\tau}\log p_{\bw_\tau} \,|\, p_{\bw_\tau} \geq \alpha}\Parg{p_{\bw_\tau} \geq \alpha} + \frac{\Earg{1 - p_{\bw_\tau}}}{1-\alpha}\\
        &\leq \frac{1}{\alpha\varepsilon}\Earg{-p_{\bw_\tau}\log p_{\bw_\tau}} + \frac{\Earg{1 - p_{\bw_\tau}}}{1 - \alpha} \tag*{(By the law of total expectation)}\\
        &\leq \frac{C\log(Tk\gamma^2)^2}{\alpha \varepsilon\gamma^2 \eta T} + \frac{\Earg{1 - p_{\bw_\tau}}}{1 - \alpha} \tag*{(Using \eqref{eq:plogp_bound})}\\
        &\leq \frac{C\log(Tk\gamma^2)^2}{\alpha \varepsilon\gamma^2 \eta T} + \frac{1 - \Earg{q_0(\by^*(\bx)\,|\,\bx)}}{1 - \alpha} + \frac{C\eta N \log(Tk\gamma^2)^2}{\gamma^2(1-\alpha)}, \tag*{(By Lemma \ref{lem:p_lower_bound})}
    \end{align*}
    where $C > 0$ is an absolute constant. For a fixed $\alpha$, the optimal (constant) learning rate minimizing the above bound is given by $\eta = \sqrt{(1-\alpha) / (\alpha NT\varepsilon)}$. We will have to choose $T$ sufficiently large so that this learning rate is also smaller than $1/(2N)$, needed to obtain \eqref{eq:plogp_bound}.
    By using this optimal $\eta$, we obtain
    \[
    \Parg{1 - p_{\bw_\tau} \geq \varepsilon} \leq 1 - \Earg{q_0} + \frac{\alpha(1 - \Earg{q_0})}{1-\alpha} + \tilde{\mathcal{O}}\left(\sqrt{\frac{N}{\alpha(1-\alpha)\gamma^4  T\varepsilon}}\right),
    \]
    where we used $q_0 \coloneqq q_0(\by^*(\bx)\,|\,\bx)$ for brevity. Next, we must choose $\alpha$.
    \begin{itemize}
        \item If $1 - \Earg{q_0} \ll \delta$, then to achieve probability of error $1 - \Earg{q_0} + \delta$, the optimal $\alpha$ is $1/2$, and
        $T = \tilde{\mathcal{O}}(N/(\gamma^4 \varepsilon \delta^2))$ iterations are sufficient. Note that the learning rate will satisfy $\eta \asymp \sqrt{1/(NT\varepsilon)} = \tilde{\Theta}(\gamma^2\delta/N) \ll 1/N$.

        \item If $1 - \Earg{q_0} \gg \delta$, then a suitable choice of $\alpha$ is $\alpha = \delta/(2(1-\Earg{q_0}))$, and
        $T = \tilde{\mathcal{O}}(N(1-\Earg{q_0})/(\gamma^4 \varepsilon \delta^3))$ iterations are sufficient. Note that the learning rate will satisfy $\eta \asymp \sqrt{\delta/((1-\Earg{q_0})NT\varepsilon)} = \tilde{\Theta}(\gamma^2\delta/N) \ll 1/N$.
    \end{itemize}
    Overall, we can conclude that with a learning rate of $\eta = \tilde{\Theta}(\gamma^2\delta/N)$ and with
    \[
    T = \tilde{\mathcal{O}}\left(\frac{N}{\gamma^4 \varepsilon \delta^2}\cdot \max\Big(1,\frac{1-\Earg{q_0}}{\delta}\Big)\right)
    \]
    iterations, we can achieve $p_{\bw_\tau}(\by^*(\bx) \,|\, \bx) \geq 1 - \varepsilon$ with probability at least $\Earg{q_0(\by^*(\bx) \,|\, \bx)} - \delta$.
\end{proof}

\subsection{Policy Gradient with Clipped Importance Weights}\label{app:off_pg}
In this section, we study the effect of clipped importance weights on the convergence of policy gradients. Namely, we study updates of the form
\begin{equation}\label{eq:pg-or-clipped}
    \bw_{t+1} = \bw_t + \eta_t r(\bx^{(t)},\by^{(t)})\rho_t\nabla \log p_{\bw_t}(\by^{(t)}\,|\,\bx^{(t)}),
\end{equation}
where $\by^{(t)} \sim q_t(\cdot \,|\, \bx^{(t)})$ and
\[
\rho_t = \operatorname{Clip}\big(\frac{p_{\bw_t}(\by^{(t)}\,|\,\bx^{(t)})}{q_t(\by^{(t)} \,|\, \bx^{(t)})}, \zeta^{-1}, \zeta\big), \quad \zeta \geq 1.
\]
We begin by adapting the conditional convergence results of \eqref{eq:pg-or} to use clipped importance weights, as outlined below.

\begin{theorem}[PG-OR with Clipped Importance Weights]\label{thm:pg_or_clipped}
    Let $(\bw_t)_{t=0}^{T-1}$ denote the iterates of PG with outcome rewards and clipped importance weights as in \eqref{eq:pg-or-clipped}.
    Suppose \Cref{assump:seq_margin} holds. For simplicity, assume $\twonorm{\bw_0} \lesssim 1/\gamma \cdot \log(NTk\gamma^2)$. Let $\tau \sim \Unif(\{0,\hdots,T-1\})$ and $\bx$ denote an i.i.d.\ test sample. 
    For any $\alpha \in (0,1]$, define the event $\mathcal{E}_\alpha \coloneqq \{\min_t q_t(\by^*(\bx)\,|\,\bx) \geq \alpha\}$ and $\pi_\alpha \coloneqq \Parg{\mathcal{E}_\alpha}$. Then
    \begin{enumerate}
        \item For the constant learning rate $\eta_t = 1/(2\zeta N)$ we have
        \[
        \Earg{p_{\bw_\tau}(\by^*(\bx) \,|\, \bx) \,|\, \mathcal{E}_\alpha} \geq 1 - \tilde{\mathcal{O}}\left(\frac{\zeta^2N}{\pi_\alpha \alpha \gamma^2 T}\right).
        \]
        Further, the same bound holds for the averaged iterate $\bar{\bw}^\PG_T = \frac{1}{T}\sum_{t=0}^{T-1}\bw_t$.
        
        \item For the adaptive learning rate $\eta_t = (4\zeta + 2\rho_t \twonorm{\nabla \log p_{\bw_t}(\by^{(t)}\,|\,\bx^{(t)})})^{-1}$ we have
        \[
        \Earg{p_{\bw_\tau}(\by^*(\bx) \,|\, \bx) \,|\, \mathcal{E}_\alpha} \geq 1 - \tilde{\mathcal{O}}\left(\frac{\zeta^2}{\pi_\alpha \alpha \gamma^2 T}\right).
        \]
    \end{enumerate}
    
\end{theorem}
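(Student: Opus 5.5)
The plan is to repeat the proofs of Propositions~\ref{prop:pg_or_constant_lr_online} and~\ref{prop:pg_or_adaptive_lr_online}, with the clipped weight $\rho_t\in[\zeta^{-1},\zeta]$ absorbed into the per-round convex loss, and then reuse the online-to-batch conversion from the proofs of \Cref{thm:pg_or_constant_lr} and \Cref{thm:pg_or_adaptive_lr}.

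\textbf{Rewriting the update.} Since $r_t=\indic[\by^{(t)}=\by^*(\bx^{(t)})]$, the update~\eqref{eq:pg-or-clipped} can be written as $\bw_{t+1}=\bw_t-\eta_t\nabla\ell_t(\bw_t)$ with
\[
\ell_t(\bw)\coloneqq -r_t\rho_t\log p_{\bw}({\by^*}^{(t)}\,|\,\bx^{(t)}).
\]
Here $\rho_t$ is frozen at its value at $\bw_t$. This is legitimate because online gradient descent only needs $\ell_t$ to be convex and non-negative for the comparator inequality, and $\rho_t$ is a constant once $\by^{(t)}$ and $\bw_t$ are fixed. Note that $\rho_t$ depends on $\bw_t$, but only through quantities that are fixed at round $t$, so Proposition~\ref{prop:online_gd} still applies.

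\textbf{Constant learning rate.} By Lemma~\ref{lem:logloss_smooth}, $\ell_t$ is $\zeta N$-smooth. With $\eta=1/(2\zeta N)$, the first case of Proposition~\ref{prop:online_gd} gives
\[
\frac1T\sum_t r_t\rho_t\tilde\ell_t(\bw_t)\le \frac2T\sum_t r_t\rho_t\tilde\ell_t(\bv)+\frac{2\zeta N\twonorm{\bv-\bw_0}^2}{T},
\]
where $\tilde\ell_t(\bw)\coloneqq-\log p_{\bw}({\by^*}^{(t)}\,|\,\bx^{(t)})$. The next step is to lower bound the left side using $\rho_t\ge\zeta^{-1}$ and upper bound the right side using $\rho_t\le\zeta$. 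This yields
\[
\frac1T\sum_t r_t\tilde\ell_t(\bw_t)\le \frac{2\zeta^2}{T}\sum_t r_t\tilde\ell_t(\bv)+\frac{2\zeta^2N\twonorm{\bv-\bw_0}^2}{T}.
\]
Take expectations using $\Earg{r_t\,|\,\bw_t}=q_t({\by^*}^{(t)}\,|\,\bx^{(t)})$, valid because $\bw_t$ is independent of $\by^{(t)}$, and bound $q_t\le1$ on the right. Then choose $\bv=R_v\bw^*$ with $R_v=\gamma^{-1}\log(NkT\gamma^2)$, for which $\tilde\ell_t(\bv)\le Nke^{-R_v\gamma}$. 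This gives an online bound of order $\zeta^2N\log^2(\cdot)/(\gamma^2T)$.

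The conversion to the stated claim then proceeds in three steps:
\begin{itemize}
\item Replace $\bx^{(t)}$ by the test point $\bx$, which has the same law and is independent of $\bw_t,q_t$.
\item Condition on $\mathcal{E}_\alpha$ via the law of total expectation, which costs a factor $1/\pi_\alpha$, and use $q_t\ge\alpha$ on this event, which costs a factor $1/\alpha$.
\item Use $1-x\le-\log x$. For the random iterate $\tau$ this gives the bound directly; for $\bar{\bw}^\PG_T$, apply Jensen's inequality to the convex negative log-likelihood first.
\end{itemize}

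\textbf{Adaptive learning rate.} By Lemma~\ref{lem:grad_fine_bound}, $\twonorm{\nabla\ell_t}\le2\ell_t$, so $C=2$ in the second case of Proposition~\ref{prop:online_gd}. The stated step size is
\[
\eta_t=\frac{1}{4\zeta+2\rho_t\twonorm{\nabla\log p}}=\frac{1/2}{2\zeta+\twonorm{\nabla\ell_t}}
\]
whenever $r_t=1$ (when $r_t=0$ the update vanishes and $\eta_t$ is irrelevant). This matches the proposition with $\eta=1/2$ and $\lambda=2\zeta$. We obtain
\[
\frac1T\sum_t\frac{\ell_t(\bw_t)}{1+\zeta^{-1}\ell_t(\bw_t)}\le\frac2T\sum_t\ell_t(\bv)+\frac{4\zeta\twonorm{\bv-\bw_0}^2}{T}.
\]
On the left, $\ell_t=r_t\rho_t\tilde\ell_t$ with $\zeta^{-1}\le\rho_t\le\zeta$. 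Since $x/(1+\zeta^{-1}x)$ is increasing in $x$, substituting the lower bound $\ell_t\ge r_t\zeta^{-1}\tilde\ell_t$ gives a left side of at least
\[
\frac{\zeta^{-1}r_t\tilde\ell_t}{1+\zeta^{-2}\tilde\ell_t}\ge\frac{r_t}{2\zeta}\min\bigl(\zeta^2,\tilde\ell_t\bigr).
\]
Because $\zeta\ge1$, we have $\min(\zeta^2,\tilde\ell_t)\ge\min(1,\tilde\ell_t)\ge 1-p_{\bw_t}$. The right side is at most $2\zeta\sum_t r_t\tilde\ell_t(\bv)/T+4\zeta\twonorm{\bv-\bw_0}^2/T$. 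Multiplying through by $2\zeta$ gives
\[
\Earg{\frac1T\sum_t q_t\,(1-p_{\bw_t})}\lesssim\frac{\zeta^2\log^2(\cdot)}{\gamma^2T},
\]
after the same choice of $\bv$ and taking expectations as before. Conditioning on $\mathcal{E}_\alpha$ as in the proof of \Cref{thm:pg_or_adaptive_lr} then finishes the argument.

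\textbf{Main obstacle.} The key point is checking that the comparator side loses only a single factor of $\zeta$ and the left side only another, for a total of $\zeta^2$. In the adaptive case this requires the monotonicity argument inside $x/(1+x/\lambda)$ above. A secondary concern is that $\rho_t$ depending on $\bw_t$ does not break the conditional-expectation step; it does not, because the comparator side is bounded pointwise before taking expectations.
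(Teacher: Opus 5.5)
Your proposal is correct and follows essentially the same route as the paper: freeze $\rho_t$ inside the convex loss $\ell_t(\bw)=-r_t\rho_t\log p_{\bw}(\by^*(\bx^{(t)})\,|\,\bx^{(t)})$, apply Proposition~\ref{prop:online_gd} (with smoothness $\zeta N$, or with $\lambda=2\zeta$ and $C=2$ in the adaptive case), remove $\rho_t$ using $\rho_t\in[\zeta^{-1},\zeta]$, and finish with the same online-to-batch conversion and conditioning on $\mathcal{E}_\alpha$. The one small difference is on the comparator side: the paper bounds $\Earg{r_t\rho_t\,|\,\bx^{(t)},\bw_t}\le\max(p_{\bw_t},q_t/\zeta)\le 1$ rather than using $\rho_t\le\zeta$ pointwise, which saves a factor of $\zeta$ on that term but leaves the final $\zeta^2$ rate unchanged, since the $\twonorm{\bv-\bw_0}^2$ term already carries $\zeta^2$.
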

\begin{proof}
    The proof of both statements follow by closely tracking the proof of \Cref{thm:pg_or_constant_lr} and \Cref{thm:pg_or_adaptive_lr} respectively.
    Since they are similar, we only state the proof of the second case.
    
    We can recreate the online bound of Proposition \ref{prop:pg_or_constant_lr_online} by retracing its proof steps and for learning rate $\eta_t = 1/\big(2(\zeta \lambda + \rho_t \twonorm{\nabla \log p_{\bw_t}(\by^{(t)}\,|\,\bx^{(t)})})\big)$ obtain
    \begin{equation}\label{eq:pg_off_adaptive_lr}
        \Earg{\frac{1}{T}\sum_{t=0}^{T-1}q_t(\by^*(\bx^{(t)})\,|\,\bx^{(t)})\min(\tfrac{\lambda}{2},-\log p_{\bw_t}(\by^*(\bx^{(t)})\,|\,\bx^{(t)}))} \lesssim \frac{\zeta^2\log(NTk\gamma^2)^2}{\gamma^2 T}.
    \end{equation}
    Specifically, this time we define $\ell_t(\bw) \coloneqq -r_t \rho_t \log p_{\bw}(\by^*(\bx^{(t)})\,|\,\bx^{(t)})$.
    Importantly, a consequence of this definition is that while $\rho_t$ depends on the weight $\bw_t$, this weight is frozen and there is no gradient from $\rho_t$ appearing in $\nabla \ell_t(\bw)$.
    Namely $\ell_t(\bw)$ is a still convex function of $\bw$, and satisfies $\twonorm{\nabla \ell_t} \leq 2\ell_t$ thanks to Lemma \ref{lem:grad_fine_bound}.
    Further, notice that \eqref{eq:pg-or-clipped} is equivalent to
    $\bw_{t+1} = \bw_t - \eta_t \nabla \ell_t$ due to the 0-1 reward structure, and recall that we use $\tilde{\ell}_t(\bw) \coloneqq -\log p_{\bw}(\by^*(\bx^{(t)})\,|\,\bx^{(t)})$ to denote negative log-likelihood.
    Therefore,
    we can invoke the second case of Proposition \ref{prop:online_gd} with the learning rate given above, which yields
    \[
    \frac{1}{T}\sum_{t=0}^{T-1}\frac{r_t \rho_t \tilde{\ell}_t(\bw_t)}{1 + 2\zeta^{-1}\lambda^{-1}\rho_tr_t\tilde{\ell}_t(\bw_t)} \leq \frac{2}{T}\sum_{t=0}^{T-1}r_t\rho_t \tilde{\ell}_t(\bv) + \frac{2\zeta \lambda\twonorm{\bv - \bw_0}^2}{T}.
    \]
    Using the bound $\rho_t \in [\zeta^{-1},\zeta]$ and $r_t \leq 1$, we can lower bound the RHS above and write
    \[
    \frac{1}{T}\sum_{t=0}^{T-1}\frac{r_t\zeta^{-1}\tilde{\ell}_t(\bw_t)}{1 + 2\lambda^{-1}\tilde{\ell}_t(\bw_t)} \leq \frac{2}{T}\sum_{t=0}^{T-1}r_t\rho_t \tilde{\ell}_t(\bv) + \frac{2\zeta \lambda\twonorm{\bv - \bw_0}^2}{T}.
    \]
    Next, we take expectation from both sides and use the fact that $\Earg{r_t \,|\, \bx^{(t)},\bw_t} = q_t(\by^*(\bx^{(t)})\,|\,\bx^{(t)}) \eqqcolon q_t$ and that
    \[
    \Earg{r_t\rho_t \,|\, \bx^{(t)},\bw_t} \leq q_t \max(p_{\bw_t}/q_t, \zeta^{-1}) \leq 1. 
    \]
    We can therefore write
    \[
    \frac{\zeta^{-1}}{T}\Earg{\sum_{t=0}^{T-1}\frac{q_t \tilde\ell_t(\bw_t)}{1 + 2\lambda^{-1}\tilde \ell_t(\bw_t)}} \leq \frac{2}{T}\sum_{t=0}^{T-1}\tilde\ell_t(\bv) + \frac{2\zeta\lambda \twonorm{\bv - \bw_0}^2}{T}.
    \]
    By following the remaining steps from the proof of Proposition \ref{prop:pg_or_adaptive_lr_online}, we arrive at \eqref{eq:pg_off_adaptive_lr}.
    By letting $\lambda = 2$, using the inequality $1 - x \leq -\log x$, and employing the same online-to-batch conversion of replacing $\bx^{(t)}$ with an independent copy $\bx$, we can show
    \[
    \Earg{\frac{1}{T}\sum_{t=0}^{T-1}q_t(\by^*(\bx)\,|\,\bx)(1 - p_{\bw_t}(\by^*(\bx)\,|\,\bx))} \lesssim \frac{\zeta^2 \log (NTk\gamma^2)^2}{\gamma^2 T}
    \]
    or equivalently
    \[
    \Earg{q_\tau(\by^*(\bx)\,|\,\bx)(1 - p_{\bw_\tau}(\by^*(\bx)\,|\,\bx))} \lesssim \frac{\zeta^2 \log (NTk\gamma^2)^2}{\gamma^2 T}.
    \]
    Combining the above bound with the fact that
    \[
    \alpha \Earg{1 - p_{\bw_\tau}(\by^*(\bx) \,|\, \bx) \,|\, \mathcal{E}_\alpha}\Parg{\mathcal{E}_\alpha} \leq \Earg{q_\tau(\by^*(\bx)\,|\,\bx)(1 - p_{\bw_\tau}(\by^*(\bx) \,|\, \bx))}
    \]
    finishes the proof.
\end{proof}

Using the above theorem, we can also adapt the unconditional post-training guarantees of Corollary \ref{cor:pg_or_test_error} and \ref{cor:pg_or_improved_test_error} to use clipped importance weights. We state this result without presenting its proof, which can simply be obtained by going through the proofs of Corollary \ref{cor:pg_or_test_error} (in \Cref{app:proof_pg_or_test_error}) and the proof of Corollary \ref{cor:pg_or_improved_test_error} (in \Cref{app:proof_pg_or_improved_test_error}) respectively.

\begin{corollary}
    Suppose \Cref{assump:seq_margin} holds and a base policy $q_0$ is given. Let $(\bw_t)_{t=0}^{T-1}$ denote the iterates of PG with outcome rewards and clipped importance weights as in \eqref{eq:pg-or-clipped}, with the adaptive learning rate of \Cref{thm:pg_or_clipped}.
    Let $T$ denote the number of PG iterations, and $Q$ denote the number of reward queries. Let $\tau \sim \Unif(\{0,\hdots,T-1\})$.
    \begin{enumerate}
        \item Suppose $q_t = \tfrac{1}{2}q_0 + \tfrac{1}{2}\Unif$ for all $t$. Then, for any $\varepsilon \in (0,1)$, with
        $Q = T = \tilde{\mathcal{O}}\left(\frac{\zeta^2\min\big(\lq_{q_0}((1-o(1))\varepsilon)^{-1},k^N\big)}{\gamma^2\varepsilon}\right)$
        iterations and reward queries, we achieve $\Earg{p_{\bw_\tau}(\by^*(\bx)\,|\,\bx)} \geq 1 - \varepsilon$.

        \item Suppose $q_t = q$ where $q$ is the output of \Cref{alg:bom_or} with $m = \min\big(\lceil\lq_{q_0}((1-o(1))\varepsilon)^{-1}\rceil,k^N\big)$. Then, for any $\varepsilon \in (0,1)$, with $Q = \tilde{\mathcal{O}}(\zeta^2(m+\varepsilon^{-1})/\gamma^2)$ reward queries and $T = \tilde{\mathcal{O}}(\zeta^2/(\gamma^2\varepsilon))$ iterations, we achieve $\Earg{p_{\bw_\tau}(\by^*(\bx)\,|\,\bx)} \geq 1 - \varepsilon$.
    \end{enumerate}
\end{corollary}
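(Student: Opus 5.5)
The plan is to reduce both parts to \Cref{thm:pg_or_clipped}, whose adaptive-LR bound is the same as that of \Cref{thm:pg_or_adaptive_lr} up to a factor $\zeta^2$. Then I rerun the conversions from the proofs of Corollaries \ref{cor:pg_or_test_error} and \ref{cor:pg_or_improved_test_error}, carrying this factor along. The key intermediate bound, obtained in the proof of \Cref{thm:pg_or_clipped} before conditioning on any event, is
\[
\Earg{q_\tau(\by^*(\bx)\,|\,\bx)\big(1-p_{\bw_\tau}(\by^*(\bx)\,|\,\bx)\big)} \lesssim \frac{\zeta^2\log(NTk\gamma^2)^2}{\gamma^2 T}.
\]
For any event $\tilde{\mathcal{E}}_{\tilde\alpha}=\{\bx: q(\by^*(\bx)\,|\,\bx)\ge\tilde\alpha\}$ that depends only on $\bx$, this gives $\Earg{1-p_{\bw_\tau}\,|\,\tilde{\mathcal{E}}_{\tilde\alpha}}\Parg{\tilde{\mathcal{E}}_{\tilde\alpha}}\le \tilde{\mathcal{O}}(\zeta^2/(\tilde\alpha\gamma^2T))$.

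\textbf{Part 1.} Here $q=\tfrac12 q_0+\tfrac12\Unif$, and I split the error exactly as in \eqref{eq:pg_test_error_total_exp}. First take $\tilde\alpha=\tfrac12\lq_{q_0}\big((1-\tfrac{1}{\log(1/\varepsilon)})\varepsilon\big)$. Then $\Parg{\tilde{\mathcal{E}}^c_{\tilde\alpha}}\le 1-\pi_{2\tilde\alpha}\le(1-\tfrac1{\log(1/\varepsilon)})\varepsilon$. The conditional term becomes $\tilde{\mathcal{O}}(\zeta^2\tilde\alpha^{-1}/(\gamma^2T))$, which is at most $\varepsilon/\log(1/\varepsilon)$ once $T=\tilde{\mathcal{O}}(\zeta^2\lq_{q_0}((1-o(1))\varepsilon)^{-1}/(\gamma^2\varepsilon))$. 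Alternatively take $\tilde\alpha=k^{-N}/2$, for which $\Parg{\tilde{\mathcal{E}}_{\tilde\alpha}}=1$; this gives $T=\tilde{\mathcal{O}}(\zeta^2k^N/(\gamma^2\varepsilon))$. Take the minimum of the two. Each iteration makes exactly one reward query, so $Q=T$.

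\textbf{Part 2.} Here $q_t$ is the output of \Cref{alg:bom_or}, which depends on $\bw_t$. This is harmless because \Cref{thm:pg_or_clipped} allows adaptive $q_t$ and only needs $\Earg{r_t\,|\,\bx^{(t)},\bw_t}=q_t(\by^*\,|\,\bx^{(t)})$. The one point I need to check is the bound $\Earg{r_t\rho_t\,|\,\bx^{(t)},\bw_t}\le 1$ used in that proof. It holds because $\rho_t\le\zeta$ is only relevant on the event $r_t=1$, and there $\Earg{r_t\rho_t}\le q_t\max(p_{\bw_t}/q_t,\zeta^{-1})\le 1$; this argument does not depend on how $q_t$ was produced.

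Next I lower bound $q_t(\by^*\,|\,\bx)$ by $\tilde\alpha:=1-e^{-1/2}$ on the event $\{q_0(\by^*\,|\,\bx)\ge 1/m\}$, using the same computation as before. When $m=k^N$ attains the minimum, this event has probability one; otherwise its complement has probability at most $(1-o(1))\varepsilon$. In either case $T=\tilde{\mathcal{O}}(\zeta^2/(\gamma^2\varepsilon))$ suffices.

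For the query count I follow the earlier argument: $Q\le 2T+m\sum_t\Earg{1-p_{\bw_t}(\by^*(\bx)\,|\,\bx)}$. Splitting on $\tilde{\mathcal{E}}_{\tilde\alpha}$ bounds this by $2T+mT(1-o(1))\varepsilon+m\,\tilde{\mathcal{O}}(\zeta^2/\gamma^2)$. Substituting $T=\tilde{\mathcal{O}}(\zeta^2/(\gamma^2\varepsilon))$ gives $Q=\tilde{\mathcal{O}}(\zeta^2(m+\varepsilon^{-1})/\gamma^2)$.

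\textbf{Main obstacle.} The step I expect to need the most care is verifying that the proof of \Cref{thm:pg_or_clipped} goes through for the adaptive behavior policy of \Cref{alg:bom_or}. In particular, the importance ratio $\rho_t$ must be computed with respect to the actual mixture law $q_t$ produced by the algorithm, not a surrogate. Everything else is bookkeeping that carries the $\zeta^2$ factor through the earlier proofs.
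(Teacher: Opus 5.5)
Your plan is the one the paper intends: it omits this proof, saying it follows by rerunning the proofs of Corollaries~\ref{cor:pg_or_test_error} and~\ref{cor:pg_or_improved_test_error} with the $\zeta^2$-inflated bound of \Cref{thm:pg_or_clipped}. Part~1 and the query count in Part~2 go through as you wrote them.

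There is one false step in Part~2. Suppose the minimum defining $m$ is attained by $k^N$. Then the event $\{q_0(\by^*(\bx)\,|\,\bx)\ge 1/m\}=\{q_0(\by^*(\bx)\,|\,\bx)\ge k^{-N}\}$ need not have probability one, because nothing prevents $q_0$ from putting mass below $k^{-N}$ (even zero) on $\by^*(\bx)$. Worse, in this regime $\lq_{q_0}((1-o(1))\varepsilon)$ can lie far below $k^{-N}$ (e.g.\ it can be $0$). So the complement of your event need not even have probability $O(\varepsilon)$, and the argument as written does not give error at most $\varepsilon$.

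The fix is what the paper does, and what you already did correctly in Part~1 with $\tilde\alpha=k^{-N}/2$: use the uniform half of the mixture. The redraws in \Cref{alg:bom_or} come from $\tfrac12 q_0+\tfrac12\Unif$, whose mass on $\by^*(\bx)$ is at least $\tfrac12 k^{-N}=1/(2m)$. Hence $q_t(\by^*(\bx)\,|\,\bx)\ge 1-(1-1/(2m))^m\ge 1-e^{-1/2}$ for every $\bx$. To treat both cases at once, work with the event $\{\tfrac12 q_0(\by^*(\bx)\,|\,\bx)+\tfrac12 k^{-N}\ge 1/(2m)\}$. It contains $\{q_0(\by^*(\bx)\,|\,\bx)\ge 1/m\}$ and is the whole space when $m=k^N$.

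Separately, the step you single out as the main obstacle is not delicate. On the left-hand side, the proof of \Cref{thm:pg_or_clipped} only uses $\rho_t\in[\zeta^{-1},\zeta]$. On the right-hand side, the crude bound $r_t\rho_t\le\zeta$ already suffices. The extra $\zeta$ multiplies only $\tilde\ell_t(\bv)\le Nke^{-R_v\gamma}$. After multiplying through by $\zeta$ and choosing $R_v=\gamma^{-1}\log(NkT\gamma^2/\lambda)$, that term is of order $\zeta^2\lambda/(\gamma^2T)$. So the $\zeta^2\log(NkT\gamma^2)^2/(\gamma^2T)$ rate is unchanged, however $\rho_t$ is computed from the output law of \Cref{alg:bom_or}.
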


\section{Proofs of Section~\ref{sec:process_rewards}}\label{app:proof_process_rewards}
Here we will state the proof of the results that appeared in \Cref{sec:process_rewards}.

\subsection{Proof of Theorem~\ref{thm:pg_pr_adaptive_lr}}
To prove \Cref{thm:pg_pr_adaptive_lr}, similar to the previous section, we first prove a general statement that handles any input sequence.
\begin{proposition}\label{prop:pg_pr_adaptive_lr_online}
    Let $(\bw_t)$ denote the \eqref{eq:pg-pr} iterates with $\by^{(t)} \sim q_t(\cdot \,|\, \bx^{(t)})$. Suppose the sequence $(\bx^{(t)},\by^*(\bx^{(t)}))$ is chosen arbitrarily (in particular not necessarily i.i.d.), with the only constraint being \Cref{assump:seq_margin}. Assume $\eta_t = 1/(2(\lambda + \twonorm{\nabla \ell_t(\bw_t)})$ for all $t$ and some $\lambda > 0$, and $\twonorm{\bw_0} \leq 1/\gamma \cdot \log (NkT\gamma^2 / \lambda)$. Then, for any $i \in [N]$, we have
    \begin{equation}\label{eq:pg_pr_adaptive_lr_online}
        \Eargs{\by^{(0)},\hdots,\by^{(T-1)}}{\frac{1}{T}\sum_{t=0}^{T-1}q_t(\by^*_{1:i}(\bx^{(t)})\,|\,\bx^{(t)})\min\big(\tfrac{\lambda}{2},-\log p_{\bw_t}(\by^*(\bx^{(t)})_{1:i}\,|\,\bx^{(t)})\big)} \leq \frac{20\lambda\log(NkT\gamma^2/\lambda)^2}{\gamma^2 T}.
    \end{equation}
\end{proposition}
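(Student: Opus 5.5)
The plan is to mirror the proof of Proposition \ref{prop:pg_or_adaptive_lr_online}, viewing \eqref{eq:pg-pr} as online gradient descent on the convex loss $\ell_t$ and invoking the second case of Proposition \ref{prop:online_gd}.

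\textbf{Rewriting the loss.} Because the process reward satisfies $r^*(\bx,\by_{1:j}) = \indic[\by_{1:j} = \by^*_{1:j}(\bx)]$, the advantages $A^{(t)}_j = r^*(\bx^{(t)},\by^{(t)}_{1:j})$ are nonincreasing in $j$. If we let $J_t$ denote the length of the longest correct prefix of $\by^{(t)}$, then
\[
\ell_t(\bw) = -\sum_{j=1}^{J_t}\log p_{\bw}\big(y^*_j(\bx^{(t)}) \,|\, \bx^{(t)},\by^*_{1:j-1}(\bx^{(t)})\big) = -\log p_{\bw}\big(\by^*_{1:J_t}(\bx^{(t)}) \,|\, \bx^{(t)}\big).
\]
So $\ell_t$ is the negative log-likelihood of a correct prefix. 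It is convex and nonnegative, and Lemma \ref{lem:grad_fine_bound} (with $R=1$, applied tokenwise and combined by the triangle inequality) gives $\twonorm{\nabla \ell_t} \le 2\ell_t$. With $\eta_t = \tfrac12(\lambda+\twonorm{\nabla\ell_t})^{-1}$, i.e.\ $\eta = 1/2$ and $C=2$ so that $C\eta/2 = 1/2$, the second case of Proposition \ref{prop:online_gd} yields, for every $\bv$,
\[
\frac1T\sum_t \frac{\ell_t(\bw_t)}{1+2\lambda^{-1}\ell_t(\bw_t)} \le \frac2T\sum_t \ell_t(\bv) + \frac{2\lambda\twonorm{\bv-\bw_0}^2}{T}.
\]

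\textbf{Fixing the index $i$ and taking expectations.} Fix $i$ and define $\tilde\ell^i_t(\bw) \coloneqq -\log p_{\bw}(\by^*_{1:i}(\bx^{(t)}) \,|\, \bx^{(t)})$. On the event $\{J_t \ge i\}$ we have $\ell_t \ge \tilde\ell^i_t$, since each summand is nonnegative. The map $x\mapsto x/(1+2x/\lambda)$ is increasing, so the left-hand side dominates
\[
\frac1T\sum_t \indic[J_t\ge i]\,\frac{\tilde\ell^i_t(\bw_t)}{1+2\lambda^{-1}\tilde\ell^i_t(\bw_t)} \ge \frac{1}{2T}\sum_t \indic[J_t\ge i]\min\big(\tfrac\lambda2,\tilde\ell^i_t(\bw_t)\big).
\]
Next we take expectations. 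Because $\bw_t$ is independent of $\by^{(t)}$ given the past, and $\Parg{J_t\ge i \mid \bw_t} = q_t(\by^*_{1:i}(\bx^{(t)}) \,|\, \bx^{(t)})$, the left side becomes the quantity appearing in \eqref{eq:pg_pr_adaptive_lr_online}, up to a factor $4$. On the right side we simply use $\ell_t(\bv) \le -\log p_{\bv}(\by^*(\bx^{(t)}) \,|\, \bx^{(t)})$, again by nonnegativity of the tokenwise terms.

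\textbf{Choosing the comparator.} Take $\bv = R_v\bw^*$. Exactly as in Proposition \ref{prop:pg_or_constant_lr_online}, \Cref{assump:seq_margin} gives $-\log p_{\bv}(\by^* \,|\, \bx) \le Nke^{-R_v\gamma}$. Choosing $R_v = \gamma^{-1}\log(NkT\gamma^2/\lambda)$ and using $\twonorm{\bv-\bw_0}^2 \le 2R_v^2 + 2\twonorm{\bw_0}^2$ together with the assumed bound on $\twonorm{\bw_0}$ produces a right side of order $\lambda\log(NkT\gamma^2/\lambda)^2/(\gamma^2T)$. Tracking the constants as in Proposition \ref{prop:pg_or_adaptive_lr_online} gives the stated constant $20$.

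\textbf{Main subtlety.} The delicate step is the reduction from the PRM loss to the prefix-$i$ loss. One has to check that the advantage structure makes $\ell_t$ exactly a prefix log-loss, so that convexity and the gradient bound hold with constant $2$ independent of $N$. One also has to check that discarding the tokens beyond $i$, and the rounds with $J_t < i$, only decreases the left side, by monotonicity. Both are handled by the nonnegativity arguments above, so the proof should go through.
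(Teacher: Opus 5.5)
Your proof is correct and follows essentially the same route as the paper. Both arguments run online gradient descent via the second case of Proposition~\ref{prop:online_gd} with $\twonorm{\nabla \ell_t} \le 2\ell_t$, reduce to the prefix-$i$ loss using nonnegativity and monotonicity of $x \mapsto x/(1+2x/\lambda)$, take a conditional expectation that produces the factor $q_t(\by^*_{1:i}(\bx^{(t)})\,|\,\bx^{(t)})$, and compare against $\bv = R_v\bw^*$. The only cosmetic difference is that you use the event $\{J_t \ge i\}$, exploiting that $\ell_t$ is exactly a prefix log-loss. The paper instead introduces the truncated loss $\ell^i_t$ and uses $\ell^i_t \le \min(\ell_t, \tilde\ell^i_t)$ together with $q_t(\by^*_{1:j}) \ge q_t(\by^*_{1:i})$ for $j \le i$.
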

\begin{proof}
    The proof is similar to that of Proposition \ref{prop:pg_or_adaptive_lr_online}. Namely, using the fact that $A^{(j)}_t \neq 0$ if and only if $\by^{(t)}_{1:j} = {\by^*}^{(t)}_{1:j}$ (where we recall the shorthand notation ${\by^*}^{(t)} \coloneqq \by^*(\bx^{(t)})$). Then, we can write \eqref{eq:pg-pr} as
    \begin{align*}
        \bw_{t+1} &= \bw_t + \eta_t \sum_{j=1}^N A^{(t)}_j \nabla \log p_{\bw_t}({y^*}^{(t)}_j \,|\, \bx^{(t)}, {\by^*}^{(t)}_{1:j-1})\\
        &= \bw_t - \eta_t \nabla \ell_t(\bw_t)
    \end{align*}
    where we recall $\ell_t(\bw_t) \coloneqq -\sum_{j=1}^N A^{(t)}_j \log p_{\bw_t}({y^*}^{(t)}_j\,|\,\bx^{(t)},{\by^*}^{(t)}_{1:j-1})$.
    Once again, by non-negativity of the advantages $A^{(t)}_j \geq 0$, $\ell_t$ is convex. Therefore, we attempt at applying the second case of Proposition \ref{prop:online_gd}. For that, we need to establish a bound on $\twonorm{\nabla \ell_t}$, which we do so by the triangle inequality,
    \[
    \twonorm{\nabla \ell_t(\bw)} \leq \sum_{j=1}^NA^{(t)}_j\twonorm{\nabla \log p_{\bw}({y^*}^{(t)}_j\,|\,\bx^{(t)},{\by^*}^{(t)}_{1:j-1})} \leq -2\sum_{j=1}^NA^{(t)}_j \log p_{\bw}({y^*}^{(t)}_j \,|\, \bx^{(t)},{\by^*}^{(t)}_{1:j-1}) = 2\ell_t(\bw),
    \]
    where the second inequality follows from Lemma \ref{lem:grad_fine_bound}.
    Using the above and the learning rate schedule $\eta_t = 1/(2(\lambda + \twonorm{\nabla \ell_t(\bw_t)}))$ we can invoke the second case of Proposition \ref{prop:online_gd} to obtain
    \begin{equation}
        \frac{1}{T}\sum_{t=0}^{T-1}\frac{\ell_t(\bw_t)}{1 + 2\lambda^{-1}\ell_t(\bw_t)} \leq \frac{2}{T}\sum_{t=0}^{T-1}\ell_t(\bv) + \frac{2\lambda \twonorm{\bv - \bw_0}^2}{T}\label{eq:pg_pr_adaptive_lr_online_interm}
    \end{equation}
    for any $\bv \in \reals^D$. Further, we use the following definitions
    \begin{align*}
        \ell^i_{t}(\bw) &\coloneqq -\sum_{j=1}^iA^{(t)}_j\log p_{\bw}({y^*}^{(t)}_j\,|\,\bx^{(t)},{\by^*}^{(t)}_{1:j-1}),\\
        \tilde\ell^i_t(\bw) &\coloneqq -\log p_{\bw}({\by^*}^{(t)}_{1:i} \,|\, \bx^{(t)}),\\
        \tilde\ell_t(\bw) &\coloneqq -\log p_{\bw}({\by^*}^{(t)} \,|\, \bx^{(t)}).
    \end{align*}
    Since $\ell^i_t \leq \ell_t$ and $\ell^i_t \leq \tilde\ell^i_t$, we have
    \begin{align}
        \frac{1}{T}\sum_{t=0}^{T-1}\frac{\ell^i_t(\bw_t)}{1 + 2\lambda^{-1}\tilde \ell^i_t(\bw_t)} & \leq \frac{1}{T}\sum_{t=0}^{T-1}\frac{\ell^i_t(\bw_t)}{1 + 2\lambda^{-1}\ell^i_t(\bw_t)}\nonumber \\
        &\leq \frac{1}{T}\sum_{t=0}^{T-1}\frac{\ell_t(\bw_t)}{1 + 2\lambda^{-1}\ell_t(\bw_t)} \nonumber\\
        &\leq \frac{2}{T}\sum_{t=0}^{T-1} \ell_t(\bv) + \frac{2\lambda\twonorm{\bv - \bw_0}^2}{T},\label{eq:pg_pr_adaptive_lr_online_interm2}
    \end{align}
    where the last inequality follows from \eqref{eq:pg_pr_adaptive_lr_online_interm}.
    Next, we take expectation over the randomness in the draw of $\by^{(0)},\hdots,\by^{(T-1)}$. Note that $\bw_t$ is independent of $\by^{(t)}$, and $\Earg{A^{(t)}_j \,|\, \bw_t} = q_t(\by^{(t)}_{1:j} \,|\, \bx^{(t)})$ for all $j \in [N]$. Therefore,
    \begin{align}
        \Eargs{\by^{(0)},\hdots,\by^{(T-1)}}{\ell^i_t(\bw_t)} &= \Eargs{\by^{(0)},\hdots,\by^{(T-1)}}{\sum_{j=1}^i -q_t({\by^*}^{(t)}_{1:j}\,|\, \bx^{(t)})\log p_{\bw_t}({y^*}^{(t)}_j\,|\,\bx^{(t)},{\by^*}^{(t)}_{1:j-1})}\nonumber\\
        &\geq \Eargs{\by^{(0)},\hdots,\by^{(T-1)}}{-q_t({\by^*}^{(t)}_{1:i}\,|\, \bx^{(t)})\log p_{\bw_t}({\by^*}^{(t)}_{1:i}\,|\,\bx^{(t)})}\nonumber\\
        &= \Eargs{\by^{(0)},\hdots,\by^{(T-1)}}{q_t({\by^*}^{(t)}_{1:i}\,|\,\bx^{(t)})\tilde\ell^i_t(\bw_t)}.\label{eq:pg_pr_adaptive_lr_online_interm3}
    \end{align}
    By combining \eqref{eq:pg_pr_adaptive_lr_online_interm2}, \eqref{eq:pg_pr_adaptive_lr_online_interm3}, and the fact that $\Earg{\ell_t(\bv)} \leq \tilde \ell_t(\bv)$, we obtain
    \begin{align*}
        \Eargs{\by^{(0)},\hdots,\by^{(T-1)}}{\frac{1}{T}\sum_{t=0}^{T-1}\frac{q_t({\by^*}^{(t)}_{1:i}\,|\,\bx^{(t)})\tilde\ell^i_t(\bw_t)}{1 + 2\lambda^{-1}\tilde \ell^i_t(\bw_t)}} \leq \frac{2}{T}\sum_{t=0}^{T-1}\tilde \ell_t(\bv) + \frac{2\lambda\twonorm{\bv - \bw_0}^2}{T}.
    \end{align*}
    The remainder of the proof follows exactly as in that of Proposition \ref{prop:pg_or_adaptive_lr_online}. Namely, we let $\bv = R_v\bw^*$ for which we have have $\tilde \ell_t(\bv) \leq Nke^{-R_v\gamma}$. We choose $R_v = 1/\gamma \cdot \log (NkT\gamma^2/\lambda)$ and notice that $\frac{\tilde\ell^i_t}{1 + 2\lambda^{-1}\tilde\ell^i_t} \geq \tfrac{1}{2}\min(\lambda/2,\tilde\ell^i_t)$. Combined with the bound on $\twonorm{\bw_0}$, the proof is complete.
\end{proof}

Note that the above guarantee is more general than that of Proposition \ref{prop:pg_or_adaptive_lr_online}. While it can recover the same bound for $i = N$, it can cover cases where $i < N$, where it shows that \eqref{eq:pg-pr} can boost the accuracy on the first $i$ tokens if the behavior policy assigns a non-trivial probability to the first $i$ ground-truth tokens. As done before, we can use an online-to-batch conversion to turn the above bound into a conditional guarantee on test error.
\begin{proposition}\label{prop:pg_pr_adaptive_lr}
    Let $(\bw_t)$ denote the \eqref{eq:pg-pr} iterates with adaptive learning rate $\eta_t = (4 + 2\twonorm{\nabla \ell_t(\bw_t)})^{-1}$. Suppose \Cref{assump:seq_margin} holds, and $\twonorm{\bw_0} \leq 1/\gamma \cdot \log(NkT\gamma^2/2)$.
    For any $i \in [N]$ and any $\alpha \in (0,1]$, define the event
    $\mathcal{E}_{i,\alpha} \coloneqq \{\inf_{t} q_t(\by^*(\bx)_{1:i} \,|\, \bx) \geq \alpha\}$ and $\pi_{i,\alpha} \coloneqq \Parg{\mathcal{E}_{i,\alpha}}$.
    , in expectation over the training randomness, over the test sample $\bx$, and over the choice $\tau \sim \Unif(\{0,\hdots,T-1\})$, we have
    \begin{equation}
        \Earg{p_{\bw_\tau}(\by^*(\bx)_{1:i} \,|\, \bx) \,|\, \mathcal{E}_{i,\alpha}} \geq 1 - \tilde{\mathcal{O}}\left(\frac{1}{\pi_{i,\alpha}\alpha\gamma^2 T}\right).
    \end{equation}
\end{proposition}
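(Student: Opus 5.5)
The plan is to mirror the online-to-batch argument used for \Cref{thm:pg_or_adaptive_lr}, starting from Proposition \ref{prop:pg_pr_adaptive_lr_online} in place of Proposition \ref{prop:pg_or_adaptive_lr_online}.

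\textbf{Step 1: instantiate the online bound.} Fix $i\in[N]$ and take $\lambda = 2$. The learning rate $\eta_t = (4+2\twonorm{\nabla\ell_t(\bw_t)})^{-1}$ equals $1/(2(\lambda+\twonorm{\nabla \ell_t(\bw_t)}))$, and the assumed bound on $\twonorm{\bw_0}$ matches the hypothesis with $\lambda=2$. Proposition \ref{prop:pg_pr_adaptive_lr_online} then gives, for an arbitrary margin-respecting sequence,
\[
\Earg{\frac{1}{T}\sum_{t=0}^{T-1} q_t(\by^*_{1:i}(\bx^{(t)})\,|\,\bx^{(t)})\min\big(1,-\log p_{\bw_t}(\by^*_{1:i}(\bx^{(t)})\,|\,\bx^{(t)})\big)} \lesssim \frac{\log(NkT\gamma^2)^2}{\gamma^2T}.
\]
I will also take expectation over the i.i.d.\ contexts $\bx^{(t)}$.

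\textbf{Step 2: online-to-batch.} The pair $(\bw_t, q_t)$ depends only on $\bx^{(0)},\dots,\bx^{(t-1)}$ and past samples, so $\bx^{(t)}$ is independent of it. I can therefore replace $\bx^{(t)}$ by an independent test sample $\bx$ with the same law without changing the $t$-th summand's expectation. Here $q_t(\cdot\,|\,\bx)$ is interpreted as the conditional law the behavior policy would induce on $\bx$. When $q_t$ is generated by \Cref{alg:bom_pr}, which depends on $p_{\bw_t}$ and $q_0$, it is a measurable function of $\bw_t$ and thus fine.

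Next I convert the log-loss to probability error. Using $1-x\le -\log x$ and $1-x\le 1$, we have $\min(1,-\log p)\ge 1-p$, which yields
\[
\Earg{q_\tau(\by^*_{1:i}(\bx)\,|\,\bx)\,\big(1-p_{\bw_\tau}(\by^*_{1:i}(\bx)\,|\,\bx)\big)} \lesssim \frac{\log(NkT\gamma^2)^2}{\gamma^2 T}.
\]

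\textbf{Step 3: conditioning.} The integrand is nonnegative, so restricting to $\mathcal{E}_{i,\alpha}$ gives
\[
\Earg{\cdot\,|\,\mathcal{E}_{i,\alpha}}\pi_{i,\alpha}\le\Earg{\cdot}.
\]
On $\mathcal{E}_{i,\alpha}$ we have $q_\tau(\by^*_{1:i}\,|\,\bx)\ge \inf_t q_t\ge\alpha$, so dividing by $\alpha\pi_{i,\alpha}$ produces the claimed bound with $\tilde{\mathcal{O}}$ absorbing the logs.

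The main obstacle is Step 2, namely justifying the exchange $\bx^{(t)}\to\bx$ when $q_t$ is adaptive. The event $\mathcal{E}_{i,\alpha}$ involves $\inf_t$ over all rounds, hence the whole training trajectory, and not just round $t$. This is harmless because the $\inf$ is only used as a pointwise lower bound on $q_\tau$ after the unconditional bound has been established. The unconditional bound itself needs only per-round independence of $\bx^{(t)}$ from $(\bw_t,q_t)$, which I would verify explicitly, exactly as in the proof of \Cref{thm:pg_or_adaptive_lr}.
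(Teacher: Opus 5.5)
Your proposal is correct and matches the paper's proof essentially step for step. You instantiate Proposition \ref{prop:pg_pr_adaptive_lr_online} with $\lambda=2$, and replace $\bx^{(t)}$ by an independent test sample $\bx$ using independence from $(\bw_t,q_t)$. You then pass from $\min(1,-\log p)$ to $1-p$ via $1-x\le-\log x$, and condition on $\mathcal{E}_{i,\alpha}$ using nonnegativity and $q_\tau\ge\alpha$ there. The only cosmetic difference is that you fix $\lambda=2$ at the start rather than after the logarithmic inequality.
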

\begin{proof}
    Similar to the proof of \Cref{thm:pg_or_adaptive_lr}, we take expectation from both sides of \eqref{eq:pg_pr_adaptive_lr_online}, and use the independence of $\bx^{(t)}$ from $\bw_t$ to replace it with an independent test sample $\bx$, and write
    \[
    \Earg{\frac{1}{T}\sum_{t=0}^{T-1}q_t(\by^*_{1:i}(\bx)\,|\,\bx)\min\big(\tfrac{\lambda}{2},-\log p_{\bw_t}(\by^*_{1:i}(\bx)\,|\,\bx)\big)} \lesssim \frac{\lambda \log(NkT\gamma^2/\lambda)^2}{\gamma^2 T}.
    \]
    Using $1 - x \leq -\log x$, we can write
    \[
    \Earg{\frac{1}{T}\sum_{t=0}^{T-1}q_t(\by^*_{1:i}(\bx)\,|\,\bx)\min\big(\tfrac{\lambda}{2}, 1- p_{\bw_t}(\by^*_{1:i}(\bx)\,|\,\bx)\big)} \lesssim \frac{\lambda \log(NkT\gamma^2/\lambda)^2}{\gamma^2 T}.
    \]
    By choosing $\lambda = 2$, we can remove the minimum and write
    \[
    \Earg{q_\tau(\by^*_{1:i}(\bx)\,|\,\bx)\big(1- p_{\bw_\tau}(\by^*_{1:i}(\bx)\,|\,\bx)\big)} \lesssim \frac{\log(NkT\gamma^2)^2}{\gamma^2 T},
    \]
    where we replaced the average with expectation over $\tau$.
    Next, we can use the law of total expectation and note that on $\mathcal{E}_{i,\alpha}$ we have $q_\tau(\by^*_{1:i}(\bx)\,|\,\bx) \geq \alpha$ for all $t$, which yields
    \[
    \Earg{1 - p_{\bw_\tau}(\by^*_{1:i}(\bx)\,|\,\bx) \,|\, \mathcal{E}_{i,\alpha}}\Parg{\mathcal{E}_{i,\alpha}}\lesssim \frac{\log(NkT\gamma^2)^2}{\gamma^2 T}
    \]
    which completes the proof.
\end{proof}

With this proposition, we can present the proof of the main theorem of this section.

\begin{proof}[Proof of \Cref{thm:pg_pr_adaptive_lr}.]
    The proof is roughly similar to the argument presented in \Cref{app:proof_pg_or_improved_test_error}. This time, we define
    \begin{align*}
        \mathcal{E}_\alpha &\coloneqq \{\min_{j \in [N]} q_0(y^*_j(\bx) \,|\, \bx, \by^*_{1:j-1}(\bx)) \geq \alpha\},\\
        \tilde{\mathcal{E}}_{\tilde\alpha} &\coloneqq \{q(\by^*(\bx) \,|\, \bx) \geq \tilde\alpha\},
    \end{align*}
    for $\alpha,\tilde\alpha \in (0,1]$. Using the law of total expectation and Proposition \ref{prop:pg_pr_adaptive_lr}, we have
    \begin{align}
        \Earg{1 - p_{\bw_\tau}(\by^*(\bx) \,|\, \bx)} &= \Earg{1 - p_{\bw_\tau}(\by^*(\bx) \,|\, \bx) \,|\, \tilde{\mathcal E}_{\tilde\alpha}}\Parg{\tilde{\mathcal{E}}_{\tilde\alpha}} + \Earg{1 - p_{\bw_\tau}(\by^*(\bx) \,|\, \bx) \,|\, \tilde{\mathcal E}^c_{\tilde\alpha}}\Parg{\tilde{\mathcal{E}}^c_{\tilde\alpha}}\nonumber\\
        &\leq \tilde{\mathcal{O}}\left(\frac{1}{\tilde\alpha\gamma^2 T}\right) +  \Parg{\tilde{\mathcal{E}}^c_{\tilde\alpha}}.\label{eq:thm_pg_pr_interm}
    \end{align}
    Let
    \[
    m \coloneqq \lceil 2(\log N + 1)\cdot m^*\rceil , \quad m^* \coloneqq \min\big(\lq_{q_0}^\tl\big((1-\tfrac{1}{\log(1/\varepsilon)}\varepsilon\big)^{-1},k\big).
    \]
    Once again, we proceed by considering each case in the minimum above separately.
    \begin{itemize}
        \item Suppose the minimum is attained by the first term.
        Note that
        \[
        q(y^*_j(\bx) \,|\, \bx,\by^*_{1:j-1}(\bx)) \geq 1 - (1 - q_0(y^*_j(\bx) \,|\, \bx,\by^*_{1:j-1}(\bx)/2)^m
        \]
        for all $j$. Therefore, on $\mathcal{E}_\alpha$ we have $q(y^*_j(\bx) \,|\, \bx,\by^*_{1:j-1}(\bx)) \geq 1 - e^{-\alpha m/2}$ and
        \[
        q(\by^*(\bx) \,|\, \bx) \geq 1 - Ne^{-\alpha m / 2} \geq 1 - N^{-\alpha m^*(\log N + 1)}.
        \]
        By letting $\alpha = 1/m^*$ we obtain $q(\by^*(\bx) \,|\, \bx) \geq 1 - e^{-1}$. Thus, if we let $\tilde\alpha \coloneq 1 - e^{-1}$, we have
        \[
        \Parg{\tilde{\mathcal{E}}^c_{\tilde\alpha}} \leq \Parg{\mathcal{E}^c_\alpha} = \lq^{-1}_{q_0}(1/m^*) \leq \big(1 - \frac{1}{\log(1/\varepsilon)}\big)\varepsilon.
        \]
        Going back to \eqref{eq:thm_pg_pr_interm}, we obtain
        \[
        \Earg{1 - p_{\bw_\tau}(\by^*(\bx) \,|\, \bx)} \leq \tilde{\mathcal{O}}\left(\frac{1}{\gamma^2 T}\right) + \big(1 - \frac{1}{\log(1/\varepsilon)}\big)\varepsilon.
        \]
        Choosing $T = \tilde{\mathcal{O}}(1/(\gamma^2 \varepsilon))$ is sufficient to ensure the first term remains bounded by $\varepsilon/\log(1/\varepsilon)$, and the overall expected error remains bounded by $\varepsilon$.

        \item Now suppose the minimum is attained by the second term. Then,
        \[
        q(y^*_j(\bx) \,|\, \bx,\by^*_{1:j-1}(\bx)) \geq 1 - (1-1/(2k))^m \geq 1 - e^{-m/(2k)}.
        \]
        And $q(\by^*(\bx)\,|\,\bx) \geq 1 - Ne^{-m/(2k)} \geq 1 - Ne^{-\log N - 1}$. Thus, for $\tilde\alpha = 1 - e^{-1}$ we have $\Parg{\tilde {\mathcal{E}}^c_{\tilde \alpha}} = 0$. Hence \eqref{eq:thm_pg_pr_interm} reads
        \[
        \Earg{1 - p_{\bw_\tau}(\by^*(\bx) \,|\, \bx)} \leq \tilde{\mathcal{O}}\left(\frac{1}{\gamma^2 T}\right),
        \]
        and choosing $T = \tilde{\mathcal{O}}(1/(\gamma^2\varepsilon))$ is sufficient.
    \end{itemize}

    The argument for the number of reward queries is also similar to that of \Cref{app:proof_pg_or_improved_test_error}, adapted to \Cref{alg:bom_pr}. Note that if $\by^{(t)} = \by^*(\bx^{(t)})$ for $\by^{(t)} = p_{\bw_t}$, then \Cref{alg:bom_pr} only makes one reward query, and \eqref{eq:pg-pr} does not need any additional reward queries. Otherwise, \Cref{alg:bom_pr} makes at most $1 + mN$ reward queries, and \eqref{eq:pg-pr} makes at most $N$ reward queries.
    Thus, we can write the total number of reward queries as
    \begin{align*}
        Q &= T + (m+1)N\Earg{\sum_{t=1}^T \indic[\hat{\by}^{(t)} \neq \by^*(\bx^{(t)})]}\\
        &= T + (m+1)N\sum_{t=1}^T\Earg{1 - p_{\bw_t}(\by^*(\bx^{(t)}) \,|\, \bx^{(t)})}\\
        &= T + (m+1)N\sum_{t=1}^T\Earg{1 - p_{\bw_t}(\by^*(\bx) \,|\, \bx)} && \text{(Using independent of $\bx^{(t)}$ from $\bw_t$)}\\
        &= T + (m+1)NT\Parg{\tilde{\mathcal{E}}^c_{\tilde\alpha}} + (m+1)N\tilde{\mathcal{O}}(\gamma^{-2}) && \text{(Using \eqref{eq:thm_pg_pr_interm} and $\tilde{\alpha} = 1 - e^{-1}$)}.
    \end{align*}
    Using $\Parg{\tilde{\mathcal{E}}^c_\alpha} \leq (1 - o(1))\varepsilon$ and $T = \tilde{\mathcal{O}}(1/(\gamma^2\varepsilon))$, we have
    \[
    Q = T + (m+1)N\tilde{\mathcal{O}}(\gamma^{-2}) = \tilde{\mathcal{O}}\left(\frac{mN + \varepsilon^{-1}}{\gamma^2}\right).
    \]
    Plugging in $m$ completes the proof.
\end{proof}

\subsection{Proof of Corollary \ref{cor:pg_pr_special}}
We first state the precise assumption on the feature map.

\begin{assumption}\label{assump:feature_permute}
    There exist $k$ functions $(\psi_i)_{i=1}^k$ where $\psi_i : \mathcal{X} \to \reals^D$, such that for all $i \in [N]$, $\phi(\bx,\by^*_{1:i}(\bx)) = \psi_1(x)$, and $\left(\phi(\bx,\by^*_{1:i-1}(\bx),1),\hdots,\phi(\bx,\by^*_{1:i-1}(\bx),k)\right)$ is a permutation of $\left(\psi_1(\bx),\hdots,\psi_k(\bx)\right)$. The permutation itself is allowed to depend on the position $i$.
\end{assumption}
A consequence of the above assumption is that for all $\bw$, $\bx$, and $i \in [N]$,
\begin{equation}\label{eq:prob_same}
    p_{\bw}(y^*_i(\bx) \,|\, \bx,\by^*_{1:i-1}(\bx)) = p_{\bw}(\by^*(\bx) \,|\, \bx)^{1/N}.
\end{equation}
We can now state the proof of the corollary.

\begin{proof}[Proof of Corollary \ref{cor:pg_pr_special}]
The first case of the corollary directly follows from Proposition \ref{prop:sgd}.

By going back to Proposition \ref{prop:pg_or_constant_lr_online} in the SGD setting,
we have
$\Earg{-\log p_{\bar{\bw}^\SGD_n}(\by^*(\bx) \,|\, \bx)} \leq \mathcal{O}(N\log(nk\gamma^2)/(\gamma^2 n))$, where we used the Jensen inequality. Recall $q_0 = p_{\bar{\bw}^\SGD_n}$. 
Define $Z(\bx) \coloneqq \min_{i \in N}q_0(y^*_i(\bx)\,|\,\bx,\by^*_{1:i-1}(\bx))$. By \eqref{eq:prob_same}, we have
$Z(\bx) = q_0(\by^*(\bx)\,|\,\bx)^{1/N}$.
Consequently we have
\begin{align*}
    \Parg{Z(\bx) < \alpha} = \Parg{q_0(\by^*(\bx)\,|\,\bx)^{1/N} < \alpha} &= \Parg{-\frac{1}{N}\log q_0(\by^*(\bx)\,|\,\bx) \geq \log(1/\alpha)}\\
    &\leq \frac{\Earg{-\log q_0(\by^*(\bx)\,|\,\bx)}}{N\log(1/\alpha)} && \text{(By Markov's Inequality)}\\
    &\leq \frac{\log(nk\gamma^2)}{n\gamma^2\log(1/\alpha)}.
\end{align*}
Recall that $\lq_{q_0}^\tl(\varepsilon) = \sup\{\alpha \in [0,1] : \Parg{Z \leq \alpha} \leq \varepsilon\}$. Thus we have
\[
\lq_{q_0}^\tl(\varepsilon) \geq \exp\left(-\frac{\log(n\gamma^2 k)}{n\gamma^2 \varepsilon}\right).
\]
Recall from \Cref{thm:pg_pr_adaptive_lr} that while the number of iterations is always given by $T = \tilde{\mathcal{O}}(1/(\gamma^2\varepsilon))$, the number of reward queries is given by
\[
Q = \tilde{\mathcal{O}}\left(\frac{N\min\big(\lq^\tl_{q_0}(\varepsilon)^{-1},k\big) + \varepsilon^{-1}}{\gamma^2}\right).
\]
When $n \geq \frac{\log(k/\varepsilon)}{\gamma^2\varepsilon}$, one can verify that $\lq^\tl_{q_0}(\varepsilon) \geq e^{-3}$. Thus, we can choose this $\lq^\tl_{q_0}$ in the minimum above. Otherwise, we can choose $k$. This finishes the proof.
\end{proof}

\section{Proofs of Section~\ref{sec:lower_bounds}}\label{app:lower_bounds}
Our minimax lower bounds rely on the following fact.
\begin{lemma}\label{lem:guessing}
    Suppose $y$ is a random variable drawn uniformly from $\{1,\hdots,m\}$. To estimate $y$, an algorithm is allowed $l \leq m$ rounds of guesses, where at round $t$ the algorithm produces a guess $\hat{y}_t$ and receives the answer $\indic[\hat{y}_t = y]$.
    Let $\hat{y}_l$ denote the final guess of the algorithm. For any such algorithm we have
    \[
    \Parg{\hat{y}_l \neq y} \geq \frac{m - l}{m}.
    \]
\end{lemma}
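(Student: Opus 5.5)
We prove Lemma \ref{lem:guessing} by first reducing to deterministic algorithms and then counting. A randomized algorithm is a mixture over its internal random seed of deterministic algorithms. If every deterministic algorithm satisfies $\Parg{\hat{y}_l \neq y} \geq (m-l)/m$, then so does any mixture of them. So we fix a deterministic strategy.

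The key observation is that, as long as every feedback so far has been $0$, the history is a deterministic function of $y$ only through the event that all previous guesses missed. Concretely, consider the path of the algorithm along which every answer is $\indic[\hat{y}_t = y] = 0$. On this path the guesses $\hat{y}_1,\hdots,\hat{y}_l$ are fixed values $g_1,\hdots,g_l \in \{1,\hdots,m\}$ that do not depend on $y$. These values can be computed by running the algorithm while feeding it zeros.

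Now let $G = \{g_1,\hdots,g_l\}$, so that $\abs{G} \leq l$. Suppose $y \notin G$. Then by induction the actual run coincides with the all-zero path: at each round the guess is $g_t \neq y$, so the feedback is $0$ and the next guess is $g_{t+1}$. In particular, the final guess satisfies $\hat{y}_l = g_l \neq y$. Hence $\{y \notin G\} \subseteq \{\hat{y}_l \neq y\}$, and since $y$ is uniform,
\[
\Parg{\hat{y}_l \neq y} \geq \Parg{y \notin G} = \frac{m - \abs{G}}{m} \geq \frac{m-l}{m}.
\]

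The only delicate point is the induction showing that the realized run follows the all-zero path whenever $y \notin G$. It is immediate because the algorithm's next guess depends only on past feedback. If the final guess $\hat{y}_l$ is allowed to be an extra output after the $l$ queries, the set of values $y$ that are never hit has size at least $m-l$. The prediction is then a fixed value on that set, so it is correct on at most one of them, and the bound still holds (in fact with $m-l-1$ in place of $m-l$ in that variant; the interpretation in which the last query is the final guess gives exactly $(m-l)/m$).
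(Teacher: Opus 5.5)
Your proof is correct, but it takes a different route from the paper's. The paper works directly with a possibly randomized algorithm. It proves by induction on $l$ that the all-miss event $A_l=\bigcap_{i\le l}\{\hat y_i\neq y\}$ has probability at least $(m-l)/m$. Conditioned on $A_{l-1}$, either $\hat y_l$ repeats an earlier wrong guess and misses surely, or it is a new value. In the latter case $y$ is uniform over at least $m-l+1$ unexcluded values, so the guess misses with probability at least $(m-l)/(m-l+1)$, and the factors telescope.

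You instead derandomize first: conditioning on the internal seed, which is independent of $y$, leaves $y$ uniform. You then observe that for a deterministic strategy the transcript on the all-miss event is a fixed sequence $g_1,\dots,g_l$. Hence $A_l$ is exactly the event $\{y\notin G\}$, and the bound is a one-line count using $|G|\le l$.

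Your route has several advantages. It avoids the conditional-uniformity claim that the paper uses implicitly, along with its somewhat informal mixing of conditional and unconditional probabilities in the case split. It handles repeated guesses automatically. It also makes transparent both the sharpness of the bound and the variant with an extra final prediction after $l$ queries. That variant, giving $(m-l-1)/m$, is in fact the form the paper invokes in its lower-bound proofs (the $(m_i-1-m(\bx))/m_i$ terms). The paper's route, in exchange, needs no derandomization step and stays within a single probabilistic argument.
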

\begin{proof}
    Define $A_l = \cap_{i=1}^{l} \{\hat{y}_i \neq y\}$. We will prove by induction that $\Parg{A_l} \geq (m-l) / m$ for $l \leq m$. 
    Note that $\hat{y}_1$ is independent of $y$, therefore $\Parg{A_1} = (m-1)/m$. For $l \geq 2$, we have
    \begin{equation}\label{eq:prob_induct}
        \Parg{A_l} = \Parg{\hat{y}_l \neq y \,|\, A_{l-1}}\Parg{A_{l-1}}.
    \end{equation}
    Further,
    \begin{align*}
        \Parg{\hat{y}_l \neq y \,|\, A_{l-1}} =& \,\Parg{\hat{y}_l \neq y\,|\, A_{l-1}, \hat{y}_l \in \{\hat{y}_i\}_{i=1}^{l-1}}\Parg{\hat{y}_l \in \{\hat{y}_i\}_{i=1}^{l-1}}\\
        &+ \Parg{\hat{y}_l \neq y\,|\, A_{l-1}, \hat{y}_l \notin \{\hat{y}_i\}_{i=1}^{l-1}}\Parg{\hat{y}_l \notin \{\hat{y}_i\}_{i=1}^{l-1}}\\
        \geq& \,\Parg{\hat{y}_l \neq y \,|\, A_{l-1}, \hat{y}_l \notin \{\hat{y}_i\}_{i=1}^{l-1}}\\
        \geq& \frac{m-l}{m-(l-1)},
    \end{align*}
    where we used the fact that $\Parg{\hat{y}_l \neq y \,|\, A_{l-1} , \hat{y}_l \in \{\hat{y}_i\}_{i=1}^{l-1}} = 1$. Plugging back into \eqref{eq:prob_induct} and using the induction hypothesis $\Parg{A_{l-1}} \geq (m-(l-1)) / m$, we obtain
    \[
    \Parg{A_l} \geq \frac{m-l}{m}.
    \]
    This completes the proof of the induction. The proof of the lemma is completed by noticing that $\Parg{\hat{y}_l \neq y} \geq \Parg{A_l}$.
\end{proof}

\begin{remark}
    Note that the above lower bound is sharp. An optimal algorithm draws $\hat{y}_l$ uniformly from $\{1,\hdots,m\} \setminus \{\hat{y}_1,\hdots,\hat{y}_{l-1}\}$.
\end{remark}

\subsection{Proof of Theorem~\ref{thm:or_lower_bound_stat}}\label{app:proof_or_lower_bound_stat}
We first specify the precise log dependencies hidden by the big-O notation in the statement, namely
\[
Q = \Omega\left(\frac{\lq_q\big((1 + 1/\log(1/\varepsilon))\varepsilon\big)^{-1}}{\gamma^2 \cdot \log(1/\varepsilon) \cdot \log_k\left\{\lq_q\big((1 + 1/\log(1/\varepsilon))\varepsilon\big)^{-1}\right\}}\right).
\]
The proof proceeds by randomizing the data distribution and providing bounds in expectation with respect to this randomness, which implies the existence of a worst-case distribution with the same bound.

\paragraph{Setup and Definitions.} Define $I \coloneqq 1/\gamma^2$ and $m \coloneqq 1/\alpha$. For simplicity, we assume $I$ and $m$ are integers, while general real numbers can be similarly handled.
Define four sets $(\mathcal{X}_i)_{i=1}^4$ where $\mathcal{X}_i = \{\bx^{i,1}, \hdots, \bx^{i,I_i}\}$, for $(I_i)_{i=1}^4$ to be determined later.
We define the data distribution $\mathcal{D}$ as follows: let
\[
p_1 \coloneqq (1-\varepsilon^*)(1-\delta), \quad p_2 \coloneqq (1-\varepsilon^*)\delta, \quad p_3 \coloneqq \varepsilon^*(1-\delta), \quad p_4 \coloneq \varepsilon^*\delta
\]
for some $\delta \in (0,1)$ to be fixed later.
With probability $p_i$, we draw $\bx$ uniformly from $\mathcal{X}_i$.
The ground-truth label $\by^*(\bx^{i,r})$ of input example $\bx^{i,r}$ is drawn independently and uniformly at random from $\mathbb{Y}_i$.
Suppose for simplicity that $m$ is a perfect power of $k$, the following argument can be easily extended to general $m$.
Let $L \coloneqq \log_k m$. Define
\begin{align*}
    \mathbb{Y}_1 &= \{(y_1,\hdots,y_L,1,\hdots,1) \, : \, y_1,\hdots,y_L \in \mathcal{Y}^L\} \subseteq \mathcal{Y}^N,\\
    \mathbb{Y}_2 &= \mathbb{Y}_4 = \{(1,1,\hdots,1), (2,1,\hdots,1)\} \subseteq \mathcal{Y}^N,\\
    \mathbb{Y}_3 &= \mathcal{Y}^N
\end{align*}
In words, $\mathbb{Y}_1$ is made of all sequences of length $L$ by appending $N-L$ 1s to the end of each sequence. In particular, $\abs{\mathbb{Y}_1} = m$.

We will use the notation $(m_i)_{i=1}^4$ where $m_i = \abs{\mathbb{Y}_i}$. With this definition, the label for $\bx \in \mathcal{X}_i$ is drawn uniformly from $m_i$ sequences.
Note that $m_i \geq 2$ for all $i$.
Throughout the proof, we will use $T$ for the number of labeled samples.

We define $q(\cdot \,|\, \bx)$ to be $\Unif(\mathbb{Y}_1)$ for $\bx \in \mathcal{X}_1\cup\mathcal{X}_2$ and $\Unif(\mathcal{Y}^N)$ for $\bx \in \mathcal{X}_3 \cup \mathcal{X}_4$.
Since $\mathbb{Y}_2 \subseteq \mathbb{Y}_1$
, we have $q(\by^*(\bx) \,|\, \bx) = m^{-1}$ if $\bx \in \mathcal{X}_1\cup\mathcal{X}_2$ and $q(\by^*(\bx) \,|\, \bx) = k^{-N}$ otherwise.
One can then verify that $q$ satisfies
\[
\lq_q(\varepsilon) = \begin{cases}
    k^{-N} & \text{if} \quad \varepsilon < \varepsilon^*\\
    m^{-1} & \text{if} \quad \varepsilon \geq \varepsilon^*.
\end{cases}
\]

Let $L_1 = L = \log_k m$, $L_2=L_4=1$ and $L_3=N$. Define the following set of standard basis vectors of $\mathbb{R}^D$,
\[
\{\be_{i,r,j,a} : i \in [4], r \in [I_i], j \in [L_i], a \in [k]\},
\]
Add an extra vector to this orthonormal collection, denoted by $\bu$.
Note that this construction requires $D \geq 1 + k\sum_{i=1}^4 I_i L_i$.

For all $i \in [4]$ and $r \in [I_i]$, define the feature map as
\[
\phi(\bx^{i,r},\by_{1:j}) \coloneqq \begin{cases}
    \be_{i,r,j,y_j}, & j \leq L_i\\
    \bu, & j > L_i, y_j = 1\\
    \boldzero, & j > L_i, y_j \neq 1
\end{cases}.
\]
Define
\[
\bw^* \coloneqq \gamma \bu + \gamma \sum_{i=1}^4 \sum_{r=1}^{I_i}\sum_{j=1}^{L_i}\be_{i,r,j,y^*_j(\bx^{i,r})}.
\]
It is straightforward to verify that
\[
\binner{\bw^*}{\phi(\bx^{i,r},\by^*_{1:j}(\bx^{i,r})} = \gamma, \quad \binner{\bw^*}{\phi(\bx^{i,r},\by^*_{1:j-1}(\bx^{i,r}),y
')} = 0, \quad \forall i,r,j,y' \neq y^*_j(\bx^{i,r}).
\]
Additionally
$\twonorm{\bw^*}^2 = \gamma^2(1 + \sum_{i=1}^4 I_i L_i)$. We define $I_i = (I-1)/(4L_i)$ so that $\twonorm{\bw^*} = 1$. Therefore, $\bw^*$ satisfies \Cref{assump:seq_margin}. 
Once again, we assume $I_i$ are integers (which is W.L.O.G.) and assume $I_i \geq 1$. The latter has the implication that $\gamma^2 \leq 1/N$.
This choice of $I_i$ also gives the sufficient condition on $D$, namely $D \geq k/\gamma^2$.

\paragraph{Lower Bound on Number of Samples.} Let $m(\bx)$ denote the number of reward queries made on context $\bx$. Let $\bx \sim \mathcal{D}$ independent of $S$. We have
\begin{equation}\label{eq:total_prob}
    \Parg{\hat{\by}(S)(\bx) \neq \by^*(\bx)} = \sum_{i=1}^4\Parg{\hat{\by}(S)(\bx) \neq \by^*(\bx) \,|\, \bx \in \mathcal{X}_i}\Parg{\bx \in \mathcal{X}_i}.
\end{equation}
Moreover,
\begin{align}
    \Parg{\hat{\by}(S)(\bx) \neq \by^*(\bx) \,|\, \bx \in \mathcal{X}_i} &= \Earg{\Parg{\hat{\by}(S)(\bx) \neq \by^*(\bx) \,|\, S, \bx \in \mathcal{X}_i} \,|\, \bx \in \mathcal{X}_i}\nonumber\\
    &\geq \Earg{\indic[\bx \notin S]\cdot \frac{m_i-1}{m_i} + \indic[\bx \in S]\cdot \frac{m_i - 1 - m(\bx)}{m_i}\vee 0 \,|\, \bx \in \mathcal{X}_i}\nonumber\\
    &= \Parg{\bx \notin S \,|\, \bx \in \mathcal{X}_i} \cdot \frac{m_i-1}{m_i} + \Earg{\frac{\indic[\bx \in S](m_i- 1 - m(\bx))}{m_i}\vee 0 \,|\, \bx \in \mathcal{X}_i}.\label{eq:core_stat_lower_bound}
\end{align}
To obtain a lower bound on the number of samples, we remove the second term from \eqref{eq:core_stat_lower_bound} to obtain
\[
\Parg{\hat{\by}(S)(\bx) \neq \by^*(\bx) \,|\, \bx \in \mathcal{X}_i} \geq \frac{m_i - 1}{m_i}\Parg{\bx \not \in S \,|\, \bx \in \mathcal{X}_i}
\]
where the second inequality follows from Lemma \ref{lem:guessing}. Note that if $\bx \in \mathcal{X}_i$, then the probability that it is drawn once is $\frac{p_i}{I_i}$, and by a union bound, the probability that it is drawn at least once is at most $\frac{Tp_i}{I_i}$. Therefore $\Parg{\bx \notin S \,|\, \bx \in \mathcal{X}_i} \geq (1 - \frac{p_iT}{I_i}) \vee 0$. Thus
\[
\Parg{\hat{\by}(S)(\bx) \neq \by^*(\bx)} \geq \sum_{i=1}^4 p_i\cdot \frac{m_i - 1}{m_i}\cdot \left(1 - \frac{p_i T}{I_i}\right) \vee 0.
\]
Since each term is non-negative, we can remove $i=1$ and $i=3$ terms, from the lower bound, and write
\[
\Parg{\hat{\by}(S)(\bx) \neq \by^*(\bx)} \geq \frac{p_2}{2}\left(1 - \frac{p_2T}{I_2}\right) + \frac{p_4}{2}\left(1 - \frac{p_4T}{I_4}\right) \geq \frac{p_2+p_4}{2}\left(1 - \frac{(p_2\vee p_4)T}{I_2}\right),
\]
where we used $m_2 = m_4 = 2$ and $I_2=I_4$. For the probability of error to be less than $\varepsilon$ we need
\[
T \geq \frac{I_2}{p_2 \vee p_4}\left(1 - \frac{2\varepsilon}{p_2 + p_4}\right).
\]
Recall $p_2 = (1-\varepsilon^*)\delta$ and $p_4 = \varepsilon^*\delta$. Assume $\varepsilon < 1/4$ and let $\delta = 4\varepsilon$. Then, the above implies 
\[
T \geq \frac{I_2}{2\delta((1-\varepsilon^*)\vee \varepsilon^*)} \geq \frac{I_2}{8\varepsilon}\gtrsim \frac{1}{\gamma^2\varepsilon},
\]
which completes the proof of the sample lower bound.

\paragraph{Lower Bound on Number of Reward Queries.} We use the same $\delta = 4\varepsilon$ as before. To prove the lower bound on the number of reward queries, we go back to \eqref{eq:core_stat_lower_bound}, and note that
\[
\Parg{\hat{\by}(S)(\bx) \neq \by^*(\bx) \,|\, \bx \in \mathcal{X}_i} \geq \frac{m_i - 1}{m_i} - \Earg{\frac{\indic[\bx \in S]m(\bx)}{m_i} \,|\, \bx \in \mathcal{X}_i}.
\]

Define $Z \coloneqq \Earg{\indic[\bx \in S]m(\bx) \,|\, \bx \in \mathcal{X}_1}$ and $Z' \coloneqq \Earg{\indic[\bx \in S]m(\bx) \,|\, \bx \in \mathcal{X}_3}$.
Using \eqref{eq:total_prob}, we obtain
\begin{equation}\label{eq:prob_lb_all}
    \Parg{\hat{\by}(S)(\bx) \neq \by^*(\bx)} \geq (1-\varepsilon^*)(1-\delta) \left(\frac{m - 1 - Z}{m}\right)\vee 0 + \varepsilon^*(1-\delta)\left(\frac{k^N - 1 - Z'}{k^N}\right) \vee 0,
\end{equation}
where we used the non-negativeness of each conditional probability. In particular, each term on the RHS must be bounded by the LHS, i.e.\
\begin{equation}\label{eq:prob_lb_12}
    \Parg{\hat{\by}(S)(\bx) \neq \by^*(\bx)} \geq (1-\varepsilon^*)(1-\delta) \left(\frac{m-1-Z}{m}\right)
\end{equation}
when considering only $\bx \in \mathcal{X}_1 \cup \mathcal{X}_2$, and
\begin{equation}\label{eq:prob_lb_3}
    \Parg{\hat{\by}(S)(\bx) \neq \by^*(\bx)} \geq \varepsilon^*(1-\delta) \left(\frac{k^N-1-Z'}{k^N}\right)
\end{equation}
when considering only $\bx \in \mathcal{X}_3$. Before proceeding, we observe that the number of reward queries is equal to $Q \geq I_1 Z + I_3 Z'$. This is because
\begin{align*}
    Q &= \Earg{\sum_{i=1}^I \indic[\bx^i \in S]m(\bx^i)}\\
    &\geq \Earg{\sum_{\bx^i \in \mathcal{X}_1}\indic[\bx^i \in S]m(\bx^i)} + \Earg{\sum_{\bx^i \in \mathcal{X}_3}\indic[\bx^i \in S]m(\bx^i)}\\
    &= I_1\Earg{\indic[\bx \in S]m(\bx) \,|\, \bx \in \mathcal{X}_1} + I_3\Earg{\indic[\bx \in S]m(\bx) \,|\, \bx \in \mathcal{X}_3},
\end{align*}
where we recall $\bx \sim \mathcal{D}$ in the above.

Now, consider the following cases:
\begin{itemize}
    \item First, consider the case where $(1 + 1/\log(1/\varepsilon))\varepsilon \geq \varepsilon^*$. We have
    \begin{align*}
    \varepsilon &\geq (1-\varepsilon^*)(1-\delta)\left(\frac{m-1-Z}{m}\right)
    \end{align*}
    which implies $Z \geq m(1 - \varepsilon/((1-\varepsilon^*)(1-\delta)) - m^{-1}) \gtrsim m$ where we used $\varepsilon/((1-\varepsilon^*)(1-\delta)) < 1/2$ for sufficiently small $\varepsilon$, and $m^{-1} < 1/2$. Therefore, the number of reward queries is $Q \geq I_1Z \gtrsim mI/L_1 \gtrsim \alpha^{-1}/(\gamma^2 \log_k \alpha^{-1})$. Note that for the range of $\varepsilon$ here we have
    $\lq_q((1+1/\log(1/\varepsilon))\varepsilon) = \alpha$, therefore
    \[
    Q \gtrsim \frac{\lq_q((1 + 1/\log(1/\varepsilon))\varepsilon)^{-1}}{\gamma^2 \log_k \left\{\lq_q((1 + 1/\log(1/\varepsilon))\varepsilon)^{-1}\right\}}
    \]

    \item Second, assume $(1 + 1/\log(1/\varepsilon))\varepsilon < \varepsilon^*$. Using \eqref{eq:prob_lb_3}, we have
    \begin{align*}
        \varepsilon \geq (1-\delta)\varepsilon^*\left(\frac{k^N - 1 - Z'}{k^N}\right)
    \end{align*}
    and consequently
    \begin{align*}
        Z' &\geq k^N\left(1 - k^{-N} - \frac{\varepsilon}{(1-\delta)\varepsilon^*}\right) \geq k^N\left(1 - k^{-N} - \frac{1}{(1-\delta)(1 + 1/\log(1/\varepsilon))}\right)\\
        &= k^N\left(\frac{1}{(1-\delta)(1 + \log(1/\varepsilon))} - \frac{\delta}{1-\delta} - k^{-N}\right)
    \end{align*}
    Whenever $k^N \gg \log(1/\varepsilon)$, the above implies $Z' \gtrsim k^N/\log(1/\varepsilon)$, hence $Q \geq I_3Z' \gtrsim k^N/(N\gamma^2 \log(1/\varepsilon))$. When $k^N \ll \log(1/\varepsilon)$, we can instead apply the previous bound which resulted in $Q \gtrsim \alpha^{-1} / (\gamma^2 \log_k \alpha^{-1}) \gtrsim 1/\gamma^2$. Combined with $k^N \ll \log(1/\varepsilon)$ we have $Q \gtrsim 1/\gamma^2 \gtrsim k^N/(N\gamma^2 \log(1/\varepsilon))$.
    Using the fact that $\lq_q((1 + 1/\log(1/\varepsilon)\varepsilon) = k^{-N}$ completes the proof.
\end{itemize}
\qed

\subsection{Proof of Theorem~\ref{thm:or_lower_bound_online}}
We use the same setup as that of \Cref{app:proof_or_lower_bound_stat} with the exception of having
\[
\mathbb{Y}_1 = \mathbb{Y}_2 = \{(y_1,\hdots,y_L,1,\hdots,1) \, : \, y_1,\hdots,y_L \in \mathcal{Y}^L\} \subseteq \mathcal{Y}^N, \quad \mathbb{Y}_3=\mathbb{Y}_4=\mathcal{Y}^N.
\]
Note that this also implies $L_1 = L_2 = \log_k m$ and $L_3 = L_4 = N$. We use $S$ to denote the previous samples as well the learner's prediction on them and the obtained reward.
We begin with the same decomposition as in \eqref{eq:total_prob}. This time, for each conditional probability, we write
\begin{align*}
    \Parg{\hat{\by}(S)(\bx) \neq \by^*(\bx) \,|\, \bx \in \mathcal{X}_i} &= \Earg{\Parg{\hat{\by}(S)(\bx) \neq \by^*(\bx) \,|\, S, \bx \in \mathcal{X}_i} \,|\, \bx \in \mathcal{X}_i}\nonumber\\
    &\geq \Earg{\frac{m_i - 1 - m(\bx)}{m_i} \,|\, \bx \in \mathcal{X}_i} \vee 0,
\end{align*}
where $m(\bx)$ is the number of times $\bx$ has appeared in $S$, and we used Lemma \ref{lem:guessing} for the inequality.
Moreover, when $\bx \in \mathcal{X}_i$, the probability that an individual draw in $S$ is equal to $\bx$ is $p_i/I_i$, hence $\Earg{m(\bx) \,|\, \bx \in \mathcal{X}_i} = \frac{Tp_i}{I_i}$ where we recall $T$ is the number of rounds (hence samples). Therefore
\begin{equation}\label{eq:cond_prob_bound}
    \Parg{\hat{\by}(S)(\bx) \neq \by^*(\bx) \,|\, \bx \in \mathcal{X}_i} \geq \left(\frac{(m_i - 1)I_i - Tp_i}{m_i I_i}\right) \vee 0.
\end{equation}

\paragraph{Lower Bound on the Number of Samples.}
We consider the following two cases:
\begin{itemize}
    \item Consider the case $(1 + 1/\log(1/\varepsilon))\varepsilon \geq \varepsilon^*$. Using \eqref{eq:cond_prob_bound} and \eqref{eq:total_prob}, we have
    \begin{align}
        \Parg{\hat{\by}(S)(\bx) \neq \by^*(\bx)} &\geq \sum_{i=1}^4 p_i \cdot \left(\frac{(m_i - 1)I_i - Tp_i}{m_i I_i}\right) \vee 0.\label{eq:prob_err_decomp_online}\\
        &\geq p_2\left(\frac{(m-1)I_2 - Tp_2}{mI_2}\right) + p_4\left(\frac{(k^N-1)I_4 - Tp_4}{k^NI_4}\right)\\
        &\geq (p_2 + p_4)\left(\frac{(m-1)I_2 - T(p_2\vee p_4)}{mI_2}\right)
    \end{align}
    where we used the fact that $k^N \geq m$. Recall $p_2 = (1-\varepsilon^*)\delta$, and $p_4 = \varepsilon^*\delta$. Then we can rearrange the above terms to write
    \[
    T \geq \frac{mI_2}{\delta((1-\varepsilon^*)\vee \varepsilon^*)}\left(1 - m^{-1} - \frac{\varepsilon}{\delta}\right) \geq \frac{mI_2}{\delta}\left(1 - m^{-1} - \frac{\varepsilon}{\delta}\right).
    \]
    By choosing $\delta = 4\varepsilon < 1$ and using the fact that $m \geq 2$, we get $T \gtrsim mI_2/\varepsilon \gtrsim m/(\gamma^2\varepsilon \log_k m)$. Note that for $(1 + 1/\log(1/\varepsilon))\varepsilon \geq \varepsilon^*$ we have $\lq_q((1 + 1/\log(1/\varepsilon))\varepsilon) = m^{-1}$.
    Hence we can write
    \[
    T \gtrsim \frac{\lq_q((1 + 1/\log(1/\varepsilon))\varepsilon)^{-1}}{\gamma^2\varepsilon\log_k\left\{\lq_q((1 + 1/\log(1/\varepsilon))\varepsilon)^{-1}\right\}},
    \]
    which completes the proof of this case.

    \item For the case where $(1 + 1/\log(1/\varepsilon))\varepsilon < \varepsilon^*$, we use \eqref{eq:prob_err_decomp_online} with $i=4$, where we recall $m_4 = k^N$ and $p_4 =  \varepsilon^*\delta$. We then have
    \[
    \varepsilon \geq \varepsilon^*\delta\left(\frac{(k^N - 1)I - 4T\delta\varepsilon^*}{k^NI}\right)
    \]
    and consequently
    \[
    T \geq \frac{k^NI_4}{\varepsilon^*\delta}\left(1 - k^{-N} - \frac{\varepsilon}{\varepsilon^*\delta}\right).
    \]
    Choose $\delta = \tfrac{\varepsilon}{\varepsilon^*}(1 + 1/(2\log(1/\varepsilon))) < 1$. Then,
    \[
    T \gtrsim \frac{k^NI_4}{\varepsilon}\left(\frac{1}{1 + 2\log(1/\varepsilon)} - k^{-N}\right).
    \]
    The argument is similar to that of \Cref{app:proof_or_lower_bound_stat}. If $k^N \gg \log(1/\varepsilon)$, we have $T \gtrsim \frac{k^N}{\gamma^2 N \varepsilon\log(1/\varepsilon)}$. If $k^N \ll \log(1/\varepsilon)$, we can apply the previous bound $T \gtrsim m/(\gamma^2 \varepsilon \log_k m) \gtrsim k^N/(\gamma^2 \varepsilon \log(1/\varepsilon) N)$.
    We conclude the proof of this case by recalling that for $(1 + 1/\log(1/\varepsilon))\varepsilon < \varepsilon^*$ we have $\lq_q((1 + 1/\log(1/\varepsilon))\varepsilon) = k^{-N}$.
\end{itemize}

\paragraph{Lower Bound on the Number of Mistakes.}
For the lower bound on the number of mistakes, we choose $\delta = 1/2$ and unify $\mathcal{X}_1$ with $\mathcal{X}_2$ and $\mathcal{X}_3$ with $\mathcal{X}_4$.
For sample $t$, define $S_t$ to be the collection of previous samples as well as their corresponding predictions and rewards. Then,
\begin{align*}
    \Parg{\hat{\by}(S_t)(\bx^{(t)}) \neq \by^*(\bx^{(t)})} = & \Parg{\hat{\by}(S_t)(\bx^{(t)}) \neq \by^*(\bx^{(t)}) \,|\, \bx^{(t)} \in \mathcal{X}_1\cup\mathcal{X}_2}(1-\varepsilon^*) \\
    &+ \Parg{\hat{\by}(S_t)(\bx^{(t)}) \neq \by^*(\bx^{(t)}) \,|\, \bx^{(t)} \in \mathcal{X}_3 \cup \mathcal{X}_4}\varepsilon^*.
\end{align*}
Using Lemma \ref{lem:guessing} and arguments similar to the above paragraph, we have
\begin{align*}
    \Parg{\hat{\by}(S_t)(\bx^{(t)}) \neq \by^*(\bx^{(t)}) \,|\, \bx^{(t)} \in \mathcal{X}_1 \cup \mathcal{X}_2} \geq \frac{(m - 1)I_2 - 2(t-1)(1-\varepsilon^*)}{mI_2}\vee 0,
\end{align*}
and
\begin{equation}
    \Parg{\hat{\by}(S_t)(\bx^{(t)}) \neq \by^*(\bx^{(t)}) \,|\, \bx^{(t)} \in \mathcal{X}_3 \cup \mathcal{X}_4} \geq \frac{(k^N - 1)I_4 - 2(t-1)\varepsilon^*}{k^NI_4}\vee 0.
\end{equation}
Therefore,
\begin{align*}
    \Earg{\sum_{t=1}^T \indic\big[\hat{\by}(S_t)(\bx^{(t)}) \neq \by^*(\bx^{(t)})\big]} &= \sum_{t=1}^T\Parg{\hat{\by}(S_t)(\bx^{(t)}) \neq \by^*(\bx^{(t)})}\\
    &\geq \sum_{t=1}^T (1-\varepsilon^*)\cdot \frac{(m-1)I_2-2(t-1)(1-\varepsilon^*)}{mI_2}\vee 0 + \sum_{t=1}^T\varepsilon^*\cdot \frac{(k^N - 1)I_4 - 2(t-1)\varepsilon^*}{k^N I_4} \vee 0
\end{align*}
Note that for all $t \lesssim k^N I_4/\varepsilon^*$, the terms inside the second sum above are $\Omega(\varepsilon^*)$. Therefore, the sum above for $T \gtrsim k^N I_4/ \varepsilon^* \gtrsim (k^N/(N\gamma^2\varepsilon^*)$ is at least of order $k^N I_4 \gtrsim k^N/(N\gamma^2)$, completing the proof.
\qed

\subsection{Proof of Theorem~\ref{thm:pre_lower_bound}}
We first describe the setup and definitions used in this proof, which are somewhat different from those in the previous two theorems.
Define $\mathcal{X}_1 = \{\bx^1,\hdots,\bx^{I_0}\}$ and $\mathcal{X}_2 = \{\bx^{I_0 + 1}, \hdots, \bx^{2I_0}\}$. A sample $\bx$ is generated uniformly from $\mathcal{X}_1$ with probability $p_1$ and uniformly from $\mathcal{X}_2$ with probability $p_2$, where $I_0,p_1$, and $p_2$ are determined later.
Similar to the previous proofs, we prove the existence result by randomizing the labels. For every $i \in [2I_0]$, sample i.i.d.\ random variables $B_i \sim \Unif(\{1,2\})$. Let $\by^*(\bx^i) = (B_i,\hdots,B_i)$. Let $\{\be_{i,1}\}_{i=1}^{2I_0}\cup \{\be_{i,2}\}_{i=1}^{2I_0}$ be an orthonormal set of vectors in $\reals^D$.
The feature map only depends on the last token and is given by
\begin{equation}\label{eq:feature_map_lower_bound}
    \phi(\bx^i,\by_{1:j}) = \begin{cases}
    \be_{i,y_j}, & y_j \in \{1,2\}\\
    \frac{\be_{i,1} + \be_{i,2}}{2}, & y_j \notin \{1,2\}
\end{cases}.
\end{equation}
Let $\bw^* = \gamma \sum_{i=1}^{2I_0}(\be_{i,B_i} - \be_{i,3-B_i})$. It is straightforward to verify $\binner{\bw^*}{\phi(\bx^i,\by^*_{1:j}(\bx^i))} = \gamma$ for all $i,j$ and $\binner{\bw^*}{\phi(\bx^i,\by^*_{1:j-1}(\bx^i),y_j)} \leq 0$ for all $y_j \neq y^*_{j}(\bx^i)$.
Further, $\twonorm{\bw^*}^2 = 4I_0\gamma^2$. By choosing $I_0 = 1/(4\gamma^2)$ (we assume $I_0$ is an integer W.L.O.G.) we have $\twonorm{\bw^*}^2 = 1$, therefore \Cref{assump:seq_margin} is satisfied.
Note that this construction requires $D \geq 4I_0k = k/\gamma^2$.

The above feature map implies the following property.
\begin{lemma}\label{lem:seq_prob_bound}
    Let $\boldone_N \coloneqq (1,\hdots,1)$ and $\mbf{2}_N \coloneqq (2,\hdots,2)$. Let the feature map be given by \eqref{eq:feature_map_lower_bound}, and assume $\by^*(\bx) = \boldone_N$ or $\by^*(\bx) = \mbf{2}_N$. Then, for any $\bx$ and any $\bw \in \reals^D$, we have
    \[
    \min(p_{\bw}(\boldone_N\mid \bx), p_{\bw}(\mbf{2}_N\mid \bx)) \leq k^{-N}.
    \]
\end{lemma}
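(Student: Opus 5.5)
Fix $\bx = \bx^i$ and $\bw$, and write $a = \binner{\bw}{\be_{i,1}}$ and $b = \binner{\bw}{\be_{i,2}}$. The plan is to show that every per-token conditional probability is a function of $a$ and $b$ alone, then compare the two sequences token by token. By \eqref{eq:feature_map_lower_bound}, $\phi(\bx^i,\by_{1:j})$ depends only on $y_j$ and not on the prefix. Hence, for every prefix, the next-token distribution $p_{\bw}(\cdot \mid \bx^i,\by_{1:j-1})$ is the same softmax over logits: $a$ for token $1$, $b$ for token $2$, and $(a+b)/2$ for each of the remaining $k-2$ tokens. Writing $Z \coloneqq e^a + e^b + (k-2)e^{(a+b)/2}$, this gives
\[
p_{\bw}(\boldone_N \mid \bx^i) = \Big(\tfrac{e^a}{Z}\Big)^N, \qquad p_{\bw}(\mbf{2}_N \mid \bx^i) = \Big(\tfrac{e^b}{Z}\Big)^N .
\]

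It therefore suffices to prove the single-token inequality $\min(e^a, e^b) \le Z/k$. After that, raising both sides to the $N$-th power finishes the proof.

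For the single-token inequality, assume without loss of generality $a \le b$; then $e^a \le e^{(a+b)/2} \le e^b$. Each of the $k$ terms making up $Z$ is therefore at least $e^a$. Indeed, $e^a \ge e^a$ trivially, $e^b \ge e^a$, and each of the $k-2$ middle terms satisfies $e^{(a+b)/2} \ge e^a$. Summing gives $Z \ge k e^a$, i.e.\ $p_{\bw}(1 \mid \cdot) \le 1/k$ at every position. This yields $p_{\bw}(\boldone_N \mid \bx^i) \le k^{-N}$, and the case $b \le a$ is symmetric.

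I do not expect a real obstacle here. The only point needing care is noticing that the feature map ignores the prefix, so the per-position distributions are identical. One should also use the fact that the "other" tokens sit at the midpoint $(a+b)/2$, which always dominates the smaller of $a$ and $b$; this is exactly why the bound is $k^{-1}$ per token rather than merely $1/2$. The hypothesis $\by^*(\bx) \in \{\boldone_N, \mbf{2}_N\}$ is not actually needed for the inequality itself.
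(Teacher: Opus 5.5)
Your proof is correct and uses the same reduction as the paper: because the feature map ignores the prefix, $p_{\bw}(\boldone_N\mid\bx)=p_{\bw}(1\mid\bx)^N$ and $p_{\bw}(\mbf{2}_N\mid\bx)=p_{\bw}(2\mid\bx)^N$, so everything comes down to a per-token softmax bound. The only difference is the last step. The paper applies AM--GM ($e^a+e^b\ge 2e^{(a+b)/2}$) to get $p_{\bw}(1\mid\bx)\,p_{\bw}(2\mid\bx)\le k^{-2}$ and then uses that the squared minimum is at most the product. You instead assume $a\le b$ and compare each term of the normalizer with $e^a$, which is equally valid and slightly more direct; your remark that the hypothesis on $\by^*(\bx)$ is never used is also correct.
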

\begin{proof}
    Let $a \coloneqq \binner{\bw}{\be_{i,1}}$ and $b \coloneqq \binner{\bw}{\be_{i,2}}$. We have $p_{\bw}(\boldone_N\mid \bx) = p_{\bw}(1\mid \bx)^N$ and $p_{\bw}(\mbf{2}_N\mid \bx) = p_{\bw}(2\mid \bx)^N$.
    Further,
    \[
    p_{\bw}(1\mid\bx) = \frac{e^a}{e^a + e^b + (k-2)e^{(a+b)/2}} \leq \frac{e^a}{ke^{(a+b)/2}},
    \]
    where we used the AM-GM inequality, and similarly $p_{\bw}(2\mid\bx) \leq e^b/(ke^{(a+b)/2})$. Therefore
    \[
    \min(p_{\bw}(\boldone_N\mid\bx),p_{\bw}(\mbf{2}_N\mid\bx))^2 \leq p_{\bw}(\boldone_N\mid\bx)p_{\bw}(\mbf{2}_N\mid\bx) = (p_{\bw}(1\mid\bx)p_{\bw}(2\mid\bx))^N \leq k^{-2N}.
    \]
\end{proof}

Given the definitions and the lemma, we are ready to prove the theorem.

\begin{proof}[Proof of \Cref{thm:pre_lower_bound}]    
Let $p_1 \coloneqq 1 - \delta$ and $p_2 = \delta$.
Let $S$ denote the training set, and $q(S)(\cdot \,|\, \bx)$ be the output of the algorithm. Consider the decomposition
\begin{align*}
    \Parg{q(S)(\by^*(\bx) \,|\, \bx) \leq k^{-N}} = \sum_{i=1}^2p_i\Parg{q(S)(\by^*(\bx) \,|\, \bx) \,|\, \bx \in \mathcal{X}_i}.
\end{align*}
We choose $i=2$, and use the non-negativity of probability to write
\begin{align}
    \Parg{q(S)(\by^*(\bx) \,|\, \bx) \leq k^{-N}} &\geq p_2\Parg{q(S)(\by^*(\bx) \,|\, \bx) \leq k^{-N} \,|\, \bx \in \mathcal{X}_2}\nonumber\\
    &\geq p_2\Parg{q(S)(\by^*(\bx) \,|\, \bx) \leq k^{-N} \,|\, \bx \in \mathcal{X}_2, \bx \notin S}\Parg{\bx \notin S \,|\, \bx \in \mathcal{X}_2}\label{eq:pre_lower_bound_interm}.
\end{align}
By a union bound, we have $\Parg{\bx \notin S \,|\, \bx \in \mathcal{X}_2} \geq 1 - \frac{p_2n}{I_0}$. Further, note that conditioned on $\bx \notin S$, $\by^*(\bx)$ is independent from $q(S)(\cdot \,|\, \bx)$, and
by Lemma \ref{lem:seq_prob_bound}, we know $\min(q(S)(\boldone_N \mid \bx),q(S)(\mbf{2}_N\mid \bx)) \leq k^{-N}$.
As a result
\[
\Parg{q(S)(\by^*(\bx) \,|\, \bx) \leq k^{-N} \,|\, \bx \in \mathcal{X}_2, \bx \notin S} \geq \frac{1}{2}.
\]

Therefore,
\begin{align*}
    \Parg{q(S)(\by^*(\bx) \mid \bx) \leq k^{-N}} &\geq \frac{p_2}{2}\left(1 - \frac{p_2n}{I_0}\right)\\
    &= \frac{\delta}{2}\left(1 - \frac{\delta n}{I_0}\right).
\end{align*}
By choosing $\delta = 1/2 \cdot \min(1, I_0/n)$, we have
\[
\Parg{q(S)(\by^*(\bx) \mid \bx) \leq k^{-N}} \geq \frac{1}{8} \cdot \min\left(1,\frac{I_0}{n}\right) = \frac{1}{8}\cdot \min\left(1, \frac{1}{4\gamma^2n}\right).
\]
Recall that by definition of LQ, we have
$\lq_q(\varepsilon) \leq k^{-N}$ for all $\varepsilon \leq \min(1,1/(4\gamma^2 n)) / 8$ which completes the proof.

\end{proof}

\end{document}